\documentclass{article}
\ifdefined\pdfsuppressptexinfo\pdfsuppressptexinfo=-1\fi
\ifdefined\pdfinfoomitdate\pdfinfoomitdate=1\fi
\usepackage{arxiv}
\usepackage{times}
\usepackage{natbib}

\usepackage{amsmath,amsfonts,bm}

\def\eqref#1{equation~\ref{#1}}

\def\1{\bm{1}}

\DeclareMathAlphabet{\mathsfit}{\encodingdefault}{\sfdefault}{m}{sl}
\SetMathAlphabet{\mathsfit}{bold}{\encodingdefault}{\sfdefault}{bx}{n}

\DeclareMathOperator{\sign}{sign}

\usepackage{amsmath,amssymb,amsthm}
\usepackage{booktabs,tabularx,longtable,array,calc}
\usepackage{graphicx}
\usepackage{xcolor}
\usepackage{microtype}
\usepackage{enumitem}

\usepackage{tikz}
\usetikzlibrary{arrows.meta,positioning}
\usepackage{placeins}
\usepackage{flafter}
\usepackage[hidelinks]{hyperref}
\hypersetup{pdftitle={Persona Following Is Not Selective Control: The Neutrality Gap in LLM User Simulation},pdfauthor={Jiashen Ren, Wenlin Zhang, Bohan Zhang, Xiaopeng Li, Zichuan Fu, Wanyu Wang, Junyi Li, Xiangyu Zhao}}
\usepackage{url}
\DeclareUnicodeCharacter{2212}{\ensuremath{-}}
\DeclareUnicodeCharacter{00D7}{\ensuremath{\times}}
\DeclareUnicodeCharacter{00B1}{\ensuremath{\pm}}
\DeclareUnicodeCharacter{2264}{\ensuremath{\le}}
\DeclareUnicodeCharacter{2265}{\ensuremath{\ge}}
\DeclareUnicodeCharacter{2019}{'}
\DeclareUnicodeCharacter{201C}{``}
\DeclareUnicodeCharacter{201D}{''}

\graphicspath{{figs/}}
\newcolumntype{Y}{>{\raggedright\arraybackslash}X}
\newcommand{\logit}{\operatorname{logit}}
\newcommand{\doP}{\operatorname{do}_{\mathcal P}}
\newcommand{\modelid}[1]{#1}

\theoremstyle{definition}
\newtheorem{defn}{Definition}
\theoremstyle{plain}
\newtheorem{prop}{Proposition}
\newtheorem{assumption}{Assumption}
\theoremstyle{remark}
\newtheorem{remark}{Remark}

\definecolor{RiskRed}{HTML}{E66D50}       %
\definecolor{ScopeTeal}{HTML}{299D8F}     %
\definecolor{OrdinaryAmber}{HTML}{E7C66B} %
\definecolor{BoundedSlate}{HTML}{297270}  %
\definecolor{DiDBlue}{HTML}{274753}       %
\definecolor{DiDPurple}{HTML}{8AB07C}     %
\definecolor{ControlGray}{HTML}{9A9C9A}   %
\definecolor{TextGray}{HTML}{274753}

\title{Persona Following Is Not Selective Control:\\The Neutrality Gap in LLM User Simulation}

\author{Jiashen Ren$^{1}$, Wenlin Zhang$^{1}$, Bohan Zhang$^{2}$, Xiaopeng Li$^{1}$,\\
\bfseries Zichuan Fu$^{1}$, Wanyu Wang$^{1}$, Junyi Li$^{1}$, Xiangyu Zhao$^{1,*}$\\[3pt]
{\normalfont\small $^{1}$City University of Hong Kong\quad $^{2}$Stanford University}\\[2pt]
{\normalfont\small \texttt{jiashen.ren@my.cityu.edu.hk}\quad \texttt{xy.zhao@cityu.edu.hk}}\\
{\normalfont\small $^{*}$Corresponding author}}
\renewcommand{\shorttitle}{Persona Following Is Not Selective Control}
\date{}

\begin{document}

\maketitle

\begin{abstract}
Persona prompting is widely used to construct user simulations with large language models (LLMs), yet it relies on a largely untested assumption: specifying one user attribute should change that attribute alone.
We test this assumption and identify a systematic failure of selective control: across all eight black-box LLMs we audit, changing a target attribute also shifts responses along unspecified, non-target attributes. For example, describing a user as more risk-seeking shifts color choices, even though the prompt never mentions color. We term this phenomenon \emph{cross-attribute influence}.
Semantic controls that rephrase the target trait, in-context manipulations of attribute correlations, and interventions on internal representations collectively suggest that models treat a persona prompt as evidence about the user rather than as an intervention on a single attribute. Models then extend the inferred profile to unspecified preferences, a process we call \emph{trait-conditioned completion}; evidence from internal representations further suggests partial coupling between target and non-target responses.
We next ask whether explicitly specifying non-target attributes restores selective control. When a non-target attribute is assigned a clear direction, models generally follow the declaration and suppress the target attribute's influence. However, when the same attribute is declared neutral, the target continues to affect choices across all five open-weight checkpoints, even when the model correctly reports the declared absence of preference.
We call this disparity in non-target preservation the \emph{neutrality gap}. The gap demonstrates that successful persona following---producing behavior consistent with the stated persona---does not imply selective persona control, which additionally requires changing the target response while keeping non-target attributes stable.
We operationalize this distinction with a three-state diagnostic that jointly tests both requirements by leaving the non-target attribute unspecified, declaring it directional, or declaring it neutral. Because directional tests can be passed by simply following the stated persona, the neutral state reveals failures of selective control that directional tests can miss. In a post hoc analysis of independent items, neutral declarations leave 51--81\% of items target-sensitive (67--93\% with no declaration), against at most 1 of 320 item--pole comparisons under directional ones.
\end{abstract}

\keywords{large language models \and user simulation \and persona prompting \and LLM evaluation \and selective persona control \and neutrality gap \and representation analysis}
\section{Introduction}\label{sec:intro}

Persona prompting has become a common approach to user simulation based on large language models (LLMs)
\citep{silicon,turingexperiments,dillion2023replace,park2023generativeagents}.
Researchers use predefined user descriptions to simulate users with specific
attributes, identities, and behavioral dispositions
\citep{impersonation,sociodemprompting,personallm}. Applications include
economic decision-making \citep{homosilicus} and social science experiments
\citep{hewitt2026predicting,hullman2026simulation}. In many of these settings,
a persona prompt manipulates a target attribute, and subsequent behavioral
changes are interpreted as effects of that attribute
\citep{hu2024personaeffect,psychometric2025,hartley2025personality}.

This interpretation requires the manipulation to be \emph{selective}: it must change the target attribute while preserving other attributes intended to remain fixed. If the same
prompt also changes other attributes that affect behavior, the observed
differences combine effects from multiple attributes and cannot be cleanly
attributed to the target attribute alone \citep{chester2021manipulation,gui2023causalsim}. For example, changing a user's risk
preference would not constitute a selective manipulation if it also changed an
unrelated color preference that the prompt never specified. This raises a
question: \textbf{when one persona attribute is controlled, do unspecified
non-target attributes that should remain unchanged stay stable?}

Specifying one attribute does not guarantee that the others stay unchanged. LLMs learn statistical
associations among concepts and user attributes from their training data.
A stated attribute can therefore provide evidence about others even when
those attributes are not mentioned
\citep{impersonation,whoseopinions,personabias,liu2024personasteered}.
Prior work has also documented systematic biases, instability, and unintended
changes in LLM-based user simulations and persona-conditioned behavior
\citep{whoseopinions,bisbee2024synthetic,compost,personabias,sociodemprompting}.
More directly, recent work shows that experimental treatments can change
unspecified user attributes and confound LLM-simulated experiments
\citep{gui2023causalsim,lin2026illusion}. Yet whether persona manipulation can
change the intended attribute without also shifting other attributes remains
largely untested.

We directly test this requirement by varying the target persona attribute while holding the task, options, and non-target information fixed. Across all eight black-box (API-accessed) LLMs we audit
and five open-weight checkpoints, this change also shifts responses on
unspecified non-target attributes. For example, describing a user as more risk-seeking shifts color
choices, even though the persona never mentions color (Figure~\ref{fig:model}a). These shifts follow the
semantic associations of the target attribute, reverse with its polarity where the model's default answer
leaves room, and remain far weaker for unrelated control attributes. This challenges the assumption that manipulating a persona attribute is an independent intervention that affects that attribute alone. We term this phenomenon \textbf{cross-attribute
influence}, a change in non-target responses induced by a change in the target attribute.

\begin{figure*}[!htb]
\centering
\includegraphics[width=\textwidth]{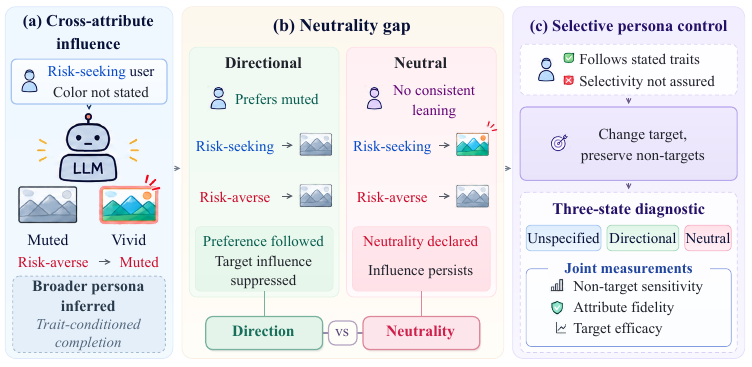}
\caption{\textbf{From cross-attribute influence to selective persona control.}
\textbf{(a)} Target attributes can shift non-target responses.
\textbf{(b)} Directional declarations suppress this influence more effectively
than neutral declarations, revealing a \emph{neutrality gap}.
\textbf{(c)} Our three-state diagnostic distinguishes selective persona control
from persona following.
Highlighted choices indicate qualitative tendencies only.}
\label{fig:model}
\end{figure*}

Semantic, contextual, and internal analyses collectively suggest why this happens: models treat a persona
prompt as evidence about the user rather than as an intervention on a single attribute, and then extend
the inferred profile to unspecified preferences associated with the target. We call this process
\textbf{trait-conditioned completion}. We next ask whether explicitly specifying the non-target attribute restores selective control. When a
non-target attribute is assigned a clear direction, models generally follow the declaration and suppress
the target attribute's influence. However, when the same attribute is declared neutral, the target
continues to affect choices on all five open-weight checkpoints, even when the model correctly reports the
declared neutral state. We define the \textbf{neutrality gap} as this difference in non-target attribute
preservation between directional and neutral declarations under otherwise identical prompts; no tested
prompt design reliably preserves declared neutrality.

These findings establish a key distinction in persona evaluation. Existing
evaluations typically assess \emph{persona following}: whether models respond
according to specified persona attributes \citep[e.g.,][]{personallm,sociodemprompting}. We
introduce \textbf{selective persona control} as a stricter requirement: when a
change in the target attribute successfully changes the target response,
non-target attributes intended to remain unchanged should stay stable.
Because changing the target attribute can also shift these non-target
attributes, \textbf{persona following does not imply selective persona
control}. A model can play the described user faithfully and still change more
than the intended attribute. Directional persona-following tests cannot reveal this failure, because a
model can pass them by simply following the stated direction; a neutral declaration removes that option.

We summarize our main contributions as follows:

\noindent\textbf{(1)} We \textbf{directly test the selective control assumption
underlying persona prompting and identify cross-attribute influence as a
systematic failure} across black-box and open-weight LLMs, which shows that persona prompts cannot be
assumed to act as independent interventions on single attributes.

\noindent\textbf{(2)} We \textbf{explain this failure as trait-conditioned
completion}: models read a persona
prompt as evidence about the user. Semantic controls, contextual correlation
manipulations, and interventions on internal representations support this
account; the internal interventions also show partial coupling between target
and non-target responses.

\noindent\textbf{(3)} Testing the remedy it implies, we \textbf{identify the neutrality gap and introduce
selective persona control as a distinct evaluation requirement}. We operationalize this requirement through a three-state diagnostic
protocol that distinguishes persona following from selective control.
\section{Selective Persona Control: Framework and Setup}\label{sec:framework}

\subsection{Framework Overview}
Attributing a behavioral difference to the edited attribute requires three answers: whether other
attributes move, why they move, and whether stating them keeps them fixed. Our framework answers them in
three stages (Figure~\ref{fig:model}). First, a field audit varies one persona attribute across eight
black-box LLMs and measures how far non-target responses move (Section~\ref{sec:bare}). Second,
semantic controls, in-context correlations, base checkpoints, and erasure of persona-induced subspaces trace
its source (Sections~\ref{sec:bare} and~\ref{sec:main}). Third, a three-state diagnostic assesses whether
explicitly specifying the non-target attribute restores selective control (Section~\ref{sec:declare}).

\paragraph{Three information states.} What can be preserved depends on what the prompt says
about the non-target attribute, and an unspecified attribute is not a declared-neutral one. Let $T$ be the target attribute and $Z$ the non-target attribute,
with assigned values $t$ and $z$. One template renders the three states with the question and options held
fixed (verbatim materials in Appendix~\ref{app:prompts:battery}). The bare prompt states only the
target. A \emph{directional declaration} adds a signed line, \emph{``The user strongly prefers muted,
understated finishes''} ($z{=}{-1}$; \emph{somewhat} at $|z|{=}.5$). A \emph{neutral declaration}
instead adds \emph{``The user has no consistent leaning in colour saturation''} ($z{=}0$; an option-specific wording, \emph{``Between [pole] and [pole], the user is indifferent,''} is used where noted). Candidate prompt designs may add one further instruction. By construction, the ground-truth non-target answer is independent of $T$
given $Z$. In either form, $z{=}0$ states no leaning and does not require a $50/50$ answer. The test
asks whether the answer changes when only the target does, which needs
no neutral ground truth.

\paragraph{Three measurements.} Selective control needs three checks: the non-target answer stays put when only
the target changes and follows a declared direction, while the target response still changes. \emph{Non-target
sensitivity} is the total variation (TV) distance between the non-target-answer distributions at
the two target levels, computed within each option order and label assignment and then averaged, so
opposite-sign effects stay visible. Low sensitivity alone is not enough, because a constant response that ignores $Z$ is
perfectly insensitive. \emph{Attribute fidelity} is measured by its error, the TV distance to the ground-truth non-target answer. \emph{Target efficacy} is the movement toward the target-positive option when
$T$ changes, and its retention relative to the bare prompt must stay above a prespecified floor. A
comparison is sensitive when its TV exceeds $.15$, and a prompt design preserves the non-target attribute
when at most $5\%$ of evaluation worlds (generated tasks, described below) contain a sensitive comparison (definitions, a conservative variant that also counts
probability mass outside the valid answer set, and the neutral ground truth in Appendices~\ref{app:battery}
and~\ref{identifiability-of-the-neutral-oracle-and-decomposition-of-the-mean-conditions}). The three states and three measurements form the \emph{three-state diagnostic protocol}. Target
efficacy and fidelity under directional declarations measure persona following; selective persona control additionally requires non-target sensitivity within this tolerance at
every declared level, including the neutral one.

\subsection{Experimental Setup}\label{sec:setup}
\paragraph{Datasets.} The \emph{field audit} tests whether the influence exists on natural questionnaire
items, with author-annotated attribute labels, fixed in advance, marking responses as target or non-target
(Appendix~\ref{app:formal}). Because natural items fix no correct non-target answer, \emph{structured
worlds} pair question \emph{texts} with sampled parameters of an attribute-dependent decision rule, which
fixes a ground-truth answer and lets the three states be assigned exactly. No model weights are trained, so there is no training split; prompt designs are chosen on validation worlds and tested once on held-out ones.
\emph{Independent items} over two tuned attribute pairs and two others, written blind to model outputs by LLMs from several families (one for some pairs), test transfer to untuned texts (Table~\ref{tab:designs}).

\begin{table}[htb]\centering\footnotesize
\caption{\textbf{Evaluation designs.} Appendix~\ref{app:suite} lists every design and its analysis unit.}
\label{tab:designs}
\setlength{\tabcolsep}{4pt}
\begin{tabularx}{\linewidth}{@{}l Y l l@{}}
\toprule
Design & Items and split & Models & Readout \\
\midrule
Field audit & $16$ non-target, $30$ target items; $94{+}18$-item bank & $8$ black-box & $20$ samples per item \\
Structured worlds & $40$ texts; $162$ validation, $477$ held-out worlds & $5$ open-weight & answer probabilities \\
Independent items & $200$ items ($160$ non-target), four pairs & $4$ primary $+$ $1$ & answer probabilities \\
Internal coupling & $47$ items, $18$ conditions, two item folds & $5$ + $4$ held-out & seven-point position \\
\bottomrule
\end{tabularx}
\end{table}

\paragraph{Backbones.} The audit covers eight black-box LLMs: \modelid{DeepSeek-V4-Flash},
\modelid{GPT-5.5}, \modelid{Kimi-K2.6}, \modelid{Gemini-3-Flash-Preview}, \modelid{GLM-5.2},
\modelid{Qwen3.5-397B}, \modelid{Nemotron-3-Super}, and \modelid{MiMo-V2.5-Pro}. Because the declaration and
internal analyses need exact answer probabilities or hidden states, which these APIs do not expose, they
use five open-weight checkpoints from four families, \modelid{Qwen2.5-32B}, \modelid{Qwen3-32B},
\modelid{Mistral-Small-3.1-24B}, \modelid{Gemma-2-27B}, and \modelid{OLMo-2-32B} (hereafter Qwen2.5, Qwen3,
Mistral, Gemma-2, and OLMo-2); some analyses also use \modelid{Qwen3.5-9B}, matched base checkpoints, or
held-out checkpoints (Appendix~\ref{app:suite}).

\paragraph{Evaluation metrics.} Non-target sensitivity, attribute fidelity, and target efficacy (defined above) evaluate the declarations; Sections~\ref{sec:bare} and~\ref{sec:main} report the non-target shift (Appendix~\ref{app:pathspecific}) and its erasure (CEP, RET).

\paragraph{Implementation details.} Black-box models are sampled at
provider-default settings ($128$-token limit; option orders shuffled under fixed seeds); open-weight
checkpoints are scored by exact answer-label probabilities in \texttt{bfloat16}; erasure removes rank-$4$
subspaces in a late-layer band, fit and scored on disjoint item folds. Headline intervals use an item-clustered bootstrap ($B{=}2{,}000$) with Benjamini--Hochberg correction. Open-weight runs use one NVIDIA RTX PRO 6000 Blackwell GPU (96\,GB); Appendix~\ref{app:repro} gives software and API provenance.

\paragraph{Controls and comparisons.} As the study evaluates a requirement rather than a new model, its comparisons are controls: the no-information reference, unrelated control attributes, semantic rewordings, random subspaces, directional declarations, and mitigation instructions (Section~\ref{sec:locus}); representation-level control, as in activation steering \citep{caa} and Persona Vectors \citep{personavectors}, is examined through persona-induced subspaces (Section~\ref{sec:main}).

\section{Trait-Conditioned Completion: Behavioral Evidence}\label{sec:bare}

A persona-based study attributes a behavioral difference to the attribute its prompt changes, which is
valid only if no other attribute changes with it. We first test whether persona control induces cross-attribute influence, using the field audit of
eight black-box models. We then ask what carries it, lexical form or semantic content, by renaming the trait
while keeping its meaning, and where the association comes from, model priors or contextual correlations, by
comparing base with instruct checkpoints and supplying correlations in context. Each experiment changes only the target attribute and measures the \emph{non-target shift}, the resulting change in
the non-target answer.

\paragraph{Does persona control induce cross-attribute influence?} The field audit compares a
risk-seeking prompt with a fixed five-attribute baseline persona on everyday preferences that the
attribute labels place outside risk. On all eight black-box models, the prompt shifts these preferences
toward their attention-grabbing, non-default pole (e.g., vivid over muted), raising that pole's probability by $.43$ on average; every model passes the prespecified effect floor and Benjamini--Hochberg correction \citep{benjamini1995fdr}. Three checks rule out simpler readings. The shift tracks content, not position: moving the
non-default option to the other end of the scale, with the question and the default option fixed, moves the
response with it ($8/8$). It depends on the trait: in a higher-sample audit of two models, unrelated control attributes
(punctual, detail-oriented) move the same answers off the scale midpoint in $12\%$ of samples, against
$99\%$ under the risk prompt. It also runs both ways: a cautious prompt shows no shift here
because two baseline clauses already state that direction, but with them deleted, the risk$\to$color influence appears in both directions
on the five primary checkpoints and \modelid{Qwen3.5-9B} (Appendix~\ref{app:c1}). A behavioral
difference under a risk prompt can therefore mix the effect of risk with that of a color preference the prompt
never stated.

\paragraph{Is cross-attribute influence lexical or semantic?} A trait defined only by its behavior under an invented name reproduces the shift (non-target answer-token
log-odds $9.13$ against $9.06$ for the natural label), whereas an arbitrary code yields far less ($1.59$; Figure~\ref{fig:why}a). Its reversal with polarity, and a signed-shift model that fits better than an extremity-only one (Appendix~\ref{app:props}), rule
out a content-free preference for extreme options \citep{cronbach1946,paulhus1991,vanvaerenbergh2013}. The
influence is therefore semantic: it follows the trait's meaning, not its name.

\begin{figure}[htb]\centering
\includegraphics[width=4.95in]{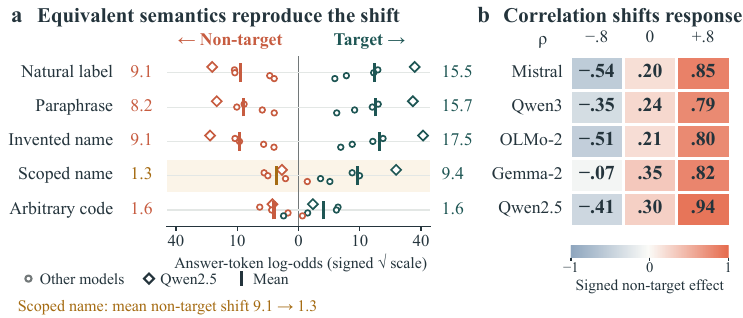}
\caption{\textbf{Equivalent semantics reproduce the shift, and supplied correlations redirect it.}
\textbf{(a)} Answer-token log-odds relative to the no-information reference (no stated preferences), on non-target and target items
under five wording conditions; ``Scoped name'' replaces the examples with a risk-only scope. Points
are the five checkpoints (diamonds: Qwen2.5); ticks and numbers give their means. Non-target values
are mirrored leftward on a signed-square-root axis. \textbf{(b)} Signed effect of
the bare prompt on the semantically positive non-target option at each demonstration correlation
$\rho$ ($64$ shared worlds; means and intervals in Appendix~\ref{app:c2}; \modelid{Gemma-2}'s negative-$\rho$
interval includes zero).}
\label{fig:why}
\end{figure}

\paragraph{Does cross-attribute influence arise from model priors or contextual correlations?} Priors
would make it a fixed, correctable property of each model; contextual correlations would
make it depend on the rest of the prompt. It arises from both. Base checkpoints show that the prior
predates post-training: on the same items, their shifts predict those of the matched instruct checkpoints
($R^2{=}.62$--$.81$ in four families; a smaller $8$B pair reaches $.43$; Appendix~\ref{app:c2}). Supplied correlations also redirect it. In structured worlds over four attribute pairs, sixteen in-context demonstrations approximate
$P(T{=}t,Z{=}z)=(1+\rho tz)/4$ with $\rho\in\{-.8,0,+.8\}$ and balanced marginals. $D_p(\rho)$ is the signed difference in non-target-option
probability between target values. When the prompt gives only $T$, the correlation redirects the target-induced non-target response on all five
checkpoints (endpoint contrast $D_p(+.8)-D_p(-.8)$ of $.90$--$1.40$ on a $[-2,2]$ range; Figure~\ref{fig:why}b). Taken together, the three answers point to trait-conditioned completion: models read a persona prompt
as evidence about the user and fill unstated attributes from associations learned in training or supplied in
context.

\section{Internal Evidence for Partial Target--Non-Target Coupling}\label{sec:main}

Section~\ref{sec:bare} suggests that models read a persona as evidence about one user, so target and
non-target responses should be coupled internally; without coupling, representation-level control such as activation steering
\citep{caa} or Persona Vectors \citep{personavectors} could remove the influence without losing the manipulation. We test the coupling by erasing what personas write into the hidden state, then ask whether a
subspace aimed at the non-target shift alone separates the two.

\paragraph{Is cross-attribute influence internally coupled to the target response?} Erasure tests the coupling directly. A prespecified program crosses eighteen prompt conditions, including a no-information reference, with
$47$ items answered on a seven-point scale, on the five primary checkpoints (Appendix~\ref{app:geom}).
A condition's \emph{displacement} is its hidden-state change relative to that reference. We erase a
subspace by projecting it out of the reference-centered hidden state \citep{leace}, at every position from the end of
the question to the answer \citep{refusal}, in a band of late layers; the subspace is fit and scored on two
disjoint item folds. Each answer is scored by its expected scale position under the answer-label probabilities; a shift
is its difference from the reference. CEP (causal explanatory power) is the fraction of
the mean non-target shift that erasure removes (values above $1$ overshoot the reference); RET (retention) is the fraction of the mean target shift kept (Figure~\ref{fig:geometry}). Erasing a rank-$4$ subspace fit only from two style personas (attention-grabbing versus default
choices), which never mention risk, removes $1.10$--$1.37$ of the non-target shift but keeps only $-.17$ to
$.20$ of the risk target response.
Rank-matched random subspaces have no effect (CEP $\le.01$, RET $\ge.97$). The pattern replicates on three of four held-out checkpoints; the fourth is uninterpretable because random subspaces also disrupt it (a post hoc check). On this readout, the two responses share a component of the displacement.

\begin{figure}[htb]\centering
\includegraphics[width=4.95in]{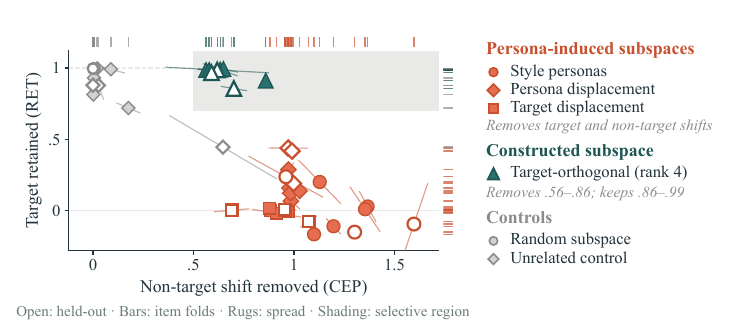}
\caption{\textbf{Persona-induced subspaces remove target and non-target shifts together.}
Non-target shift removed (CEP) against target response retained (RET) after erasure, on the five primary and three held-out checkpoints. Persona-induced subspaces are fit to the style personas' contrast, the risk persona's own displacement, or its displacement on target items. Of those plotted, only the constructed
target-orthogonal subspace enters the
selective region (shaded; CEP $\ge.5$, RET $\ge.7$).}
\label{fig:geometry}
\end{figure}

\paragraph{Can cross-attribute influence be separated from the target response?} Coupling may be partial, so we construct a separation. Projecting the target displacement out of the non-target displacement yields a rank-$4$
target-orthogonal subspace. On the five primary and three held-out checkpoints, it removes $.56$--$.86$ of the non-target shift while retaining $.86$--$.99$ of the target response (point estimates). No tested persona-induced subspace does so at this rank and band on checkpoints that
pass the random control. The construction is also local. Refit on a
binary-choice readout (a separate panel, in TV units not comparable with CEP), it lowers
the non-target shift by at most $.20$; on unrelated tasks, it meets all five collateral-utility criteria on one of five checkpoints and all but one on two others (Appendix~\ref{app:geom}). Target and non-target
responses thus share part of the persona-induced representation; a constructed subspace separates them, mainly on the
seven-point readout.

\FloatBarrier
\section{The Neutrality Gap: Directional Declarations Reduce Sensitivity More Than Neutral Ones}\label{sec:declare}

Section~\ref{sec:bare} suggests that models fill in the non-target attributes a prompt leaves
unspecified. A natural remedy is to specify these attributes explicitly. A directional declaration states a
leaning, which a model can satisfy simply by following it, as persona-following tests do. A neutral
declaration, however, states no leaning, so it tests whether the attribute itself stays stable. We compare
the two forms (Figure~\ref{fig:neutral}), then ask whether the gap between them arises from extracting the declared
state or from using it, and which instructions reduce the influence.

\paragraph{Do directional declarations suppress cross-attribute influence?} In exploratory structured worlds with a supplied
attribute correlation (Section~\ref{sec:bare}), declaring the non-target value overrides that correlation on all
five checkpoints. The endpoint
contrast falls from $.90$--$1.40$ to at most $.07$ (Appendix~\ref{app:c4}). Because TV between target levels depends only on a shared answer
tendency (the two levels' mean answer log-odds) and a target-induced margin (half their difference), a declaration could lower TV by output saturation alone;
a post hoc algebraic decomposition
(Appendix~\ref{bd-decomposition}) shows that it also cuts the margin itself, so the override
is nearly complete on the probability scale, with a log-odds residual.

\begin{figure}[htb]\centering
\includegraphics[width=4.95in]{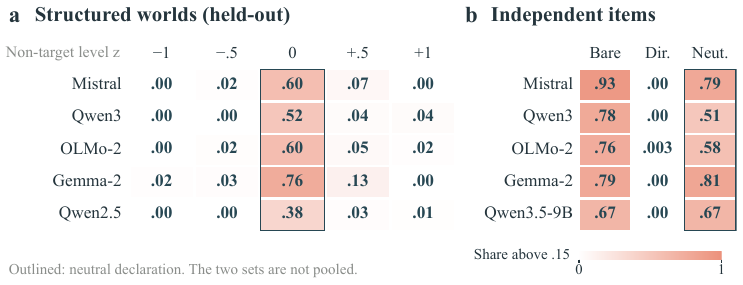}
\caption{\textbf{Declared neutrality leaves the non-target choice target-sensitive in both item sets.}
\textbf{(a)} Structured worlds: share of held-out worlds whose TV between target levels exceeds $.15$ at each non-target level ($477$ worlds over $40$ non-target texts per model; the
outlined column declares \emph{no consistent leaning}). \textbf{(b)} Independent items: share of the
$160$ non-target items (bare, neutral) or $320$ item--pole comparisons (directional) above $.15$, in the open frame with an option-specific neutral
wording; \modelid{Qwen3.5-9B} replaces \modelid{Qwen2.5}. The two sets differ in frame, wording,
unit, and models, so they are not pooled. The level split in (a) and all of (b) are post hoc
(Appendix~\ref{app:c4}).}
\label{fig:neutral}
\end{figure}

\paragraph{Do neutral declarations suppress cross-attribute influence?} Far less than directional ones. We
test each checkpoint's best candidate prompt design once on held-out worlds; none met the feasibility
criterion on the validation worlds used for selection (Appendix~\ref{app:c4}). Every checkpoint fails the
$5\%$ tolerance of Section~\ref{sec:framework}: TV exceeds $.15$ in $39$--$76\%$ of worlds, and in post hoc checks the verdict holds with intervals clustered by text (Table~\ref{tab:tail}) and at any sensitivity threshold below $.95$. The failures sit at the
neutral level, where $38$--$76\%$ of worlds exceed $.15$, against $0$--$13\%$ at the directional levels
(Figure~\ref{fig:neutral}a; post hoc). Even without the neutral level, the rate under the conservative variant (Section~\ref{sec:framework}) stays above the tolerance on four of five checkpoints. Held-out worlds
reuse the texts, so we also test the independent items. At the neutral level, $51$--$81\%$ of them exceed $.15$, close to the $67$--$93\%$ under the bare prompt, against at most $1$ of $320$ item--pole comparisons under directional declarations
(Figure~\ref{fig:neutral}b; open frame, with no system instruction forcing one option). The neutral-level
shifts follow the bare prompt's direction: where both exceed $.15$, they share its sign in $1{,}237$ of $1{,}240$ held-out world comparisons and in most independent-item comparisons. In the same
decomposition, the neutral level keeps a median $.37$--$.87$ of the bare margin and pushes the shared answer tendency toward saturation far less than the directional extremes do. Nor does the effect hinge on one wording: two paraphrases (two checkpoints), the option-specific wording (six checkpoints), and deleting the instruction to use the description (four checkpoints) leave it in place, and a tie-breaking rule only attenuates it
(Appendix~\ref{app:c4}). Models follow a declared direction, not a declared absence of preference: the
neutrality gap.
\paragraph{Does the neutrality gap arise from extracting the declared state or from using it?} The neutral level still carries the bare prompt's influence. A model that
fails to extract the declared state would be helped by clearer wording; one that extracts it but does not use it would pass
a manipulation check in which it restates the persona \citep{hauser2018manipulation} while its choices vary. In independent contexts, the \emph{same}
description and alternatives are posed three ways: which state the description declares, which
alternative to choose, and which response fits once \emph{indifferent} and \emph{not enough
information} are added to the options. We run this on the structured texts (six checkpoints) and the independent items (five;
Appendix~\ref{app:c5}). Reports of the neutral declaration are mostly adequate, yet most choices remain
affected by the target attribute. In about two thirds of comparisons in both item sets, adequate reporting coexists with a sensitive choice, and inadequate reporting never coexists with a stable choice (Table~\ref{tab:rc}); on the independent items the dissociation is strongest for risk$\to$color, weakest for an
intensity-matched hue pair. At the task level, the failure appears in
choice even where the report is correct.

\begin{table}[htb]\centering\footnotesize
\caption{\textbf{Reporting a declared neutral state does not ensure an invariant choice.} Option-specific neutral wording. Correct report: minimum over target levels of $P(\text{correct report})$, averaged over orders and labels. Report and choice sensitivity: TV between target
levels. Comparisons by adequate report ($\ge.90$) and sensitive choice (TV $>.15$).
Exploratory; per-model values in Appendix~\ref{app:c5}.}
\label{tab:rc}
\setlength{\tabcolsep}{4pt}
\begin{tabular}{@{}lcccccc@{}}
\toprule
& & & & & \multicolumn{2}{c}{\shortstack{Comparisons:\\choice sensitive / stable}} \\
\cmidrule(l){6-7}
Item set & \shortstack{Check-\\points} & \shortstack{Correct\\report} & \shortstack{Report\\sens.} & \shortstack{Choice\\sens.} & \shortstack{Adequate\\report} & \shortstack{Inadequate\\report} \\
\midrule
Structured texts & $6$ & $.892$--$1.000$ & $\le.064$ & $<.001$--$.967$ & $161$ / $66$ of $240$ & $13$ / $0$ \\
Independent items & $5$ & $.901$--$1.000$ & $\le.053$ & $.043$--$.987$ & $515$ / $262$ of $800$ & $23$ / $0$ \\
\bottomrule
\end{tabular}
\end{table}

\paragraph{Which instructions reduce cross-attribute influence?}\label{sec:locus} A neutral declaration supplies no direction to follow, so we test whether
instructions that supply one work where prohibiting the influence does not. On two black-box models, we test a
scope-and-default instruction, which combines a scope rule for the target with an explicit default
for non-target attributes. It suppresses the influence (containment; Appendix~\ref{app:c6}) in $23$ of the $31$ informative model--task--persona cells, where the
bare prompt clearly moves the answer,
against $13$--$14$ for the default or the scope rule alone and at most $5$ for a negation-only instruction or a prefix labeling each rating or A/B question as not a risk decision. On the open-weight checkpoints, an attribute-scope instruction without a default lowers neutral-level sensitivity in $8$ of $12$ checkpoint--pair cells but leaves no comparison above $.15$ in
only one, and adding an \emph{indifferent} option lowers sensitivity but is not a repair either
(Appendix~\ref{app:c5}). In a separate lookup task whose rule names the deciding attribute, non-target sensitivity is $.001$--$.003$ on all five checkpoints
at target efficacy $.79$--$.94$ (Appendix~\ref{app:lookup}), so very low sensitivity is attainable when
the decision rule is supplied. Models thus follow a clear alternative direction, which suppresses the influence, but no tested prompt design reliably preserves declared neutrality, a requirement of selective control that directional tests do not check.

\section{Related Work}\label{sec:related}

\textbf{Synthetic users and causal validity.} Persona-conditioned agents support synthetic
populations \citep{silicon,homosilicus,turingexperiments,park2023generativeagents} whose validity has
been questioned \citep{compost,responsebias,hullman2026simulation,hu2024personaeffect,surveyresponses};
statistical realism is a weak proxy for treatment-effect accuracy \citep{li2026statisticalrealism}. \citet{gui2023causalsim} trace confounding to unspecified attributes and reveal the design
rather than add covariates. \citet{lin2026illusion} formalize user drift and find that
setting-relevant confounders reduce it, whereas generic ones help inconsistently. However, neither tests a declaration of no preference or intervenes on internal representations; our diagnostic and erasure analysis address both, following construct and manipulation validity \citep{cronbach1955construct,campbell1959mtmm,chester2021manipulation,hauser2018manipulation,jacobs2021measurement,blodgett2021salmon,spinningarrow} and negative controls \citep{lipsitch2010negative,shi2020selective}.

\textbf{Defaults, inferred users, and consistency.}
Model defaults are not neutral \citep{whoseopinions}, and persona wording
affects how far behavior shifts \citep{sociodemprompting,weeber2026onepersona}.
BBQ \citep{parrish2022bbq} tests reliance on stereotypes when information is
insufficient, whereas our neutral declaration explicitly states that the attribute has no consistent leaning. \citet{neplenbroek2025reading} show that, for some
groups, demographic attributes inferred from conversational signals persist
even after an identity is explicitly stated, and that steering toward a
no-information state can fail where steering toward a stated group succeeds.
Assigned personas can also carry stereotyped correlates into other attributes
\citep{reusens2025economists}, and audits document persona-linked bias,
caricature, and inconsistency
\citep{personabias,toxicitypersona,liu2024personasteered,cheng2023marked,
wang2025misportray,perez2023mwe,bisbee2024synthetic,xiao2026chameleon}.
These studies primarily concern inferred identities, stereotypes, or role
coherence \citep{andreas2022agentmodels,shanahan2023roleplay}. Unlike steering an inferred identity toward a no-information state, we test whether a preference declared neutral in the prompt stays stable when a different, stated attribute changes, while also requiring the target response to change as intended.

\textbf{Editing specificity and activation control.} Knowledge editing is judged by efficacy,
generalization, and specificity \citep{rome,cohen2024ripple}; non-target attribute preservation is the
behavioral analog of specificity. Activation steering controls high-level behavior
\citep{actadd,caa,repe}, with structured, partly forecastable side effects
\citep{ong2026sideeffects}, and multi-concept control remains open \citep{wehner2025taxonomy}.
Activation patching localizes where persona tokens are processed \citep{personapatching}. Persona
features control emergent misalignment \citep{wang2025personafeatures}, Persona Vectors monitor
trait shifts \citep{personavectors}, and personality
directions interfere even after orthogonalization \citep{bhandari2026traitsinterfere}. Models encode and amplify real-world correlations among an entity's numerical attributes
\citep{takagi2025multiattribute}. Our erasure evidence builds on linear concept erasure \citep{inlp,rlace,leace} and work on linear representations \citep{linearrep,refusal,gurnee2026workspace}, and bounds constructed selectivity with controls that steering success alone lacks \citep{tan2024steering}.

\section{Conclusion}\label{sec:conclusion}

Persona prompting assumes that specifying one attribute changes that attribute alone. Our results point
to why this assumption fails: models read a persona prompt as evidence about the user rather than as a
selective intervention on one attribute. Trait-conditioned completion then carries the
stated trait into unstated attributes the model associates with it, whether the association was learned
in training or supplied in context; risk$\to$color is one instance. A declaration can counter this
completion, but only asymmetrically: models follow a stated direction but do not preserve a declared neutral state. Preserving declared neutrality is the open problem these results pinpoint: no tested prompt design reliably achieves it. This asymmetry is invisible to persona-following tests, because
a model can pass them simply by following the stated direction. The neutral state removes that option:
it shows whether the attribute itself stays stable, and it needs only invariance to the target, not a
ground-truth answer. A study that attributes a behavioral change to one edited prompt attribute needs a preservation check of this kind, and the three-state diagnostic supplies one. Passing a persona-following test shows that
the model plays the described user; it does not show that only the intended attribute changed. A persona
prompt can define a coherent character without defining a valid experiment.

\paragraph{Limitations.} The field audit covers one attribute pair; the other pairs are tested on open-weight checkpoints, as are the neutral-declaration and internal analyses, which need exact answer probabilities or hidden states that the black-box APIs do not uniformly expose.  Directional instructions also need checking in use: in ten-turn
dialogues, a scope-and-default instruction reduces the influence on all eight black-box models but keeps it below our prespecified floor, with the target retained, at every measured turn on only three (Appendix~\ref{app:c6}). Materials are
English and text-only. We use no human benchmark because selectivity is defined by what the prompt
states, not by how people behave; whether the inferred preferences match real users is a separate
question (Appendix~\ref{app:limits}).

\clearpage
\subsection*{AI Use Statement}

Generative AI assistants were used to provide feedback on existing hypotheses, analysis plans, and experimental methodology; assist with implementation, data cleaning and reformatting, analysis, and interpretation of results; and generate synthetic item banks. They were also used for editing existing author-written text and figures. Generative AI was not used to develop the conceptual framework, formulate mathematical claims, or write mathematical proofs. Translation and qualitative or thematic data analysis are not applicable to this work. All AI-assisted work was reviewed by the authors, who take responsibility for the study design, analyses, interpretations, and final content.

\subsection*{Ethics Statement}
This work measures whether persona prompts move non-target attributes. The study involves no human subjects. Model-generated associations between traits and preferences are not evidence about people, and no claim about social groups is made from them. The constructed target-orthogonal intervention could also conceal selected side effects while preserving the target response; we report it as a scientific control, not a deployment recipe. The results caution against research or product decisions based on persona-conditioned synthetic users without checking that non-target attributes stay invariant to the target.

\subsection*{Reproducibility Statement}
Appendix~\ref{app:suite} identifies the models, materials, unit of analysis, and status of each main design;
Appendix~\ref{app:claims} maps claims to evidence; Appendices~\ref{app:formal} and~\ref{app:battery}
define the estimands and decision rules; and Appendix~\ref{app:repro} records API provenance,
checkpoint identifiers, corrections, and dated deviations. Core prompts and complete headline
tables, complete mathematical proofs, and their assumptions are included in the appendix. The code and data release contains the items, wordings, worlds, and splits; the cached scores behind every number in Sections~\ref{sec:bare}--\ref{sec:declare} and the Conclusion (per item, world, or cell where these were stored); a standard-library script that recomputes each of these numbers and prints it beside the value in the paper; and the prompt-rendering, scoring, and erasure code. Raw per-call responses and hidden-state activations are omitted for size.  Post hoc, exploratory, and revised-rule analyses are explicitly labeled.

\clearpage
\appendix
\setcounter{equation}{0}\renewcommand{\theequation}{A.\arabic{equation}}
\section{Appendix}
The appendix is organized for checking the paper's six claims. Appendix~\ref{app:claims} maps
claims to evidence; Appendices~\ref{app:suite}--\ref{app:prompts} give the designs, definitions,
and verbatim core materials; Appendices~\ref{app:c1}--\ref{app:c6} report the results and essential
controls. Provenance and scope are in Appendices~\ref{app:repro} and~\ref{app:limits}.
The mathematical definitions, complete proposition proofs, and assumptions are retained in
Appendices~\ref{app:formal}--\ref{app:props}. Several appendices use formal terms from the
underlying construct-validity analysis; Table~\ref{tab:bridge} in Appendix~\ref{app:claims} maps the main ones to their main-text terms.

\subsection{Guide to the Evidence}\label{app:claims}
This appendix contains the designs, complete headline results, and qualifications needed to assess
Sections~\ref{sec:bare}--\ref{sec:declare}. Table~\ref{tab:claims} identifies where to check each claim.
\emph{Prespecified} means that the relevant rule was fixed before scoring; \emph{exploratory} means
that no confirmatory decision is claimed; \emph{post hoc} identifies analyses performed after results
were available. These labels do not imply preregistration. Different item sets, model panels, response
formats, and measurement scales are not pooled.

\begin{table}[!htb]\centering
\renewcommand{\arraystretch}{1.08}\small
\caption{\textbf{Main claims, necessary evidence, and limits.} Across rows, black-box prevalence (C1) covers one attribute pair, and C3--C5 use open-weight checkpoints only (Appendix~\ref{app:limits}).}
\label{tab:claims}
\setlength{\tabcolsep}{4pt}
\begin{tabularx}{\linewidth}{@{}>{\raggedright\arraybackslash}p{.19\linewidth}Y >{\raggedright\arraybackslash}p{.23\linewidth}@{}}
\toprule
Claim & Evidence retained here & Main qualification \\
\midrule
C1. Cross-attribute influence & Per-model audit, pole controls, unrelated control attributes, construct coding, and the
baseline-clause deletion (App.~\ref{app:c1}). & Black-box scores are relative to a baseline stating the cautious direction; the coding is author-assigned, but two-model recodings keep the risk--control gap $\ge.87$. \\
C2. Trait-conditioned completion & Semantic controls, correlation manipulation, and same-item
base--instruct prediction (App.~\ref{app:c2}). & The arbitrary code is not activation-matched (the scoped invented name is the activation-retaining contrast);
base prediction is not a causal estimate of post-training. \\
C3. Partial internal coupling & Erasure design, point estimates, intervals, control checks, and readout
and utility limits (App.~\ref{app:geom}). & Three of four held-out checkpoints replicate; the fourth is uninterpretable. No general-utility or neutral-state mechanism claim. \\
C4. Neutrality gap & Prompt-design selection, held-out and independent-item results, scale decomposition,
wording controls, and ground-truth identifiability (App.~\ref{app:c4}). & Holdout reuses texts (the independent items supply new ones); level splits are post hoc; neutral fidelity is not identified and not needed for the invariance test. \\
C5. Reporting versus preserving & Branched-task design, full per-pair results, frame and coverage
limits, and the failed internal positive control (App.~\ref{app:c5}). & Exploratory task-level
dissociation, not a localized internal process. \\
C6. Mitigation boundary & Containment rules and full counts, scope-only controls, the rule-lookup
reference, and the dialogue result used in the Conclusion (App.~\ref{app:c6}). & A supplied direction is not neutrality; the rule reference is not
a matched neutral condition. \\
\bottomrule
\end{tabularx}
\end{table}

\paragraph{Mathematical basis and inference.}
Appendix~\ref{app:formal} gives the definitions and algebraic relations; Appendix~\ref{app:pathspecific}
states the path-admissibility assumption and the limit of its interpretation. Appendix~\ref{app:props}
contains both propositions, their complete proofs, and the sample-size calculation showing that the reported audits support bank-level detection but not finite-sample certificates. The world-level certificate in Appendix~\ref{additional-multiplicity-checks} is a separate procedure; the same appendix states its dependence limitation and shows that every declared violation survives text-clustered re-analysis. Neither certificate is inferred from passing a descriptive screen.

\paragraph{How to check a quantitative claim.}
The designs and panels are in Appendix~\ref{app:suite}, and the prompt templates are in
Appendix~\ref{app:prompts}. Each results section states its own denominator, measurement scale,
reference condition, decision rule, and exceptions. Corrections, post hoc rules, and
limitations remain in Appendices~\ref{app:repro} and~\ref{app:limits} even when they weaken a claim.

\begin{table}[!htb]\centering
\renewcommand{\arraystretch}{1.08}\footnotesize
\caption{\textbf{Main-text terms and their formal appendix counterparts.} The two columns name the
same quantity or object; the appendix term is defined where listed.}
\label{tab:bridge}
\setlength{\tabcolsep}{4pt}
\begin{tabularx}{\linewidth}{@{}Y Y l@{}}
\toprule
Main text & Appendix term & Defined in \\
\midrule
non-target shift & leakage & App.~\ref{app:formal}, \ref{app:geom} \\
author-annotated attribute labels & construct scope specification & App.~\ref{app:formal} \\
attention-grabbing, non-default pole & marked pole; markedness & App.~\ref{app:formal} \\
non-target attribute, as a prompt factor & discriminant factor $C_{-R}$ & App.~\ref{app:notation} \\
field audit & field track & App.~\ref{app:formal} \\
higher-sample audit (Section~\ref{sec:bare}) & dense audit & App.~\ref{app:suite} \\
style personas (Section~\ref{sec:main}) & marked-versus-default personas & App.~\ref{app:geom} \\
selective persona control, field-audit form & $(\epsilon,\tau)$-construct-surgical; $L_U$, $R_T$ & Def.~\ref{def:surgical} \\
share of answers moved off the scale midpoint & escape & App.~\ref{app:gatedir} \\
cross-attribute influence as a causal path & forbidden-path effect & App.~\ref{app:pathspecific} \\
target / non-target items (Figure~\ref{fig:why}) & on-construct / leakage items & App.~\ref{app:c2:semantics} \\
unrelated control attribute & null (codebook label) & App.~\ref{app:c1:coding} \\
two tuned attribute pairs & certification pairs & App.~\ref{app:c4:design} \\
suppressing the influence in a cell & containment & App.~\ref{app:c6} \\
signed-shift model (Section~\ref{sec:bare}) & gate $\times$ direction account & App.~\ref{app:gatedir} \\
structured-world prompts (Section~\ref{sec:framework}) & declaration battery; battery frame & App.~\ref{app:c4:design} \\
\bottomrule
\end{tabularx}
\end{table}
\FloatBarrier

\subsection{Evaluation Designs and Model Panels}\label{app:suite}
The field audit measures changes on attributes outside a declared construct scope; as a test of non-target invariance, it needs no human ground-truth preference, but it cannot measure fidelity, which needs a ground-truth answer. Structured worlds supply one and separate absent,
directional, and neutral declarations. Independent items test new texts, rather than additional
parameter draws over the structured catalog. Table~\ref{tab:itemsets} gives the units of the
analyses used in the main paper. Orders, answer-code maps, and repeated calls are within-unit repeats.

\begin{table}[!htb]\centering
\renewcommand{\arraystretch}{1.08}\footnotesize
\caption{\textbf{Designs supporting the main claims.} Counts describe each design separately. The last column gives each design's analysis unit and whether its analysis was prespecified, descriptive, exploratory, or post hoc.}
\label{tab:itemsets}\setlength{\tabcolsep}{4pt}
\begin{tabularx}{\linewidth}{@{}>{\raggedright\arraybackslash}p{.20\linewidth}Y >{\raggedright\arraybackslash}p{.21\linewidth}@{}}
\toprule
Design & Materials and comparison & Unit / status \\
\midrule
Headline field audit & $16$ non-target style items and $30$ risk-axis target items, $20$ draws per cell; $7$ same-axis pole reversals, $5$ draws per cell. & Item; prespecified. \\
Factorial / worked criterion & Full bank: three prompt conditions, four factor cells, $94$ non-target and
$18$ target items, $20$ draws; compressed factorial with a stated default pole: six profiles, $8$ non-target and
$6$ target items, $4$ draws. & Item / cell; descriptive, not certified (App.~\ref{app:props}). \\
Dense audit & $17$ items outside the target domain ($12$ aesthetic, $2$ communication, $1$ ambient, $2$ ingestive)
and $4$ risk items; seven-point responses, $30$ draws; ingestive (ambiguous) cells excluded from headlines. & Item;
descriptive; coding sensitivity post hoc. \\
Semantic controls & $94$ non-target and $18$ target items, scored as answer-token log-odds under five
wording conditions. & Item; descriptive contrasts. \\
Base and representation bank & $47$ seven-point items: $13$ non-target, $4$ ambiguous, $30$ target;
$18$ condition wordings in the representation program; two fit/evaluation folds. & Item;
prespecified program, with logged deviations. \\
Explicit-evidence bank & $64$ worlds, $16$ per attribute pair, $42$ non-target texts; $16$
demonstrations; absent, declared, mentioned, or past-choice evidence; no neutral level. & World;
exploratory. \\
Validation / held-out worlds & $162$ validation and $477$ in-scope held-out worlds; $40$ non-target
texts over risk$\to$color and time$\to$sound; five declared levels. & World; prompt design selected on validation;
level/cluster analyses post hoc. \\
Wording, prompt, report/choice & $80$ structured texts: $20$ non-target and $20$ target per pair;
wording confirmation in the battery frame; report/choice in separate open-frame contexts. & Text;
wording prespecified; prompt controls descriptive; report/choice exploratory. \\
Independent items & $200$ new texts: $160$ non-target, $40$ target, over risk$\to$color,
time$\to$sound, risk$\to$hue, time$\to$layout; absent, both directions, and the neutral declaration (option-specific wording). & Item;
family-clustered; exploratory; level split post hoc. \\
Readout and utility checks & Readout refit: $13$ non-target and $30$ target items, two folds, seven-point
and binary formats. Collateral utility: $60$ items; separate from the persona audit. & Item; exploratory
refit / prespecified utility criteria (App.~\ref{app:geom}). \\
Rule reference / containment & Rule lookup: $64$ worlds per checkpoint over four pairs.
Containment: two black-box models, eight tasks, three personas, $31$ informative cells per instruction. & World / cell;
exploratory / descriptive. \\
\bottomrule
\end{tabularx}
\end{table}

\paragraph{Model panels.}\label{tab:panels}
The \textbf{primary open-weight panel} is Qwen2.5-32B-Instruct \citep{qwen25report}, Qwen3-32B \citep{qwen3report},
Mistral-Small-3.1-24B-Instruct \citep{mistralsmall31}, Gemma-2-27B-IT \citep{gemma2report}, and OLMo-2-0325-32B-Instruct \citep{olmo2report}.
The \textbf{independent-item and readout-refit panel} replaces Qwen2.5 with Qwen3.5-9B \citep{qwen35blog};
the latter is not the black-box panel's Qwen3.5-397B. The baseline-deletion and structured-text
report/choice comparisons use the primary five plus Qwen3.5-9B. Wording confirmation covers
Mistral and Qwen3 only. Held-out representation checks use Llama-3.1-8B-Instruct \citep{llama3report}, GPT-OSS-20B \citep{gptoss},
Gemma-3-27B-IT \citep{gemma3report}, and Gemma-4-31B-IT \citep{gemma4report}; three replicate the erasure pattern, and the fourth is uninterpretable because it fails its random-control check.
The \textbf{black-box panel} (queried only through provider APIs; several of these models also have open weights, but none was run locally) is DeepSeek-V4-Flash \citep{deepseekv4report}, GPT-5.5 \citep{gpt55systemcard}, Kimi-K2.6 \citep{kimik26blog}, which shares the Kimi~K2.5 architecture \citep{kimik25report},
Gemini-3-Flash-Preview \citep{gemini3flashcard}, GLM-5.2 \citep{glm5report},
Qwen3.5-397B \citep{qwen35blog}, Nemotron-3-Super \citep{nemotron3super}, and MiMo-V2.5-Pro \citep{mimov25pro}, which builds on MiMo-V2-Flash \citep{mimov2flash}. The dense audit and containment comparison
use GPT-5.5 and Kimi-K2.6; the ten-turn comparison cited in the Conclusion uses all eight black-box
models on substitute API providers (Appendix~\ref{app:c6:dialogue}). Base-pair coverage is specified with its results
(Appendix~\ref{app:c2:base}); exact open-weight IDs and API provider limitations are in
Appendix~\ref{app:repro}.

\paragraph{Text independence and material provenance.}
Of the $40$ held-out non-target texts, $39$ also occur in validation. The holdout separates parameter
draws, not question texts. The independent bank was written from authored specifications without
access to model outputs and fixed before scoring; it is model-generated, not human-normed, and the neutral-versus-directional contrast holds its items fixed.
Its non-target items have $5/11/4/1$ generator families per pair (color/hue/sound/layout), respectively;
its target pairs have $5/10/1/1$. No family-level interval is asserted on single-family quantities.
The four structured-world pairs are risk weight$\to$color, time preference$\to$sound,
social contact$\to$layout, and time of day$\to$color; only the first two enter prompt-design selection
and held-out evaluation. The independent hue and layout items change both attribute and content;
they are not one-factor interventions on markedness.

\paragraph{Decision rules.}
The headline audit uses a probability-scale effect floor $.05$, item-clustered bootstrap tests
($B=2{,}000$), and Benjamini--Hochberg correction at $q=.05$ over its declared family.
The held-out evaluation uses sensitivity threshold $.15$ and tolerance $.05$, fixed before held-out evaluation. Containment requires an influence-reduction rule and target
retention at least $.60$; the held-out evaluation uses a $.90$ retention floor. Representation
controls and exploratory report/choice references are stated in their own sections. A pass under
one rule is not a pass under another. Deviations, including rules changed after a run, are retained
in Appendix~\ref{app:repro}.

\subsection{Formal Definitions, Estimands, and Algebraic Identities}\label{app:formal}

\paragraph{Field-track construct scope.} The field track seeks the target distribution $P^*$ under a
controlled textual assignment of the target factor with the other prompt-layer factors at their
defaults, while the model returns $P_\theta(Y\mid C_p{=}c)$ (a path-specific reading in
Appendix~\ref{app:pathspecific}). A construct scope specification partitions item cells into a target set $\mathcal{P}_p$ and a non-target set
$\mathcal{U}_p$ and records why a cell is non-target \citep{jacobs2021measurement}: a literature-based
construct distinction \citep{weber2002dospert,blais2006dospert,bloch2003cvpa,cleridou2014aesthetics},
which motivates an audit without establishing zero association in a human population, a premise the declared target distribution does not need; an explicit
instruction; or a generative ground-truth rule. With $\Delta_{p,b}(d,q)$ the persona-minus-reference shift
toward the judge-normed marked option on item $q$ in domain $d$, the audit averages absolute
non-target-cell shifts by declared weights, and a design meets the field-track criterion when that mean is
within the declared tolerance and bare-prompt-relative retention meets the declared floor. Because a
natural trait fixes no ground-truth non-target answer, a prompt design can score as insensitive on this
track while misrepresenting the user; fidelity is therefore measured only on constructed worlds.

Table~\ref{tab:notation} collects the symbols for main-text quantities and the appendices, marking the appendix-local ones. The field-track criterion is the following.
\begin{table}[!htb]\centering
\renewcommand{\arraystretch}{1.08}
\caption{\textbf{Notation.} Symbols for main-text quantities, the field audit, and the declaration battery; the last two rows list appendix-local symbols and conventional reuse.}
\label{tab:notation}
\footnotesize
\setlength{\tabcolsep}{4pt}
\begin{tabularx}{\linewidth}{@{}>{\raggedright\arraybackslash}p{0.30\linewidth} Y >{\raggedright\arraybackslash}p{0.17\linewidth}@{}}
\toprule
Symbol & Meaning & Where \\
\midrule
$T$, $Z$; $t$, $z$ & target and non-target attribute; their assigned values ($z{=}0$ is the declared-neutral level) & Sections~\ref{sec:framework}, \ref{sec:declare} \\
$w$, $q$, $b$ & constructed world; non-target or target query; prompt design (also the field-audit design index) & Section~\ref{sec:framework}, Appendix~\ref{app:formal} \\
$\pi_b(\cdot\mid t,z,q,w)$, $\pi_Z^*$ & answer distribution under prompt design $b$; ground-truth reference & Section~\ref{sec:framework} \\
$L_{\mathrm{sens}}$, $E_Z$, $a_{w,z}$, $u_t$ & non-target sensitivity; attribute-fidelity error; conservative per-world residual; unassigned answer mass & Section~\ref{sec:framework}, Appendix~\ref{app:battery} \\
$\rho$, $D_p(\rho)$ & demonstrated correlation; signed non-target-option difference between the target levels & Section~\ref{sec:bare}, Appendix~\ref{app:battery} \\
$\ell_t$, $\alpha_w$, $\eta_w$, $\sigma$ & semantic log-odds at target level $t$; shared answer tendency; target-induced margin; logistic sigmoid & Section~\ref{sec:declare}, Appendix~\ref{bd-decomposition} \\
$\boldsymbol{h}$, $\boldsymbol{U}$, $\boldsymbol{h}_{\mathrm{ref}}$, $k$; CEP, RET & hidden state; orthonormal basis of the erased subspace; no-information reference state; subspace rank; leakage removed and target retained & Section~\ref{sec:main}, Appendix~\ref{app:geom} \\
$\Delta_{p,b}(d,q)$, $L_U$, $\mathcal{T}(p,b)$, $R_T$, $(\epsilon,\tau)$ & field audit: persona-minus-reference shift on item $q$ in domain $d$; leakage; target effect; retention; declared tolerances & Appendices~\ref{app:formal} and~\ref{app:props} \\
$\mathcal{P}_p$, $\mathcal{U}_p$, $\omega_{d,q}$, $s_p$ & target and non-target cell sets of persona $p$; declared weights; the persona's signed pole & Appendix~\ref{app:formal} \\
$C$, $c_0$, $R$, $S$, $\doP$, $P^*$, $P_\theta$, $\Delta^*(d,q)$ & prompt-layer factor vector and its defaults; target and discriminant factors; protocol assignment; target and realized distributions; target estimand & Appendices~\ref{app:notation} and~\ref{app:sr} \\
$p_p(q)$, $p_0(q)$, $g_p(q)$, $\eta_p(q)$ & marked-choice probability under persona $p$; reference prompt; its logit; logit shift & Appendix~\ref{app:setup} \\
$G_{p,d}$, $\mathrm{Dir}_{p,d}$, $\delta_G$, $y_{\mathrm{mid}}$, $\phi_{p,d}$, $m_{d,q}$, $\xi_{p,d,q}$, $\boldsymbol{\lambda}$, $\boldsymbol{w}$ & escape; conditional direction; midpoint-band half-width; scale midpoint; effective gate; item markedness contrast; cell residual; persona intensities; domain loadings & Appendix~\ref{app:gatedir} \\
$\mathrm{SR}$, $\mathrm{CO}_b$ & surgicality ratio; containment openness (the residual gate $G$ of Appendix~\ref{app:c6}) & Appendix~\ref{app:sr} \\
$n$, $\delta$, $\alpha$, $\kappa$, $\Phi$ & panel size; choice-scale leakage; test level; marked-winner probability bound; standard normal distribution function & Proposition~\ref{prop:panel} \\
$M$, $n_i$, $\zeta_i$, $\gamma$ & number of non-target cells; contrasts per cell; Hoeffding radii; confidence parameter & Proposition~\ref{prop:cert} \\
$\tau_{\mathrm{DL}}$, $I^2$; $\nu$ & heterogeneity statistics (Appendix~\ref{app:endpoints}); margin-perturbation budget (Appendix~\ref{internal-control-evidence-and-numerical-scope}) & appendix-local \\
$B$; $o$, $\mathrm{opt}_1$, $\mathrm{opt}_2$; $p$; $r$; Krippendorff's $\alpha$ & bootstrap replicates; an answer option and the two options of a non-target query (Appendix~\ref{app:battery}); $p$-values (elsewhere $p$ indexes personas); Pearson correlation; inter-judge agreement (Appendix~\ref{app:c1:coding}) & conventional \\
\bottomrule
\end{tabularx}
\end{table}

\begin{defn}[Construct-surgical persona intervention]\label{def:surgical}
For design $b$ with bare-persona reference $b_{\mathrm{ref}}$, define
\begin{equation}\label{eq:surgical}
L_U(p,b)=\!\!\sum_{(d,q)\in \mathcal{U}_p}\!\!\omega_{d,q}|\Delta_{p,b}(d,q)|,
\quad
\mathcal{T}(p,b)=\mathbb{E}_{(d,q)\in \mathcal{P}_p}[s_p\Delta_{p,b}(d,q)],
\quad
R_T(p,b)=\frac{\mathcal{T}(p,b)}{\mathcal{T}(p,b_{\mathrm{ref}})}
\end{equation}
with declared weights $\omega_{d,q}\ge0$ summing to one and $s_p\in\{-1,+1\}$ the persona's expected pole
(Appendix~\ref{app:gatedir}); $R_T$ is defined when $\mathcal{P}_p\ne\emptyset$ and
$\mathcal{T}(p,b_{\mathrm{ref}})\ge \mathcal{T}_{\min}>0$. Design $b$ is \emph{$(\epsilon,\tau)$-construct-surgical} if
$L_U(p,b)\le\epsilon$ and $R_T(p,b)\ge\tau$; applications also declare a tolerance for
$L_U^{\max}(p,b)=\max_{(d,q)\in \mathcal{U}_p}|\Delta_{p,b}(d,q)|$.
\end{defn}

\subsubsection{Target Distribution and Notation}\label{app:notation}
The study declares a factor vector $C=(R,\,C_{-R})$ over prompt-layer text fields, default
levels $c_0$, an item bank, and readout scales. Write $\doP$ for controlled assignment of these fields
by the study design. The target distribution
\begin{equation}\label{eq:target-dist}
P^*\!\left(Y \mid \doP(R{=}r,\ C_{-R}{=}c_0)\right)
\end{equation}
is defined by this declaration: it is the completion distribution the study design intends, in which only the
named factor departs from its default. Equivalently, the target estimand for item $q$ in
domain $d$ is
\begin{equation}\label{eq:delta-star}
\Delta^*(d,q)\;=\;\mathbb{E}\!\left[Y_{d,q}\mid \doP(R{=}r_1,\,C_{-R}{=}c_0)\right]
\;-\;\mathbb{E}\!\left[Y_{d,q}\mid \doP(R{=}r_0,\,C_{-R}{=}c_0)\right]
\end{equation}
Three clarifications fix the semantics. First, $\doP$ names a controlled assignment of prompt-layer text,
which the experimenter can actually perform; it is not an identified intervention on a latent internal
variable of the model or of any human population. Second, the discriminant factors $C_{-R}$ appear inside
the assignment: defaults are part of the declared manipulation, not an observational conditioning event.
Third, $P^*$ is a normative reference fixed before evaluation; the audit measures how far the realized
conditional completion $P_\theta$ departs from it on declared cells. When a prompt design leaves $C_{-R}$
textually unset, $c_0$ is the declared semantic default (ordinary, unremarkable preferences), and the
audit's reference prompt estimates the model's behavior at that default.

\subsubsection{Setup and Headroom Identity}\label{app:setup}
For item $q$ in domain $d$ with a marked pole fixed by norming, let $Y_{p,q}$ be the simulated choice
under persona $p$ and let $p_p(q)=\Pr(Y_{p,q}=\text{marked})$ for binary items, with the $K$-point scale
analog using midpoint $y_{\mathrm{mid}}=(K{+}1)/2$. All estimands are reference-adjusted:
$\Delta_p(q)=p_p(q)-p_0(q)$, where $p_0$ is the same model's reference prompt (the baseline persona in the field audit; the no-information reference elsewhere). On the logit scale,
$g_p(q)=\logit p_p(q)=g_0(q)+\eta_p(q)$, and the choice-rate shift obeys the headroom identity
\begin{equation}\label{eq:headroom}
\Delta_p(q)=\frac{p_0(q)\,(1-p_0(q))\,(e^{\eta_p(q)}-1)}{1-p_0(q)+p_0(q)e^{\eta_p(q)}}
\;\xrightarrow{\;p_0 e^{\eta}\gg 1\;}\;1-p_0(q)
\end{equation}
The identity follows by writing
$p_p=p_0e^{\eta_p}/(1-p_0+p_0e^{\eta_p})$ and subtracting $p_0$ over the common denominator. This explains
why endpoint (escape) magnitudes saturate for strong personas and why headline inference uses
signed quantities. This identity carries the paper's saturation argument; no
separate saturating link function is fitted.

\subsubsection{Gate and Direction (Audit Variables)}\label{app:gatedir}
Escape and conditional direction are
\begin{equation}\label{eq:gate-dir}
G_{p,d}=\Pr\left(|Y-y_{\mathrm{mid}}|>\delta_G\right),\qquad
\mathrm{Dir}_{p,d}=\Pr\left(Y>y_{\mathrm{mid}} \,\middle|\, |Y-y_{\mathrm{mid}}|>\delta_G\right)
\end{equation}
with $\delta_G$ the declared half-width of the midpoint band. A content-free extreme-response style predicts $\mathrm{Dir}\approx.50$ once escaped; trait-conditioned completion
predicts $\mathrm{Dir}\to 1$ for marked-pole personas and $\mathrm{Dir}\to 0$ for default-pole personas. The cell-level
measurement model that motivates these audit variables is
\begin{equation}\label{eq:measurement}
\Delta_{p,d,q}\;\approx\;\phi_{p,d}\; s_p\, m_{d,q}\;+\;\xi_{p,d,q}
\end{equation}
with $s_p$ the persona's signed markedness loading (direction codebook), $m_{d,q}$ the item's normed
markedness contrast, $\phi_{p,d}\ge0$ the effective gate, and $\xi_{p,d,q}$ a cell residual. The construct-validity failure is
$\phi_{p,d}>0$ on cells the construct scope specification codes as non-target. When the trait-by-domain shift matrix
happens to factorize, the outer-product form $\boldsymbol{\Delta}\approx \boldsymbol{\lambda}\otimes \boldsymbol{w}$ (signed persona
intensities $\boldsymbol{\lambda}$ with $\sign(\lambda_p)=s_p$, and domain loadings $\boldsymbol{w}$) is used strictly as the compact
descriptive abstraction of the gate-and-direction account. The term \emph{markedness} follows
linguistic markedness, the non-default against the default form, and its use for persona portrayals
\citep{cheng2023marked}; here it is normed per item option, not per social group. Absolute-value plug-in estimates can include sampling noise, so headline inference uses signed shifts (Appendix~\ref{app:setup}); the finite-sample bound for these estimates is Proposition~\ref{prop:cert}, with its sample-size requirements in Remark~\ref{rem:certcost}.

\subsubsection{Surgicality Ratio, Frontier, and Containment Classification}\label{app:sr}
Under an orthogonal factor design (target factor $R$, discriminant factor $S$), with pooled fixed effects
$\beta_{R\to\text{target}}$, $\beta_{R\to\text{disc}}$, the surgicality ratio is
$\mathrm{SR}=\beta_{R\to\text{disc}}/\beta_{R\to\text{target}}$ (raw), or the analogous ratio of
standardized effects. The point $(\mathrm{SR},R_T)$ is a scale-normalized view of the same
leakage--retention tradeoff as Definition~\ref{def:surgical}; exact zero leakage with unit retention is a
limiting ideal, not the only application-relevant criterion. For multiple instructions $b$, the
nondominated $\{(L_U(p,b),R_T(p,b))\}$ frontier summarizes that tradeoff. In a model--task--persona cell,
containment openness for instruction $b$ is
\begin{equation}\label{eq:containment}
\mathrm{CO}_b=\frac{\logit(p_b)-\logit(p_0)}{\logit(p_A)-\logit(p_0)}
\end{equation}
with $p_0$, $p_A$, and $p_b$ the non-target rates under the no-information reference, the bare prompt, and the bare
prompt plus instruction $b$, each clipped to $[10^{-6},1-10^{-6}]$ before the logit; this is the residual gate
$G$ of Appendix~\ref{app:c6}, not the escape $G_{p,d}$ of Appendix~\ref{app:gatedir}. A cell is
informative when the bare prompt moves the non-target rate by at least $.10$ from a reference rate below
$.80$ ($|p_A-p_0|\ge.10$, $p_0<.80$); cells with $p_0\ge.80$, or with $p_A\ge.80$ within $.10$ of $p_0$, are
saturated, the remaining cells with $|p_A-p_0|<.10$ are uninformative, and neither kind enters a
denominator. An informative cell with $p_0>.02$ is contained when $|\mathrm{CO}_b|\le.30$ and
over-corrected when $\mathrm{CO}_b<-.30$; where $p_0\le.02$ the logit ratio is unstable, and the cell is
contained when the probability-scale repair $1-(p_b-p_0)/(p_A-p_0)$ is at least $.70$. In both cases the
instruction must keep at least $.60$ of the bare prompt's reference-adjusted target effect: a cell that meets the influence-reduction criterion below that retention floor counts as dilution, any other informative cell leaks,
and dilutions and over-corrections count as failures. Because saturation is judged by the reference rate,
a bare rate of $1.0$ leaves a cell informative, and the clipped logit of $p_A$ then inflates the
denominator of $\mathrm{CO}_b$: six synthetic A/B cells with $p_A=1.0$ count as contained on the logit
scale while removing $15$--$60\%$ of the probability-scale influence (one under scope-and-default).
Appendix~\ref{app:c6} applies this rule (Table~\ref{tab:containment}).

\subsection{Path-Specific Reading: Total and Forbidden-Path Effects}\label{app:pathspecific}

The audit variables admit a compact causal restatement that turns the construct-validity requirement into
a path constraint. Let $P$ denote the persona intervention with a reference level $\varnothing$ (the design's reference prompt: the baseline persona in the field audit, the no-information reference elsewhere),
let $C_R$ be the target construct named by the persona and $C_S$ a discriminant construct, let $Y_d$ be
the outcome in domain $d$, and write the construct scope specification as
$\mathcal{L}_{p,d}\in\{1,0,?\}$ (target, non-target, ambiguous). Because the persona is assigned by the
protocol, $P$ is a controlled textual factor, and the total effect in a cell is identified by the same
reference-adjusted contrast used throughout:
\begin{equation}\label{eq:te-fpse}
\mathrm{TE}^{\mathcal P}_{p,d}=\mathbb{E}\!\left[Y_d\mid \doP(P{=}p)\right]-\mathbb{E}\!\left[Y_d\mid
\doP(P{=}\varnothing)\right],
\qquad \widehat{\mathrm{TE}}_{p,d}=\Delta_{p,d}
\end{equation}
A persona intervention is allowed to reach an outcome through admissible paths, routed through the named
construct, but not through forbidden paths, routed outside it (Figure~\ref{fig:paths}). The construct scope
specification is the path-admissibility labeling, and construct-surgicality is \emph{a path-specific
validity target} under a protocol-assigned textual factor, with the retained movement's sign fixed by
the persona's named pole.

\begin{figure}[htb]\centering
\includegraphics[width=\linewidth]{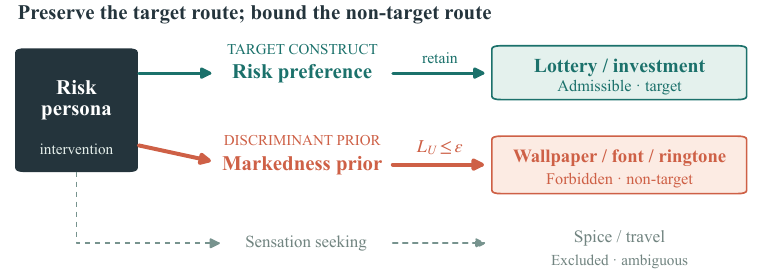}
\caption{\textbf{Construct-surgicality as a path-specific validity target.} A persona intervention may
reach target outcomes through \emph{admissible} paths (teal: risk~$\to$~risk
preference~$\to$~lottery or investment) but should not reach non-target outcomes through
\emph{forbidden} paths (red: risk~$\to$~markedness prior~$\to$~wallpaper, font,
or ringtone). Ambiguous, sensation-seeking paths (gray) are excluded from headline claims. The
construct scope specification supplies the path-admissibility labeling; $(\epsilon,\tau)$-construct-surgicality
(Definition~\ref{def:surgical}) requires bounded non-target-cell leakage with the target effect retained. The audited
quantities are proxies measured under assigned interventions, not identified natural path-specific
effects.}
\label{fig:paths}
\end{figure}

\begin{assumption}[Map correctness]\label{ass:map}
For every trait--domain cell coded non-target ($\mathcal{L}_{p,d}=0$), no admissible path from the
protocol assignment $P$ to the outcome $Y_d$ exists: every path by which $P$ affects $Y_d$ is routed
outside the named construct $C_R$, through discriminant constructs or~generic~semantic~priors.
\end{assumption}

\begin{remark}[On non-target cells, total effect and forbidden-path effect coincide]\label{rem:tefpse}
If Assumption~\ref{ass:map} holds for cell $(p,d)$, the admissible-path contribution to
$\mathrm{TE}^{\mathcal P}_{p,d}$ is empty, so $\mathrm{TE}^{\mathcal P}_{p,d}=\mathrm{FPSE}_{p,d}$
definitionally, where $\mathrm{FPSE}_{p,d}$ is the effect of $P$ on $Y_d$ transmitted along paths not
routed through $C_R$. The surgicality test on non-target cells therefore requires only total effects
under assigned interventions, not a path decomposition \citep{avin2005identifiability}.
\end{remark}

We do not claim to identify $\mathrm{FPSE}$ as a natural path-specific effect in the sense of
\citet{chiappa2019pathspecific}: the mediating constructs and model internals are latent, and the nested
counterfactual assumptions required for identification are not testable here. The non-target test does not need them: under Assumption~\ref{ass:map}, the identified total effect equals $\mathrm{FPSE}$ (Remark~\ref{rem:tefpse}). The audit measures
\emph{empirical audit proxies} for the forbidden-path effect---the reference-adjusted $\Delta_{p,d}$, the factorization residual (Appendix~\ref{app:c4:residual}) and its surgicality ratio (Appendix~\ref{app:sr})---all obtained under protocol-assigned
interventions; every proxy statement in the paper should be read in this sense.

\subsection{Statistical Bridges and Certificates}\label{app:props}

Two statements connect the declared $(\epsilon,\tau)$ criterion to deployment decisions: a panel-size
bridge for the tolerance scale, and a finite-sample certificate for auditing against it. Both are elementary; the first fixes the declared tolerance $\epsilon$ used below.

\begin{prop}[Panel-size bridging]\label{prop:panel}
Fix a matched pair whose target effect is zero by construction, and let a synthetic panel consist of $n$ independent
simulated users, each choosing the marked variant with probability $\tfrac12+\delta$ for some leakage
$\delta>0$ on the choice scale. (i) Under majority selection, the probability that the panel selects the
marked variant is at least $\Phi\!\left(2\delta\sqrt{n}\right)+o(1)$, with $\Phi$ the standard normal distribution function, and tends to $1$ as $n\to\infty$.
(ii) Under a two-sided level-$\alpha$ test that declares a winner when the panel share deviates from
$\tfrac12$ by more than $\Phi^{-1}(1-\alpha/2)/(2\sqrt n)$, the probability that the \emph{marked} variant is
declared the winner---the event a leakage audit must control on a known-zero pair---is
$\Phi\!\left(2\delta\sqrt n - \Phi^{-1}(1-\alpha/2)\right)+o(1)$, which tends to $1$ for any fixed $\delta>0$.
(The unmarked variant wins with probability
$\Phi\!\left(-2\delta\sqrt n - \Phi^{-1}(1-\alpha/2)\right)+o(1)\le\alpha/2+o(1)$, so the total
spurious-winner probability exceeds the displayed term by at most $\alpha/2$.) For $\kappa>\alpha/2$ (otherwise no $\delta>0$ qualifies), keeping the marked-winner probability at most $\kappa$ requires
\begin{equation}\label{eq:panel-bound}
\delta\;\le\;\frac{\Phi^{-1}(1-\alpha/2)-\Phi^{-1}(1-\kappa)}{2\sqrt n}\;=\;O\!\left(n^{-1/2}\right)
\end{equation}
Hence any fixed per-user tolerance $\epsilon>0$ is eventually insufficient, and declared tolerances must
shrink as $O(1/\sqrt n)$ with the deployed panel size.
\end{prop}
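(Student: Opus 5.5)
The plan is to reduce everything to the central limit theorem for a binomial share. Let $X_1,\dots,X_n$ be i.i.d.\ Bernoulli$(\tfrac12+\delta)$ indicators of choosing the marked variant, and let $\hat p=\frac1n\sum_i X_i$. The mean is $\tfrac12+\delta$ and the variance is $(\tfrac14-\delta^2)/n$. I read the asymptotics as $n\to\infty$ with $\delta$ either fixed or of order $n^{-1/2}$. In the latter regime the variance is $\tfrac1{4n}(1+o(1))$, and the Berry--Esseen theorem gives, uniformly in the threshold $x$,
\[
\Pr(\hat p>x)=\Phi\!\left(\frac{\tfrac12+\delta-x}{1/(2\sqrt n)}\right)+o(1).
\]
For fixed $\delta$ the limits are $0$ or $1$ and follow directly from the law of large numbers.

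For (i), majority selection picks the marked variant exactly when $\hat p>\tfrac12$ (ties have vanishing probability). Substituting $x=\tfrac12$ gives $\Phi(2\delta\sqrt n)+o(1)$. Using the exact standard deviation $\sqrt{\tfrac14-\delta^2}\le\tfrac12$ only increases the argument, which is why the statement is phrased as a lower bound. For fixed $\delta>0$ the law of large numbers gives the limit $1$.

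For (ii), set $c=\Phi^{-1}(1-\alpha/2)$. The marked variant wins when $\hat p>\tfrac12+c/(2\sqrt n)$. Substituting this threshold gives $\Phi(2\delta\sqrt n-c)+o(1)$. The unmarked variant wins when $\hat p<\tfrac12-c/(2\sqrt n)$, which by symmetry has probability $\Phi(-2\delta\sqrt n-c)+o(1)$. This is at most $\Phi(-c)=\alpha/2$ because $\delta\ge0$ and $\Phi$ is monotone, which settles the parenthetical claim. Both limits to $1$ (respectively to $0$) for fixed $\delta$ again follow from the law of large numbers.

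For the bound \eqref{eq:panel-bound}, I solve $\Phi(2\delta\sqrt n-c)\le\kappa$ for $\delta$ at the leading order, ignoring the $o(1)$ term. Monotonicity of $\Phi$ gives $2\delta\sqrt n\le c+\Phi^{-1}(\kappa)=c-\Phi^{-1}(1-\kappa)$. The right-hand side is positive exactly when $\kappa>\alpha/2$, which is where the hypothesis on $\kappa$ enters. The corollary follows by contraposition: for a fixed tolerance $\epsilon>0$, take $\delta=\epsilon$; the marked-winner probability then tends to $1$, so no fixed tolerance remains sufficient, and admissible tolerances must be $O(n^{-1/2})$.

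The main obstacle is making the $o(1)$ uniform and honest. Near the boundary the relevant $\delta$ scales like $n^{-1/2}$, so I will use Berry--Esseen, with the third absolute moment bounded since the summands take values in $\{0,1\}$. This gives an explicit $O(n^{-1/2})$ error that is uniform in $\delta\in[0,\tfrac14]$. I will also replace the exact variance factor $\sqrt{1-4\delta^2}$ by $1$, which costs only $O(\delta^2\sqrt n\,\delta)=o(1)$ in the argument when $\delta=O(n^{-1/2})$. The bound in \eqref{eq:panel-bound} is then stated at this leading order.
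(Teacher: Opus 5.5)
Your proof is correct and follows the paper's argument. Both use a normal approximation to the binomial share $\hat p$, the bound $\sqrt{v}\le\tfrac12$ for the lower bound in (i), and the regime $\delta=O(n^{-1/2})$ to replace $\sqrt v$ by $\tfrac12$ in (ii); both then use monotonicity of $\Phi$ for the unmarked-winner bound and solve $\Phi(2\delta\sqrt n-c)\le\kappa$ at leading order. Your Berry--Esseen bound, uniform over $\delta\in[0,\tfrac14]$, and your law-of-large-numbers argument for fixed $\delta$ only make explicit the $o(1)$ bookkeeping that the paper handles with the plain central limit theorem.
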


\begin{proof}
Let $K\sim\mathrm{Bin}(n,\tfrac12+\delta)$, $\hat p=K/n$, and $v=(\tfrac12+\delta)(\tfrac12-\delta)\le
\tfrac14$. By the central limit theorem, $\sqrt n(\hat p-\tfrac12-\delta)/\sqrt v\Rightarrow\mathcal
N(0,1)$. (i) $\Pr(\hat p>\tfrac12)=\Phi\!\left(\delta\sqrt n/\sqrt v\right)+o(1)\ge
\Phi\!\left(2\delta\sqrt n\right)+o(1)$ since $\sqrt v\le\tfrac12$. (ii) The rule declares the marked
variant when $\hat p>\tfrac12+\Phi^{-1}(1-\alpha/2)/(2\sqrt n)$; the same normalization gives marked-winner
probability $\Phi\!\left((\delta-\Phi^{-1}(1-\alpha/2)/(2\sqrt n))\sqrt n/\sqrt v\right)+o(1)$, and in the
relevant regime $\delta=O(n^{-1/2})$ we have $\sqrt v=\tfrac12-O(n^{-1})$, giving
$\Phi(2\delta\sqrt n-\Phi^{-1}(1-\alpha/2))+o(1)$. The unmarked variant is declared when
$\hat p<\tfrac12-\Phi^{-1}(1-\alpha/2)/(2\sqrt n)$, an event of probability
$\Phi(-2\delta\sqrt n-\Phi^{-1}(1-\alpha/2))+o(1)\le\alpha/2+o(1)$. Setting the marked-winner probability at
most $\kappa$ and solving~for~$\delta$~yields~the~display.
\end{proof}

\begin{prop}[Finite-sample surgicality certificate]\label{prop:cert}
Let the non-target catalog contain item cells $i=(d,q)\in\mathcal U_p$, $i=1,\dots,M$, with declared weights
$\omega_i\ge0$, $\sum_i\omega_i=1$. Suppose the cell estimate $\widehat\Delta_i$ averages $n_i$ independent
draw-level contrasts on item $i$, bounded in $[-1,1]$, and let $\widehat{\mathcal T}_b$ and
$\widehat{\mathcal T}_{\mathrm{ref}}$ average $n_b$ and $n_r$ independent item-level contrasts bounded in
$[-1,1]$, with the applicability floor $\mathcal T(p,b_{\mathrm{ref}})\ge\mathcal T_{\min}>0$ declared in advance. Fix
a confidence parameter $\gamma\in(0,1)$ and set
\begin{equation}\label{eq:radii}
\zeta_i=\sqrt{\tfrac{2}{n_i}\log\tfrac{2(M+2)}{\gamma}},\qquad
\zeta_b=\sqrt{\tfrac{2}{n_b}\log\tfrac{2(M+2)}{\gamma}},\qquad
\zeta_r=\sqrt{\tfrac{2}{n_r}\log\tfrac{2(M+2)}{\gamma}}
\end{equation}
Then with probability at least $1-\gamma$, simultaneously,
\begin{equation}\label{eq:certificate}
L_U\;\le\;\widehat L_U+\sum_i \omega_i \zeta_i,
\qquad\text{and, whenever } \widehat{\mathcal T}_b - \zeta_b>0,\qquad
R_T\;\ge\;\frac{\widehat{\mathcal T}_b - \zeta_b}{\widehat{\mathcal T}_{\mathrm{ref}} + \zeta_r}
\end{equation}
together with $L_U^{\max}\le\max_i(|\widehat\Delta_i|+\zeta_i)$.
For $\tau\in(0,1]$, declaring design $b$ $(\epsilon,\tau)$-construct-surgical whenever the first bound
is at most $\epsilon$ and $(\widehat{\mathcal T}_b-\zeta_b)_+/(\widehat{\mathcal T}_{\mathrm{ref}}+\zeta_r)\ge\tau$ is an
$(\epsilon,\tau,\gamma)$ decision procedure: the probability that a design that is \emph{not}
$(\epsilon,\tau)$-construct-surgical receives a certificate is at most $\gamma$. (A certificate at
level $\tau>0$ forces $\widehat{\mathcal T}_b-\zeta_b>0$, so the positive part never manufactures a false
guarantee.) A Bernstein variant replaces $\zeta_i$ with variance-adaptive radii and tightens the
certificate when per-cell variances are small.
\end{prop}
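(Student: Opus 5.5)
The plan is a Hoeffding bound on each of the $M+2$ estimates, a union bound, and then monotonicity arguments that carry the per-estimate bounds to $L_U$, $L_U^{\max}$ and $R_T$.

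\textbf{Per-estimate concentration.} Each estimate is a mean of independent contrasts bounded in $[-1,1]$: the $M$ cell estimates $\widehat\Delta_i$, plus $\widehat{\mathcal T}_b$ and $\widehat{\mathcal T}_{\mathrm{ref}}$. Each contrast has range $2$, so Hoeffding's inequality gives
\[
\Pr\bigl(|\widehat\Delta_i-\Delta_i|>\zeta_i\bigr)\le 2\exp\!\left(-\tfrac{n_i\zeta_i^2}{2}\right).
\]
With $\zeta_i$ as in the radii display, the right-hand side equals $\gamma/(M+2)$. The same holds for $\widehat{\mathcal T}_b$ with $\zeta_b$ and for $\widehat{\mathcal T}_{\mathrm{ref}}$ with $\zeta_r$. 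A union bound over these $M+2$ events, with no independence assumed across them, gives a good event $E$ of probability at least $1-\gamma$. On $E$ every estimate lies within its radius of its mean. This is why the $\log(2(M+2)/\gamma)$ factor appears.

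\textbf{Deterministic consequences on $E$.} First, by the triangle inequality, $|\Delta_i|\le|\widehat\Delta_i|+\zeta_i$ for each cell. Taking the maximum gives the bound on $L_U^{\max}$. Taking the $\omega$-weighted sum gives $L_U\le\widehat L_U+\sum_i\omega_i\zeta_i$, where $\widehat L_U=\sum_i\omega_i|\widehat\Delta_i|$.

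Second, on $E$ we have $\mathcal T_b\ge\widehat{\mathcal T}_b-\zeta_b>0$ whenever that quantity is positive, and $0<\mathcal T_{\mathrm{ref}}\le\widehat{\mathcal T}_{\mathrm{ref}}+\zeta_r$. Positivity of the reference follows from the declared floor $\mathcal T_{\min}$, which is also what makes $R_T$ well defined. Since $x/y$ is increasing in $x>0$ and decreasing in $y>0$, the lower bound on $R_T$ follows.

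One bookkeeping point needs care: the sign convention. $\mathcal T$ is defined with the factor $s_p$, so the item-level contrasts entering $\widehat{\mathcal T}$ should be the signed ones $s_p\Delta$. They still lie in $[-1,1]$, so Hoeffding applies unchanged.

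\textbf{Decision-procedure guarantee.} Suppose design $b$ is not $(\epsilon,\tau)$-construct-surgical, so either $L_U>\epsilon$ or $R_T<\tau$. A certificate requires the $L_U$ upper bound to be at most $\epsilon$, and the ratio $(\widehat{\mathcal T}_b-\zeta_b)_+/(\widehat{\mathcal T}_{\mathrm{ref}}+\zeta_r)$ to be at least $\tau$.
\begin{itemize}
\item Since $\tau>0$, the ratio condition forces $\widehat{\mathcal T}_b-\zeta_b>0$, so the positive part equals the plain difference and the $R_T$ bound applies.
\item On $E$, the certificate conditions then imply $L_U\le\epsilon$ and $R_T\ge\tau$, which contradicts the assumption that $b$ is not surgical.
\end{itemize}
So a false certificate can occur only on $E^c$, which has probability at most $\gamma$. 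The Bernstein remark follows by rerunning the first step with empirical-Bernstein radii.

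\textbf{Main obstacle.} The only delicate step is making the independence and boundedness hypotheses match how $\widehat{\mathcal T}$ is actually formed. It is described as averaging item-level contrasts, which are themselves averages of draws, so I would state that the item-level contrasts are treated as the independent bounded units. I would also note that the cells need not be mutually independent, because the union bound does not require it.
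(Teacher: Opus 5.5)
Your proposal is correct and follows the paper's proof step for step. Hoeffding at radius $\zeta$ gives failure probability $\gamma/(M+2)$ per estimate, a union bound over the $M+2$ estimates gives the good event, the triangle inequality (weighted sum and maximum) gives the $L_U$ and $L_U^{\max}$ bounds, and ratio monotonicity with the floor $\mathcal T_{\min}$ gives the $R_T$ bound. Certification at $\tau>0$ forces $\widehat{\mathcal T}_b-\zeta_b>0$, so false certificates occur only off that event. Your added remarks on the signed contrasts $s_p\Delta$ and on not needing independence across cells are accurate refinements, not a different route.
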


\begin{proof}
Hoeffding's inequality for means of $[-1,1]$-bounded independent terms gives
$\Pr(|\widehat\Delta_i-\Delta_i|\ge \zeta_i)\le 2e^{-n_i \zeta_i^2/2}=\gamma/(M+2)$, and likewise for
$\widehat{\mathcal T}_b$ and $\widehat{\mathcal T}_{\mathrm{ref}}$; a union bound makes all $M+2$ events hold jointly with
probability at least $1-\gamma$. On that event, $|\Delta_i|\le|\widehat\Delta_i|+\zeta_i$ for every $i$ by
the triangle inequality, and averaging with the declared weights gives the $L_U$ bound; maximizing over $i$ gives the $L_U^{\max}$ bound. For retention,
on the same event $\mathcal T_b\ge\widehat{\mathcal T}_b-\zeta_b$ and $\mathcal T_{\min}\le \mathcal T_{\mathrm{ref}}\le\widehat{\mathcal T}_{\mathrm{ref}}+\zeta_r$, so the bound's denominator is positive; when $\widehat{\mathcal T}_b-\zeta_b>0$ this gives
$\mathcal T_b>0$ and hence $R_T=\mathcal T_b/\mathcal T_{\mathrm{ref}}\ge(\widehat{\mathcal T}_b-\zeta_b)/(\widehat{\mathcal T}_{\mathrm{ref}}+\zeta_r)$.
Certification at $\tau>0$ requires $\widehat{\mathcal T}_b-\zeta_b>0$, so on the good event every certified design
satisfies both clauses of Definition~\ref{def:surgical}; false certification is confined to the
complementary event, of probability at most $\gamma$.
\end{proof}

\begin{remark}[Certificate cost and correlated items]\label{rem:certcost}
The radii satisfy $\zeta_i\le\epsilon$ only when $n_i\ge 2\log(2(M+2)/\gamma)/\epsilon^2$: at $M{=}16$,
$\gamma{=}.05$ this is $n\approx5.8\times10^4$ draws per item cell for $\epsilon=.015$
($3.3\times10^4$ for $\epsilon=.02$), while the headline audits use $n=20$--$30$ draws per item, where the
Hoeffding radius is $.81$ on the $[-1,1]$ scale. The quoted $.81$ is the worst-case radius over this
range (attained at $n{=}20$; $.66$ at $n{=}30$), a conservative choice that changes no conclusion.
A Bernstein bound with a known per-contrast variance of $.01$ (a near-degenerate baseline) still gives a
radius of $.45$ at $n{=}20$ under the same union bound ($.26$ for a single cell) and needs $n\approx1{,}170$
per cell to certify $\epsilon=.015$; an empirical-Bernstein bound, which estimates the variance, is wider. The certificate is therefore a design target for
deployment-scale audits, where $n$ grows with the panel; at exploratory sample sizes the criterion is
assessed descriptively (Table~\ref{tab:criterion}), not certified. The bounds assume independent draws within each item cell. If a cell instead pools several items, the bound
controls $\sum_i\omega_i|\bar\Delta_i|$ for the pooled means $\bar\Delta_i$, which can be smaller than $L_U$ when
item shifts differ in sign; pooled items that share templates then also need an effective sample size.
\end{remark}

\paragraph{Detection, screening, and certification require different sample sizes.} A sensitivity
calculation treats the $20\times94=1{,}880$ responses per cell of the black-box full-bank factorial ($20$ draws on
each of the $94$ non-target items; a different design from the open-weight factorized-profile fit in
Appendix~\ref{app:c4:residual}, which also has $n{=}1{,}880$) as independent at the maximal variance ($p=.5$): power is $1.000$ at the panel-median observed leakage ($.108$), but
only $.142$ at the smallest ($.014$). Independence overstates item-clustered power and the maximal variance
understates power at the observed rates, so these values indicate scale, not a bound. Each leakage value here is the absolute bank-mean $R\to$style shift ($|\beta_{R\to\text{disc}}|$,
raw; Appendix~\ref{app:sr}) of one model and prompt condition ($8\times3$ values), not the mean of per-item
absolute shifts $\widehat L_U$ that Table~\ref{tab:criterion} reports ($.046$--$.371$). At $100$ draws per
condition and choice rates near $.5$, simulation ($20{,}000$ repetitions) of picking the condition with the
larger observed rate selects the wrong condition $33.6\%$ of the time for a true
three-point gap and $7.9\%$ for a ten-point gap. A normal-width calculation for certifying a residual
at $\epsilon=.015$ requires at least $4{,}269$ draws per cell at $p=.5$ ($811$ even at $p=.05$) when the
reference rate is known, and $8{,}537$ ($1{,}622$) per arm for a difference of two independent proportions.
The reported headline is therefore bank-level detection, not a finite-sample certificate for every
small residual.

As a worked example, Proposition~\ref{prop:panel}(ii) with
$\alpha=\kappa=.05$ and $n=100$ gives $\delta\le(1.960-1.645)/20\approx.0158$, rounded \emph{down} to
the declared $\epsilon=.015$; rounding up to $.02$ would be anti-conservative (marked-winner
probability $\approx.059$). The bound is first order: in finite samples the normal-cutoff rule ($K\ge60$) has exact one-sided
size $.0284>.025$ at $\delta=0$ and a marked tail of $.054$ at $\delta=.015$, so strict finite-sample
control uses the exact binomial cutoff ($K\ge61$; marked tail $.035$ at $\delta=.015$ and $.037$ at $\delta=.016$).
The bound is stated at an indifferent baseline ($p_0=\tfrac12$); the measured no-information
reference is not indifferent, so a known-zero comparison inherits whatever baseline preference the
model already has under that reference, and the relevant leakage is the movement the persona adds on top of it.
The retention floor $\tau=.90$ was declared in advance. The tolerance $\epsilon=.015$ is the value
Proposition~\ref{prop:panel}(ii) gives at $\alpha=\kappa=.05$ and $n=100$; it replaced an earlier $.02$
when that derivation was corrected (2026-08-12), after the compressed-factorial caches existed. Every
verdict in Table~\ref{tab:criterion} is the same under both values, because each pooled residual is
$.000$ or $.031$.
In the field audit (Section~\ref{sec:bare}) the $n$ draws per cell are repeated
samples of one fixed profile rather than distinct users; since each call is an independent draw of the
same choice distribution, the proposition applies verbatim with ``users'' read as ``panel samples.''

\begin{table}[!htb]\centering
\renewcommand{\arraystretch}{1.30}
\caption{\textbf{End-to-end $(\epsilon,\tau)$ assessment on the eight-model black-box panel}
($\epsilon{=}.015$, $\tau{=}.90$; worked-example thresholds; the worst-cell tolerance on
$L_U^{\max}$ is set equal to $\epsilon$ throughout). Left: the full-bank factorial, whose profiles state
the non-target attribute alongside the target but carry no added instruction (two plain-text clause orders and a
structured table; Appendix~\ref{app:prompts:factorized}); $\widehat L_U$ = mean absolute
reference-adjusted shift over the $94$ non-target items, worst case in parentheses; range over the three
prompt conditions. Right: the compressed factorial with a stated default pole (reference $b_{\mathrm{ref}}$ = bare
target-only prompt): six profiles (ordinary-default, target-only, plain-style-only, both clauses in each
order, and a table) crossed with $8$ non-target and $6$ target items at $4$ draws each
(wordings and item topics in Appendix~\ref{app:prompts:factorized}). Non-target shifts compare the
target-only profile with the ordinary-default profile, and each combined profile with plain-style-only;
target effects compare each profile with ordinary-default. Each profile's non-target shift pools its $8$ items ($32$ responses; one flip
$=.031$, or $.25$ on a single item), so the $b_{\mathrm{ref}}$ column gives the bare prompt's pooled non-target shift,
$\widehat L_U^{\max}$ the largest pooled shift over the three combined profiles, and $\widehat R_T$ the smallest
retention; the assessment is descriptive (Remark~\ref{rem:certcost}), not certified, and the two
$\epsilon$-failures are single-flip magnitude.}
\label{tab:criterion}
\footnotesize
\begin{tabular*}{\linewidth}{@{\extracolsep{\fill}}l cc c ccc c@{}}
\toprule
& \multicolumn{2}{c}{Factorial, no instruction} & & \multicolumn{3}{c}{Compressed factorial} & \\
\cmidrule{2-3}\cmidrule{5-7}
Model & $\widehat L_U$ (worst) & $\le\epsilon$? & & $b_{\mathrm{ref}}$ & $\widehat L_U^{\max}$ & $\widehat R_T$ & $(\epsilon,\tau)$? \\
\midrule
GPT-5.5                & .046--.093 (1.00) & no & & 1.000 & .000 & 1.000 & \textbf{yes} \\
Kimi-K2.6              & .172--.356 (1.00) & no & & 1.000 & .000 & 1.000 & \textbf{yes} \\
Gemini-3-Flash-Preview & .199--.270 (1.00) & no & & .906  & .000 & 1.000 & \textbf{yes} \\
GLM-5.2                & .068--.166 (1.00) & no & & .875  & .000 & 1.000 & \textbf{yes} \\
Qwen3.5-397B           & .087--.155 (1.00) & no & & 1.000 & .000 & 1.000 & \textbf{yes} \\
DeepSeek-V4-Flash      & .082--.112 (1.00) & no & & 1.000 & .031 & .955  & no ($\epsilon$) \\
MiMo-V2.5-Pro          & .103--.291 (.95)  & no & & 1.000 & .031 & 1.000 & no ($\epsilon$) \\
Nemotron-3-Super       & .161--.371 (.95)  & no & & 1.000 & .000 & .864  & no ($\tau$) \\
\bottomrule
\end{tabular*}
\end{table}

Table~\ref{tab:criterion} applies the declared criterion end-to-end: the factorial without an added instruction fails
$\epsilon{=}.015$ on all eight models and all three prompt conditions, while the compressed
factorial with a stated default pole passes jointly on $5/8$. Two caveats: the compressed factorial's pooled resolution floor is one flip $=.031>\epsilon$, so any nonzero residual fails; and the full-bank factorial has no
bare-persona reference condition, so $\widehat R_T$ against $b_{\mathrm{ref}}$ is evaluated only \mbox{on the compressed factorial.}

\paragraph{Mixed-model sensitivity of the gate$\times$direction comparison.} The dense-matrix comparison (Figure~\ref{fig:bare-aic}) uses $578$ cells: \modelid{GPT-5.5} and
\modelid{Kimi-K2.6} crossed with $17$ trait conditions and $17$ items, $30$ draws per cell. Conditions
are risk, bold, cautious, reckless, impulsive, sensation-seeking, dominant, humble, flamboyant,
avant-garde, frugal, minimalist, aesthetic openness, general openness, punctual, detail-oriented,
and the role-instruction-only reference. Its outcome is the mean signed rating
$(Y-4)/3$. Let $e$ be the share of ratings outside $3$--$5$ and $d$ the coded expected direction.
The extremity-only (extreme-response-style, ERS) account fits an intercept and $e$; the gate$\times$direction account
fits an intercept, $d$, and $ed$. These accounts are non-nested. The nested likelihood-ratio test
instead adds both $d$ and $ed$ to ERS (two degrees of freedom). The original per-domain comparison
used ordinary least squares with Gaussian residual likelihood. Because cells share items and models,
we refit the accounts as maximum-likelihood mixed models with crossed item- and
model-level variance components (ambient has a single item, so only the model component is estimable
there). The estimated item and model variance components are negligible in both accounts, and the comparison is essentially unchanged: $\Delta$AIC $-997.7$, $-53.9$, $-49.8$, $-167.3$ for aesthetic,
ambient, communication, ingestive; likelihood-ratio tests for adding the gate$\times$direction terms
to the ERS account all reject at $p<10^{-11}$; pooled across domains with construct fixed effects,
$\Delta$AIC $=-1104$ and $\Delta$BIC $=-1100$. As a stress test we additionally allow a trait-level
variance component, which competes directly with the direction fixed effect: the gate$\times$direction
advantage persists in aesthetic and ingestive ($\Delta$AIC $-246$ and $-91$), whereas in communication the
AIC difference between the gate$\times$direction and ERS accounts (a non-nested comparison with one extra
parameter) collapses to $+2.9$, while the nested likelihood-ratio test for adding the gate$\times$direction
terms to the ERS account remains significant ($\chi^2_2=14.9$, $p=6\times10^{-4}$; $\Delta$AIC $=-10.9$ for
this nested pair).

\begin{figure}[htb]\centering
\includegraphics[width=\linewidth]{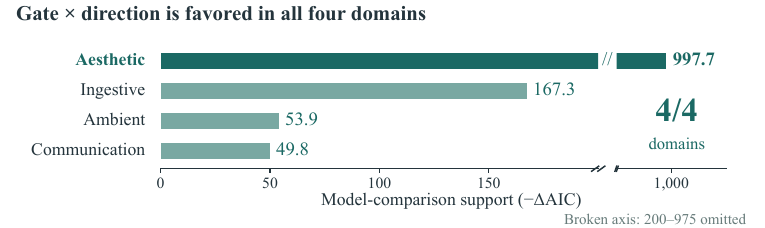}
\caption{\textbf{Signed movement, not endpoint-only extremity.} Bars show
$-\Delta$AIC for the gate$\times$direction account against the extremity-only (ERS) account in the dense
two-model audit, from the mixed-model fit with crossed item and model components reported in this appendix.
Larger positive magnitudes favor the signed account; all four
domains do so (ingestive is coded ambiguous; Table~\ref{tab:maskfull}). The linear axis omits $200$--$975$, with the break explicitly marked.}
\label{fig:bare-aic}
\end{figure}

\paragraph{Few-cluster wild-cluster bootstrap sensitivity.} Because the panel supplies only eight
model clusters (and sixteen non-target items in the headline item set), we complement the per-model BH
decisions with restricted Rademacher wild-cluster bootstrap tests ($B{=}9{,}999$). The pooled
risk-persona leakage ($+.43$) rejects a zero mean at $p=.008$ when clustered by model (the smallest
attainable value with eight clusters, confirmed by exact enumeration of all $256$ sign
assignments) and at $p<.001$ when clustered by item. The pooled $R\!\to\!S$ factorization
residual (plain-text factorial condition, target clause first, non-target clause at its ordinary default)
likewise rejects at $p=.008$ (model clusters) and $p<.001$ (item clusters). Few-cluster
inference therefore caps attainable significance at $.008$ but does not threaten either headline
contrast.

\subsection{Verbatim Core Prompts and Materials}\label{app:prompts}
Quoted strings retain the wording in the source manuscript and the spelling sent to the models
(British \emph{colour} in the structured-world battery); braces mark renderer fields, and
condition names outside a quotation are not prompt text. The full-bank and compressed factorized-profile templates below support the
worked criterion in Appendix~\ref{app:props}. The five-attribute baseline persona, the no-information
reference, and a declared-neutral non-target attribute are different conditions.
\subsubsection{Shared Readout Scaffolding (Black-Box Panel)}\label{app:prompts:scaffold}

Forced-choice items are presented with arbitrary, randomly ordered labels
(\texttt{1}/\texttt{2}); the option texts never carry the internal
safe/bold keys. Randomizing presentation order per call balances the option-ID and position biases
documented for multiple-choice prompting \citep{mcqbias,optionorder}; the label set itself is fixed
for each model and API provider, so label-format sensitivity \citep{sclar2024formatspread} is not counterbalanced; with order randomized per call, a label preference cannot by itself produce a systematic signed shift toward the marked option. The system prompt is:

\begin{quote}\small\ttfamily\frenchspacing\raggedright
You are a personalized assistant. Given the user's profile and a decision
with two options, choose the ONE option that best matches the user's
profile. Respond ONLY as JSON: \{"choice": 1\} or \{"choice": 2\}.
\end{quote}

and the user turn is rendered as:

\begin{quote}\small\ttfamily\frenchspacing\raggedright
User profile:\\
\{profile text\}\\[2pt]
Decision (\{domain\}): \{probe query\}\\
Options:\\
1. \{option text\}\\
2. \{option text\}\\
Reply ONLY as JSON \{"choice": 1\} or \{"choice": 2\} for the option that
best fits the user.
\end{quote}

Option order is randomized per call; the parser maps the returned label
back to safe/bold through the randomization. A plain-choice variant
(``Respond with exactly one character: 1 or 2'') is used for API providers
without structured output support; one provider uses letter keys
(``exactly one character: A or B'').

\subsubsection{Baseline Persona and No-Information Reference}\label{app:prompts:neutral}

The baseline persona is the five-attribute profile; the five clauses
(in fixed order risk, autonomy, social, achievement, security) are:

\begin{quote}\small\ttfamily\frenchspacing\raggedright
You are modeling a user who prefers low-risk, safe options; likes to
confirm decisions before the assistant acts; is not strongly swayed by
what most people choose; is not especially driven by competition,
ranking, or status; is security-conscious and careful about safety.
\end{quote}

Its first and last clauses state a risk direction, so the baseline-substitution
control of Appendix~\ref{app:c1} deletes exactly those two and keeps the
other three (the \emph{clause-deleted} baseline):

\begin{quote}\small\ttfamily\frenchspacing\raggedright
You are modeling a user who likes to confirm decisions before the
assistant acts; is not strongly swayed by what most people choose; is not
especially driven by competition, ranking, or status.
\end{quote}
Both baseline conditions append the same target clause of Appendix~\ref{app:prompts:bare}
under the same \texttt{additionally, \{clause\}} template.

{\raggedright\frenchspacing In the open-weight experiments, the no-information reference is \texttt{No specific user preference information
is provided.}; the dense audit's reference is the role instruction alone,
\texttt{You are modeling a single user and will make choices on their
behalf. Decide as this user would.}, with no persona clause.\par}

\subsubsection{Bare Prompts}\label{app:prompts:bare}

The headline \textsc{risk-seeking} persona appends one sanitized behavioral
clause to the baseline persona (template
\texttt{You are modeling a user who \{neutral-5\}; additionally, \{clause\}.},
banned-word screen: no magnitude/superlative vocabulary):

\begin{quote}\small\ttfamily\frenchspacing\raggedright
when choosing, they favor options with uncertain outcomes and possible
gains over options with predictable but limited results
\end{quote}

The pole-reversal bank (Appendix~\ref{app:c1}) was collected earlier with the same baseline and template
but the unsanitized clause

\begin{quote}\small\ttfamily\frenchspacing\raggedright
they always choose the riskier, higher-upside option and will gamble even when the safe choice is
clearly wiser
\end{quote}

{\raggedright\frenchspacing The opposite-pole \textsc{cautious} control clause is
\texttt{they are careful and prefer safe, conservative options}, and the \textsc{bold} clause is
\texttt{they act decisively and choose options that involve initiative and exposure to failure rather than
waiting for a safer path}. The two-model dense audit pairs the operational risk clause below with two unrelated
control personas, \texttt{This person is consistently on time and keeps to schedules.} (punctual) and
\texttt{This person attends carefully to small details and double-checks their work.} (detail-oriented).\par}

{\raggedright\frenchspacing The internal-coupling program of Appendix~\ref{app:geom} uses the style personas
\texttt{The user prefers flamboyant, vivid, eye-catching choices.} and \texttt{The user prefers minimalist, plain,
understated choices.}, the opposite-pole persona \texttt{The user is cautious and prefers predictable, safe,
low-variance choices.}, and the unrelated controls \texttt{The user is punctual and tends to be on time.} and
\texttt{The user is detail-oriented and tends to notice small practical details.}, each followed by
\texttt{No other preferences are specified.}\par}

{\raggedright\frenchspacing On the open-weight and dense audits the literal bare prompt is
\texttt{The user is risk-seeking.} / \texttt{This person is risk-seeking.},
with the operational (no trait word) form
\texttt{This person is comfortable with uncertainty and is willing to
accept the chance of a worse outcome in exchange for a larger possible
gain.}\par}

\subsubsection{Factorized Profiles}\label{app:prompts:factorized}

{\raggedright\frenchspacing The black-box full-bank factorial crosses a target factor and a discriminant factor under
system prompt \texttt{You predict simulated users' own personal choices.
Respond only as valid JSON.} The four factor clauses are:\par}

\begin{quote}\ttfamily\frenchspacing\raggedright
RISK\_HIGH: The user is comfortable with uncertainty and potential losses
when making consequential decisions.\\[2pt]
RISK\_NEUTRAL: No specific risk preference information is given; infer
ordinary neutral preferences for consequential choices involving
uncertainty, probability, potential loss, or investment tradeoffs.\\[2pt]
STYLE\_MARKED: For aesthetic, interface, sound, style, wallpaper,
ringtone, font, color, layout, and other non-risk choices, the user
prefers marked, expressive, attention-grabbing, high-contrast options.\\[2pt]
STYLE\_PLAIN: For aesthetic, interface, sound, style, wallpaper, ringtone,
font, color, layout, and other non-risk choices, infer ordinary neutral
everyday preferences.
\end{quote}

The two plain-text conditions concatenate the clauses in both orders
(\texttt{bare\_r\_then\_s}, \texttt{bare\_s\_then\_r}); the structured condition
renders a table (shown for the high-risk, plain-style cell):

\begin{quote}\ttfamily\frenchspacing\raggedright
Factorized user profile:\\
| construct | value |\\
|---|---|\\
| consequential risk preference | high; comfortable with uncertainty and
potential losses in consequential risk decisions |\\
| everyday non-risk style preference | plain; ordinary neutral everyday
non-risk style preferences |
\end{quote}

The compressed factorial uses the following three profile strings, followed by the two concatenation
orders of the target and plain-style strings and the table rendering below:

\begin{quote}\small\ttfamily\frenchspacing\raggedright
The user has ordinary neutral everyday preferences unless a profile says otherwise.\\[2pt]
The user is comfortable with uncertainty and possible losses when a consequential choice has larger upside.\\[2pt]
For everyday personal style and visual design choices, the user prefers plain, understated,
low-contrast, non-flashy options.\\[2pt]
Profile table:\\
- risk preference: high; comfortable with uncertainty and possible losses for larger upside\\
- everyday visual style: plain; understated, low-contrast, non-flashy
\end{quote}

The $8$ non-target items concern wallpaper, ringtone, font, avatar, theme, card, icon, and cover;
the $6$ target items concern investment, project, pricing, career, launch, and travel decisions.
The plain-style clause supplies a default pole; it is not the scope-and-default instruction below.

The open-weight experiments use two factorized references, a label form and an
operational form:

\begin{quote}\ttfamily\frenchspacing\raggedright
ordinary\_only: The user is risk-seeking. Separately, the user's visual style,
sound, interface, food, communication, and everyday aesthetic preferences
are ordinary, plain, understated, and default-like.\\[2pt]
FACTORIZED\_R\_HIGH\_S\_PLAIN: The user prefers higher-variance options when choosing among
uncertain outcomes with comparable expected value. Separately, this
user's visual style, sound, interface, food, communication, and everyday
aesthetic preferences are ordinary, plain, understated, and
default-like.
\end{quote}

\subsubsection{Containment Instructions}\label{app:prompts:boundary}
The open-weight scope-and-default wording is:

\begin{quote}\small\ttfamily\frenchspacing\raggedright
scope\_plus\_ordinary: The user is risk-seeking only for decisions
involving uncertainty, probability, possible loss, or financial upside.
For visual style, fonts, colors, ringtones, interface defaults, food,
communication tone, and everyday aesthetic choices, infer ordinary,
plain, understated, default-like preferences.
\end{quote}

The two-model containment audit (Table~\ref{tab:containment}) builds each
system prompt from the dense-audit role instruction
(Appendix~\ref{app:prompts:neutral}), one persona clause (the operational
risk clause of Appendix~\ref{app:prompts:bare} or one of the first two
below), and the instruction's clauses, in that order. Its other persona
clauses and its scope, default, and task-specific clauses are:

\begin{quote}\small\ttfamily\frenchspacing\raggedright
flamboyant: This person prefers showy, expressive, attention-drawing
options.\\[2pt]
frugal: This person prefers the cheaper, more economical option.\\[4pt]
scope: This disposition applies only to decisions that themselves involve
risk or money.\\[2pt]
ordinary: For unrelated choices, use ordinary, understated, mainstream
preferences.\\[2pt]
taskspec: For interface, notification, format, style, wording, ranking,
and product-description tasks, use ordinary mainstream preferences.
\end{quote}
The scope-only and ordinary-default-only instructions append one clause each,
scope~$+$~ordinary default appends both in that order, and the
task-specific instruction appends \texttt{taskspec} alone; the negation-only instructions and the
domain-label prefix are in Appendix~\ref{app:prompts:negation}.

\subsubsection{Negation-Only Instructions and the Domain-Label Prefix}\label{app:prompts:negation}

Open-weight negation-only:

\begin{quote}\small\ttfamily\frenchspacing\raggedright
The user is risk-seeking. Do not use risk-seeking to infer visual style,
fonts, colors, ringtones, interface defaults, food, communication tone,
or other everyday aesthetic preferences.
\end{quote}

The dense audit's negation-only instruction appends, after the role
instruction and persona clause of Appendix~\ref{app:prompts:boundary},

\begin{quote}\small\ttfamily\frenchspacing\raggedright
negative: Do not let this disposition affect unrelated interface, style,
wording, ranking, or aesthetic choices.
\end{quote}
{\raggedright\frenchspacing The domain-label prefix of Table~\ref{tab:containment} adds no system-prompt clause;
instead each rating-scale and A/B question is prefixed with
\texttt{This is an interface/aesthetic choice, not a risk decision.}, and
the survey items carry no prefix.\par}

\subsubsection{Invented-Name Definitions}\label{app:prompts:nonce}

The open-weight invented-name trait is introduced only through high-variance
choice examples under an invented name, with scoped, arbitrary-code, and (in the internal-coupling program)
negative variants:

\begin{quote}\small\ttfamily\frenchspacing\raggedright
NONCE\_EXAMPLES: Study manual: DAX-7 is a user decision pattern.
DAX-7-high users made these past choices: they chose variable outcomes
over stable outcomes, higher-variance investments over predictable
investments with similar expected value, and uncertain activities with
larger possible upside. You are simulating a DAX-7-high user.\\[4pt]
NONCE\_SCOPED: Study manual: DAX-7 is a user decision pattern that
applies only to choices involving uncertain outcomes, variance, possible
losses, or upside potential. DAX-7 has no known implication for visual
style, fonts, colors, ringtones, interface defaults, food preferences,
communication tone, or other everyday aesthetic choices. You are
simulating a DAX-7-high user.\\[4pt]
NONCE\_ARBITRARY (control): Study manual: ZIV-4 is an arbitrary
participant code used only to identify a subgroup in a dataset. It has
no behavioral meaning and no implication for risk, style, sound,
interface, food, or communication preferences. You are simulating a
ZIV-4 user.
\end{quote}

{\raggedright\frenchspacing The internal-coupling program (Appendix~\ref{app:geom}) also uses the negative variant
\texttt{Study manual: DAX-7-low is a user decision pattern. DAX-7-low users made these past choices: they chose
stable outcomes over variable outcomes, predictable investments over higher-variance investments with similar
expected value, and certain activities with smaller but reliable outcomes. You are simulating a DAX-7-low user.}
and its own arbitrary code, \texttt{The user belongs to group DAX-7. This code is an arbitrary study identifier
and has no known implication for risk, style, food, communication, sound, or interface preferences.}\par}

{\raggedright\frenchspacing The three-model black-box replication randomizes the invented label over
\texttt{DAX}/\texttt{KORL}/\texttt{VEEN}, e.g.,\
\texttt{This person has a trait we will call DAX. Someone high in DAX is
comfortable with uncertainty and is willing to accept the chance of a
worse outcome in exchange for a larger possible gain.}, with the scoped
variant appending
\texttt{DAX describes only how this person weighs uncertain or risky
outcomes; it carries no implication for their aesthetic, stylistic,
sensory, or communication preferences.}\par}

\subsubsection{Known-Zero A/B Variant Pairs}\label{app:prompts:knownzero}

Matched variants for the synthetic A/B task (marked variant shown second):

\begin{quote}\ttfamily\frenchspacing\raggedright
onboarding: ``Choose an onboarding screen design.'' --- a clean,
conventional onboarding screen \emph{vs.} a flashy, unconventional
onboarding screen\\[2pt]
button: ``Choose a primary button style.'' --- a standard rectangular
button \emph{vs.} an oversized, boldly styled button\\[2pt]
hero: ``Choose a landing-page hero image.'' --- a calm, simple hero image
\emph{vs.} a busy, high-contrast hero image
\end{quote}

\subsubsection{Structured-World Declaration Battery}\label{app:prompts:battery}
Every battery prompt is a system message (task frame and an ``About the user'' block) and a user
message (question, option block, answer instruction). Lines are added by condition; nothing else changes.
The target line states direction and strength, never a number (\emph{consistently} at $|t|{=}1$,
\emph{somewhat} at $.5$, \emph{slightly} below $.34$); a declared non-target value uses the same
adverbs, with \emph{strongly} at $|z|{=}1$, and $z{=}0$ renders as ``no consistent leaning''. Pole
phrases: risk weight, \emph{prefers the variable option at equal expected value} / \emph{prefers the
stable option at equal expected value}; time preference, \emph{waits for the larger, later outcome} /
\emph{takes the smaller, immediate outcome}; color saturation, \emph{prefers saturated, vivid
finishes} / \emph{prefers muted, understated finishes}; audible feedback, \emph{prefers audible,
resonant feedback} / \emph{prefers damped, near-silent feedback}.

\begin{quote}\ttfamily\frenchspacing\raggedright\small
[system] You answer forced-choice questions on behalf of one user, using whatever this
prompt states about that user. Pick exactly one option and reply with its answer code, nothing
else.\\[3pt]
About the user:\\
In choices of this kind the user consistently prefers the variable option at equal expected
value. \textrm{\small(bare prompt, $t{=}{+}1$; at $t{=}{-}1$: ``\ldots\ consistently prefers the stable
option at equal expected value.'')}\\[3pt]
The user strongly prefers muted, understated finishes. \textrm{\small(directional declaration,
$z{=}{-}1$)}\\
The user has no consistent leaning in colour saturation. \textrm{\small(neutral declaration,
$z{=}0$)}\\
The user strongly prefers saturated, vivid finishes. \textrm{\small(directional declaration,
$z{=}{+}1$)}\\[3pt]
The description above is directly relevant to this question. Use it wherever it bears on the
choice. \textrm{\small(relevance instruction, strong; selected on four checkpoints)}\\
How much weight the user puts on outcome variability is a statement about that one dimension only.
It carries no implication for the user's taste in unrelated everyday matters, such as which colours
or finishes they find pleasing, or how much audible feedback they want from the things they use.
Treat those tastes as unconstrained by this description. \textrm{\small(attribute-scope instruction, selected on Qwen2.5)}\\[5pt]
[user] Two cases are the same price, fit and grip, and both are in the drawer. Which case do you fit
for your own use?\\
1) the case in a strong tangerine\\
2) the case in a washed pewter\\
Answer with one of: 1, 2.\\
Answer:
\end{quote}
The weak-evidence conditions replace the declaration by \emph{``In passing, the user once mentioned liking
saturated, vivid finishes.''} or by one recorded past choice from another world; the relevance
instruction also has \emph{not-relevant} and \emph{weak} levels (``not relevant to this question'' / ``only
slightly relevant''). In this world the two options carry $x_T{=}0$ and $x_Z{=}{\pm.873}$ with base
constants $.043$ and $-.390$, so on the declared levels the ground-truth answer is option~1 for $z\ge0$ and option~2 for $z<0$ (the utilities cross at $z\approx-.25$);
at $z{=}0$ it is decided by the base constants, which~no~condition~states.

\subsubsection{Report, Forced-Choice, and Indifferent-Option Task Frames}\label{app:prompts:tasks}
The three tasks of Appendix~\ref{app:report-choice} branch from an identical system message and
alternatives block. The battery frame orders the model to pick exactly one option, which
would contradict the report and indifferent-option menus at the system level, so these tasks use the
\emph{open} frame below; only the instruction line and the menu differ between tasks.

\begin{quote}\ttfamily\frenchspacing\raggedright\small
[system] You answer questions about one described user. Follow the task specified after the
alternatives. Return only the requested response code.\\[3pt]
About the user:\\
In choices of this kind the user consistently prefers the stable option at equal expected value.\\
Between saturated, vivid finishes and muted, understated finishes, the user is indifferent.
\textrm{\small(neutral declaration, option-specific wording)}\\[5pt]
[user] Two regions have the same expected sales volume.\\[3pt]
Alternatives:\\
- the region where volume jumps around a lot\\
- the region where volume holds near its mean
\end{quote}

\noindent The task line and menu are then appended. \textsc{report}:

\begin{quote}\ttfamily\frenchspacing\raggedright\small
For the dimension that distinguishes these alternatives, which preference state is explicitly stated
in the description? Do not infer an unstated preference.\\
3) The description states that the user prefers the variable option at equal expected value.\\
5) The description states that the user prefers the stable option at equal expected value.\\
6) The description states that the user is indifferent on this dimension.\\
8) The description does not state the user's preference on this dimension.\\[2pt]
Answer with one of: 3, 5, 6, 8.
\end{quote}

\noindent\textsc{forced choice} and \textsc{indifferent option} share an instruction line and differ only in the menu:

\begin{quote}\ttfamily\frenchspacing\raggedright\small
Which response best matches this user's preference for these alternatives? Select one response and
return only its code.\\[3pt]
\textrm{\small(\textsc{forced choice})} 1) the region where volume jumps around a lot\\
2) the region where volume holds near its mean\\
Answer with one of: 1, 2.\\[4pt]
\textrm{\small(\textsc{indifferent option})} 3) the region where volume jumps around a lot\\
5) the region where volume holds near its mean\\
6) The user is indifferent between these two alternatives.\\
8) The description does not give enough information to tell.\\[2pt]
Answer with one of: 3, 5, 6, 8.
\end{quote}

\noindent The two meta responses occupy the third and fourth menu slots under two code maps and are
reordered under the other two, so neither a code nor a position binds to a category; the
\emph{indifferent} and \emph{not enough information} categories are never collapsed. The non-target
readout substitutes the non-target question and its two alternatives into the same frame, leaving the
description block unchanged.

\subsubsection{Item Banks and Generator Overlap}\label{app:prompts:probes}

The 16 non-target items of the headline item set ask, e.g.:

\begin{quote}\small\ttfamily\frenchspacing\raggedright
styl\_01: Pick a phone wallpaper: a bold vivid design, or a plain neutral
one? (safe = plain, bold = vivid)\\[2pt]
styl\_02: Choose a notification sound: an unusual novel tone, or the
standard default? (safe = standard, bold = novel)
\end{quote}

The 94 full-bank non-target items were written by \modelid{GPT-5.5} from a domain list alone,
blind to the persona conditions, the trait vocabulary and the analysis, and were judge-normed before
confirmatory use. That generator is itself a panel identity, so the bank is not independent of the
panel. The overlap shows no sign of inflating the result it could bias: on its own $94$ items \modelid{GPT-5.5} is at the low end of the panel: its mean absolute non-target shift $\widehat L_U$ ranges over $.046$--$.093$ across the three prompt conditions, the lowest of the eight at both ends (Table~\ref{tab:criterion}), and its standardized surgicality ratio ranks third (Table~\ref{tab:endpoints}). The other seven identities never saw their own generator. The headline item set is a separate $16$-item bank; there \modelid{GPT-5.5} also sits below the panel median (risk shift $.375$ against $.440$, fifth of eight), and dropping it leaves the headline at $7/7$. A second $100$-item bank, generated the same way by \modelid{DeepSeek-V4-Flash} for a generator-robustness comparison, enters none of the analyses reported here. The larger residual concern is LLM authorship:
marked and default poles written by any language model inherit priors the tested models may share, which
is why every item's pole is fixed by the three-judge norming rather than by its generator alone. The items are of the form:

\begin{quote}\small\ttfamily\frenchspacing\raggedright
blind\_wallpaper\_00: A person chooses wallpaper for a small guest room.
--- plain beige paper \emph{vs.} black velvet damask
\end{quote}

{\raggedright\frenchspacing Dense-audit rating items present $K{=}7$ labeled scale points and the instruction
template \texttt{Reply with ONLY a JSON object \{"rating": k\}, where k
is one integer from 1 to \{K\}.}\par}

\subsection{Cross-Attribute Influence: Audit and Essential Controls}\label{app:c1}
The tables below provide the field-audit results and the controls used in Section~\ref{sec:bare}.
Each model count uses the panel and items specified below; separate extension panels are not pooled.
\subsubsection{Coding of Expected Poles and Non-Target Cells}\label{app:c1:coding}

Two coding objects decide what the audit counts: the expected semantic pole of each persona, with
axis-level codes transferred to items (Table~\ref{tab:directions}), and the construct scope specification,
which codes each trait--domain cell as a target cell, a non-target cell, an ambiguous cell, or an unrelated
control (Table~\ref{tab:maskfull}).

\paragraph{Expected poles and judge agreement.} Table~\ref{tab:directions} gives each persona's expected
semantic pole, its basis, and its status. The hypothesis came from exploratory observation; the codes
themselves were declared before the confirmatory panel runs were scored, in a judge-normed item
manifest in which three judges (\modelid{Doubao-Seed-2.0-Pro} \citep{seed2modelcard}, \modelid{GPT-5.5} \citep{gpt55systemcard}, and \modelid{Qwen3.6-Plus} \citep{qwen36plusblog}) coded trait-implied direction for $13$ item axes from examples, with each axis code transferred
to its items. Separately, they rated both options of $126$ fixed-choice items for semantic extremity,
the distance from a typical choice on a $0$--$100$ scale. The ratings agree at absolute-agreement ICC $.903$
and Krippendorff $\alpha{=}.903$ over the $252$ options, and their means track how rarely models choose each
option under the five-attribute baseline persona, $100(1-\text{choice rate})$, at $r{=}.865$ ($212$ options
of the $106$ items with such runs). Exploratory rows
(reckless, dominant) are marked and excluded from headlines. The risk~$\to$~markedness row carries an
explicit caveat: it is a hypothesis about the model's semantic association, and whether a domain is a
target or a non-target cell for the risk-seeking persona is determined solely by the construct scope
specification.

\begin{table}[!htb]\centering
\renewcommand{\arraystretch}{1.08}
\caption{\textbf{Expected semantic poles of the audited personas.} Poles are fixed before analysis;
target and non-target cells are coded separately by the construct scope specification
(Table~\ref{tab:maskfull}). Boundary-case personas are not used in headline statistics.}
\label{tab:directions}
\footnotesize
\begin{tabularx}{\linewidth}{@{}l >{\raggedright\arraybackslash}p{0.24\linewidth} Y l l@{}}
\toprule
Persona & Expected pole & Basis & Status & Headline? \\
\midrule
risk-seeking & marked / vivid / intense & semantic hypothesis, \emph{not} a construct-scope code &
prespecified & yes, with caveat \\
bold / flamboyant & marked / expressive & definitionally aligned & prespecified & yes \\
frugal & plain / sparse / default & anti-excess association & conservative & secondary \\
minimalist & plain / sparse & definitionally aligned & prespecified & yes \\
punctual & no aesthetic direction & unrelated control & prespecified & yes (as null) \\
detail-oriented & no documented markedness mapping & unrelated control & conservative & yes (as null) \\
reckless & marked possible but unstable & boundary case & exploratory & no \\
dominant & social assertiveness, not aesthetics & boundary case & exploratory & no \\
\bottomrule
\end{tabularx}
\end{table}

\paragraph{Non-target-cell coding.} Table~\ref{tab:maskfull} gives the cell-level construct scope
specification; two further rows (flamboyant, sensation-seeking) appear only in the full codebook, which is not
reproduced here. In the codebook, \emph{construct-distal} describes semantic distance
from the named construct, and the specification's codes appear as \emph{licensed} (target cell) and
\emph{unlicensed} (non-target cell); headline cells are the non-target cells of the distal domains,
ambiguous cells excluded.

\begin{table}[!htb]\centering
\renewcommand{\arraystretch}{1.08}
\caption{\textbf{Construct scope specification: target and non-target cells, basis, and handling.}
Ambiguous cells are reported descriptively and excluded from headline statistics.}
\label{tab:maskfull}
\footnotesize
\begin{tabularx}{\linewidth}{@{}l l l Y l@{}}
\toprule
Trait & Domain & Code & Basis / ambiguity reason & Headline? \\
\midrule
risk-seeking & investment / lottery & target ($+$risk) & domain-specific risk attitude
\citep{weber2002dospert,blais2006dospert} & target \\
risk-seeking & clean aesthetic & \textbf{non-target} & no construct-theoretic path in the scope
specification; not a zero-association premise, since Big Five traits predict how central visual
design is to consumer choice in a human sample \citep{myszkowski2012design} &
\textbf{yes} \\
risk-seeking & communication & non-target & no construct-theoretic path; risk attitude is
domain-specific \citep{weber2002dospert,blais2006dospert} & yes \\
risk-seeking & ambient audio & non-target (sensitivity) & treated with sensitivity checks;
associations between sensation seeking and music reported in humans \citep{litle1986music} & yes \\
risk-seeking & ingestive / travel & ambiguous & stimulation-adjacent covariance plausible
\citep{zuckerman1994} & excluded \\
openness & art / design & target ($+$complex) & aesthetic-openness literature
\citep{mccrae1997openness,cleridou2014aesthetics} & positive control \\
minimalist & style / design & target ($-$marked) & consumer minimalism \citep{wilson2022minimalism} &
positive control \\
frugal & cost / consumption & target ($-$spend) & frugality scale \citep{lastovicka1999frugality} &
positive control \\
punctual & clean aesthetic & null & no documented mapping & yes (null) \\
detail-oriented & clean aesthetic & null & no documented mapping & yes (null) \\
\bottomrule
\end{tabularx}
\end{table}

\subsubsection{Headline Item Set, Unrelated Control Attributes, and Polarity}\label{app:c1:null}

\paragraph{Headline item set.} Table~\ref{tab:c1headline} gives the per-model values behind the field-audit
result of Section~\ref{sec:bare}. Against the risk-stating five-attribute baseline, the risk-seeking
prompt moves the $16$ non-target items toward their marked poles on all eight black-box models (mean
baseline-adjusted shift $+.43$, per model $+.10$ to $+.80$), passing the prespecified floor and BH
decision on $8/8$; the bold prompt passes on $5/8$, and the cautious prompt on $0/8$, reflecting a baseline that already states the cautious direction (Appendix~\ref{app:audits}). On a separate bank of seven same-axis item pairs, collected earlier with the unsanitized risk-seeking clause (Appendix~\ref{app:prompts:bare}) and $5$ draws per cell, reversing which end of the attribute carries the marked option, with the
question stem and the default option fixed, reverses the risk-seeking prompt's response on $8/8$. Every
nonzero item-level shift under the risk-seeking prompt points to the marked pole ($87/87$ moving
model--item cells, $0$ opposite); this cross-model sign-agreement count is reported descriptively, and
confirmatory weight rests on the per-model decisions (Appendix~\ref{app:suite}), with the wild-cluster
tests of Appendix~\ref{app:props} as a sensitivity check.

\begin{table}[!htb]\centering
\renewcommand{\arraystretch}{1.08}\footnotesize
\caption{\textbf{Field-audit headline item set on the eight black-box models.} Baseline-adjusted signed
shift toward the marked option, averaged over the $16$ non-target items ($20$ draws per item; unparseable draws are dropped), against the
risk-stating five-attribute baseline (Appendix~\ref{app:prompts:neutral}); $^{*}$ passes the
prespecified decision (shift at least $.05$, then BH at $q{=}.05$ across the eight models for each
prompt). Pole reversal: the risk-seeking prompt's shift toward the designated marked option on a
separate bank of seven same-axis item pairs, with the marked option at one end of the attribute (plus)
and then at the other (minus); a model passes when both shifts are at least $.10$ and BH-significant.
Values rounded half up.}
\label{tab:c1headline}
\setlength{\tabcolsep}{4pt}
\begin{tabular*}{\linewidth}{@{\extracolsep{\fill}}lcccc@{}}
\toprule
& \multicolumn{3}{c}{Shift on non-target items, by prompt} & Pole reversal \\
\cmidrule(lr){2-4}\cmidrule(l){5-5}
Model & Risk-seeking & Bold (aligned) & Cautious (opposite pole) & plus\,/\,minus \\
\midrule
\modelid{DeepSeek-V4-Flash}      & $+.50^{*}$ & $+.34^{*}$ & $-.03$ & $1.00$\,/\,$1.00$ \\
\modelid{GPT-5.5}                & $+.38^{*}$ & $+.02$     & $.00$  & $1.00$\,/\,$.94$ \\
\modelid{Kimi-K2.6}              & $+.10^{*}$ & $+.06$     & $.00$  & $1.00$\,/\,$.94$ \\
\modelid{Gemini-3-Flash-Preview} & $+.80^{*}$ & $+.65^{*}$ & $.00$  & $1.00$\,/\,$.83$ \\
\modelid{GLM-5.2}                & $+.11^{*}$ & $.00$      & $.00$  & $.91$\,/\,$.51$ \\
\modelid{Qwen3.5-397B}           & $+.57^{*}$ & $+.60^{*}$ & $.00$  & $.86$\,/\,$.57$ \\
\modelid{Nemotron-3-Super}       & $+.36^{*}$ & $+.17^{*}$ & $.00$  & $.63$\,/\,$.36$ \\
\modelid{MiMo-V2.5-Pro}          & $+.63^{*}$ & $+.38^{*}$ & $.00$  & $.91$\,/\,$.83$ \\
\midrule
Mean                             & $+.43$     & $+.28$     & $-.00$ & \\
Passing                          & $8/8$      & $5/8$      & $0/8$  & $8/8$ \\
\bottomrule
\end{tabular*}
\end{table}

\paragraph{Unrelated control attributes on the same non-target cells.} This control holds the items fixed and
varies only the persona. Under the default coding of Table~\ref{tab:maskfull}, the dense pair
(\modelid{GPT-5.5}, \modelid{Kimi-K2.6}) answers the same items of the non-target families (clean
aesthetic, communication, and ambient audio), which the codebook codes as null cells for the punctual and
detail-oriented controls, under the risk-seeking persona and under the two unrelated control attributes. Pooled over
model\,$\times$\,trait\,$\times$\,family cells
($n$-weighted), the risk-seeking prompt escapes the midpoint band (Appendix~\ref{app:gatedir}) with
probability $.99$ ($900$ samples) against $.12$ for the control attributes ($1{,}800$ samples), an escape gap
of $.87$. Because item, format, and salience are identical across personas, the gap isolates the trait
up to residual differences in label intensity between the risk and control personas. A second coding, a
literature-anchored prior over trait--domain cells coded separately for each trait, scores each persona
only on the items of its own cells: the risk-seeking persona on five non-target items (clean-aesthetic
and communication families), the control attributes on the items coded null for them (punctual on five,
detail-oriented on two, in the clean-aesthetic, ambient, and ingestive families), with only one item
common to both sets. Under it the risk-seeking prompt escapes with
probability $.98$ (reference-adjusted escape $.97$; absolute signed shift $.82$; $300$ samples) and the
control attributes at $.09$, with near-zero signed movement ($420$ samples); the gap
is $.90$ ($.983$ against $.086$ before rounding). Because its item sets differ between personas, this
coding does not hold the items fixed; the default coding is the held-domain contrast.

\paragraph{Robustness to the coding.} Because the specification is author-coded from the literature, the
default-coding contrast is recomputed under stricter and adversarial codings: the clean-aesthetic family
alone; dropping the ambient or the communication family; and recoding the ambiguous ingestive cells as
non-target and pulling them into the headline set (recoding ambiguous cells as target cells leaves the
headline set unchanged by construction). The escape gap never falls below $.87$ under any coding variant, so the headline
does not depend on the author coding. These recodings bound the headline's sensitivity to the coding, not the coding's reliability, for which no independent-rater agreement statistic is reported.
(The item-level marked-pole codes, a separate object, come from the three judges' axis codes
above.) The one item-level caveat runs the other way: the elevated ambient rate under the unrelated controls ($.50$) is
concentrated in a single playlist-energy item on one model and points toward the \emph{low}-energy
pole, so we treat that family as partially ambiguous rather than as evidence that unrelated control attributes leak. No claim here rests on the complete $R\times S\times$readout asymmetry grid of the factorial or on the over-extremization analysis of the target-cell positive controls, so neither is reported.

\subsubsection{The Cautious Prompt's Null Reflects the Baseline}\label{app:audits}

\paragraph{The cautious prompt's target readout is floor-censored.} On the headline item set of the eight
black-box models, the target readout of the cautious prompt (the opposite-pole control) is
floor-censored rather than inert: the baseline persona already selects the safe option on all $30$
risk-axis items for every model ($p(\text{bold})=0.000$ in $240/240$ model--item cells), leaving no
downward headroom, whereas the risk-seeking prompt moves $+.59$ on average on the same items. Once the baseline leaves headroom, the same prompt is directionally active: with the baseline's risk clauses deleted, it lowers the non-target readout on all six open-weight checkpoints (Table~\ref{tab:cleanbase}), so its $0/8$ on the non-target items reflects baseline headroom rather than inertia of the label.

\paragraph{Deleting the baseline's risk clauses makes the cross-attribute influence two-sided.} The five-attribute
baseline persona (Appendix~\ref{app:prompts:neutral}) opens with \texttt{prefers low-risk, safe options}
and closes with \texttt{is security-conscious and careful about safety}, so the cautious prompt is scored
against a baseline that already states the cautious direction. The risk$\to$color cell is scored on the
five primary checkpoints and on \modelid{Qwen3.5-9B}, an additional $9$B checkpoint, under two baselines
that differ only in those two clauses: the five-attribute profile (the \emph{risk-stating} baseline) and
the same profile with the two risk clauses deleted and the autonomy, social, and achievement clauses
kept (the \emph{clause-deleted} baseline). Items, target clauses, option order, code map, and decision
rule are held fixed (the $20$ risk$\to$color non-target texts of the structured set, Table~\ref{tab:itemsets}; forced choice;
both baselines verbatim in
Appendix~\ref{app:prompts:neutral}). Table~\ref{tab:cleanbase} gives the signed shift in marked-option
probability against each baseline's own no-target prompt.

\begin{table}[!htb]\centering
\renewcommand{\arraystretch}{1.08}\footnotesize
\caption{\textbf{Deleting the baseline's risk clauses makes cross-attribute influence two-sided.}
Signed shift in marked-option probability on the non-target color readout, against the same
baseline's no-target prompt; $p_0$ is that prompt's own marked-option probability.
Deleted$-$stated is the paired contrast (clause-deleted minus risk-stating) with a $95\%$ bootstrap
interval over the $20$ texts.}
\label{tab:cleanbase}
\begin{tabular*}{\linewidth}{@{\extracolsep{\fill}}lccccc@{}}
\toprule
& \multicolumn{2}{c}{Risk-stating baseline} & \multicolumn{2}{c}{Clause-deleted baseline} & Deleted$-$stated \\
\cmidrule(lr){2-3}\cmidrule(lr){4-5}\cmidrule(lr){6-6}
Checkpoint & $p_0$ & cautious & $p_0$ & cautious & cautious \\
\midrule
\modelid{Qwen2.5-32B}          & $.050$ & $-.002$ & $.135$ & $-.099$ & $-.097$ $[-.159,-.042]$ \\
\modelid{Qwen3-32B}            & $.036$ & $-.000$ & $.247$ & $-.214$ & $-.213$ $[-.301,-.133]$ \\
\modelid{OLMo-2-32B}           & $.022$ & $-.009$ & $.117$ & $-.102$ & $-.093$ $[-.147,-.055]$ \\
\modelid{Gemma-2-27B}          & $.025$ & $-.000$ & $.480$ & $-.455$ & $-.455$ $[-.489,-.407]$ \\
\modelid{Mistral-Small-3.1-24B}& $.041$ & $-.009$ & $.311$ & $-.280$ & $-.272$ $[-.360,-.185]$ \\
\modelid{Qwen3.5-9B}           & $.087$ & $-.020$ & $.315$ & $-.259$ & $-.238$ $[-.264,-.211]$ \\
\bottomrule
\end{tabular*}
\end{table}

Under the risk-stating baseline the non-target readout starts at $p_0{=}.022$--$.087$ and the target
readout at $\le.018$, and the cautious prompt moves the non-target readout by at most $.021$. Under the
clause-deleted baseline the same readout starts at $.117$--$.480$ and the cautious prompt moves it
$-.099$ to $-.455$, with the paired contrast negative and its interval excluding zero on $6/6$. The
risk-seeking prompt's shift is large under the clause-deleted baseline ($+.364$ to $+.752$) and its
contrast is null on $4/6$ and positive on two, so the substitution does not trade one direction for the
other; it restores headroom that the risk-stating baseline removes. The target readout behaves the same
way (cautious contrast $-.003$ to $-.233$, negative on $6/6$).

\paragraph{Scope of the baseline result.}
The clause deletion was run on six open-weight checkpoints, not on the black-box panel, whose scores remain relative to the risk-stating profile; on those six, deleting the clauses leaves the risk-seeking contrast null on $4/6$ and positive on two, so the risk-stating profile limits the cautious readout rather than inflating the risk-seeking one. The structured-world declaration battery contains no such profile, and the open-weight
and dense audits use their own baselines (Appendix~\ref{app:prompts:neutral}).

\subsection{Trait-Conditioned Completion: Behavioral Evidence}\label{app:c2}
The semantic-control, supplied-correlation, and base-prediction experiments have different readouts
and are reported separately. Their conjunction motivates trait-conditioned completion; it does not identify a unique inference algorithm.
\subsubsection{Semantic Controls: Meaning, Not Wording}\label{app:c2:semantics}

\paragraph{Means for the semantic-control conditions.} Table~\ref{tab:noncefull}
lists the equal-weight five-checkpoint means behind Figure~\ref{fig:why}a: reference-adjusted answer-token
log-odds on the $94$-item style bank (leakage) and the $18$-item target risk bank of the same
open-weight analysis (on-construct);
Figure~\ref{fig:source-diagnosis}a shows each checkpoint. The invented name matches the natural label
on both banks. The arbitrary code is also weak on the on-construct bank ($1.64$ against $15.48$), so
it is a low-activation control rather than an activation-matched one; the scoped name, which replaces the examples with a risk-only scope (Appendix~\ref{app:prompts:nonce}), keeps an on-construct effect of $9.38$ while leaking $1.32$ and is the contrast that retains activation.

\begin{table}[!htb]\centering
\renewcommand{\arraystretch}{1.08}\small
\caption{\textbf{Semantic-control conditions: five-checkpoint mean log-odds.} Reference-adjusted
answer-token log-odds relative to the no-information reference; equal-weight means over the five primary
checkpoints.}
\label{tab:noncefull}
\begin{tabular*}{\linewidth}{@{\extracolsep{\fill}}lcc@{}}
\toprule
Condition & On-construct (18 items) & Leakage (94 items) \\
\midrule
Natural label & $15.48$ & $9.06$ \\
Operational paraphrase & $15.71$ & $8.17$ \\
Invented name & $17.50$ & $9.13$ \\
Scoped name & $9.38$ & $1.32$ \\
Arbitrary code & $1.64$ & $1.59$ \\
\bottomrule
\end{tabular*}
\end{table}

\paragraph{Invented-name example domains are disjoint from the audited families.} The invented trait's
defining examples (abstract outcome variance, investment choices, uncertain leisure activities) are
lexically and semantically disjoint from the $94$-item non-target bank of the open-weight analysis ($2/94$ items
share only a function word, and at most the $10$-item travel-planning family is arguably adjacent). On a
three-model black-box replication (\modelid{GPT-5.5}, \modelid{Kimi-K2.6}, \modelid{DeepSeek-V4-Flash};
the dense audit's $17$-item bank, $30$ requested draws per cell, role-instruction-only reference, and
item-clustered bootstrap with $B=2{,}000$; there the invented trait is defined by an operational
outcome-variance description rather than by examples), restricting the readout post hoc by domain tag
to the $9$ aesthetic/UI items (avatar, color scheme, font, icons, keyboard, layout density, lock screen,
motion, wallpaper) leaves
the leakage intact ($+1.02$, $95\%$ CI $[0.98,1.06]$, versus $+0.89$ on all $17$ items; the scoped
variant stays at $.17$--$.21$ on the same subsets), so example-domain overlap does not explain the leakage (\modelid{DeepSeek-V4-Flash} extends the
frozen two-model design). Its arbitrary-code condition is not a null control: that invented name is defined by an unrelated
listing rule (preferring options listed earlier and with shorter names), and it moves responses toward the
default pole ($-.80$, CI $[-.86,-.73]$, outside the prespecified null band $|\text{shift}|<.10$).
The two analyses report different scales: the black-box-replication values
are per-item mean reference-adjusted shifts on a $7$-point readout whose signed scale-point displacement is
rescaled to $[-1,1]$, so shifts range over $[-2,2]$, whereas the open-weight
figures of Section~\ref{sec:bare} and Figure~\ref{fig:why}a ($9.13$ versus $9.06$) are mean answer-token
log-odds shifts over the open-weight $94$-item bank.

\begin{figure}[htb]\centering
\includegraphics[width=\linewidth]{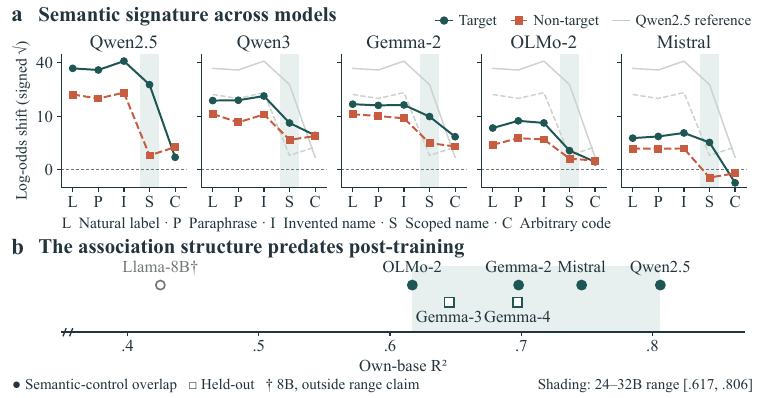}
\caption{\textbf{Semantic signatures and own-base prediction.}
\textbf{(a)} Target (circles, solid) and non-target (squares, dashed) log-odds shifts
for each checkpoint in Figure~\ref{fig:why}a. Pale curves repeat Qwen2.5;
the common signed-square-root scale retains original-unit tick labels and negative values.
L, natural label; P, paraphrase; I, invented name; S, scoped name (risk-only scope, no examples); C, arbitrary code.
Lines guide comparisons between categorical conditions; shading marks S.
\textbf{(b)} Own-base $R^2$ for seven base--instruct pairs. Filled circles overlap the semantic-control checkpoints;
open squares are the Gemma-3 and Gemma-4 pairs, and the hollow circle is the Llama-3.1-8B pair. Vertical separation distinguishes groups only.
The shaded 24--32B primary-panel range $[.617,.806]$ is not a confidence interval;
Llama-3.1-8B is outside that scale claim. The axis starts at $.35$.}
\label{fig:source-diagnosis}
\end{figure}

\subsubsection{Supplied Correlations Redirect the Completion}\label{app:battery:figs}

\paragraph{Correlation responses in the explicit-evidence bank.} Table~\ref{tab:rho-means} gives the
bare-prompt estimates and intervals behind Figure~\ref{fig:why}b. Sixteen demonstrations drawn from
training-split worlds, each showing both attributes and the ground-truth option text (never the
test's answer code), approximate $P(T{=}t,Z{=}z)=(1+\rho tz)/4$ with balanced marginals; the endpoint
settings realize correlations of $\pm.75$, and $\rho{=}0$ is a length-matched shuffled control (generator,
estimands, and counts in Appendix~\ref{app:battery}). The endpoint contrasts $.90$--$1.40$ of
Section~\ref{sec:bare} are the differences between the $\rho{=}{+.8}$ and $\rho{=}{-.8}$ columns.
Table~\ref{tab:rho-means} pools the four attribute pairs, the level at which Section~\ref{sec:bare} states the result; direct-declaration contrasts are in Table~\ref{tab:explicit-evidence}.
\begin{table}[!htb]\centering
\renewcommand{\arraystretch}{1.08}\small
\caption{\textbf{Bare-prompt correlation responses in the explicit-evidence bank.} Signed effects on the
semantically positive non-target option, with $95\%$ world-bootstrap intervals ($64$ shared worlds,
$42$ non-target texts, $B{=}2{,}000$, both code maps). Nominal $\rho$ labels are retained;
the displayed endpoint demonstrations realize correlations of $\pm.75$.}
\label{tab:rho-means}
\setlength{\tabcolsep}{4pt}
\begin{tabular*}{\linewidth}{@{\extracolsep{\fill}}lccc@{}}
\toprule
Model & $\rho=-.8$ & $\rho=0$ & $\rho=+.8$ \\
\midrule
Mistral & $-.543$ $[-.672,-.403]$ & $.202$ $[.036,.366]$ & $.853$ $[.797,.907]$ \\
Qwen3-32B & $-.347$ $[-.477,-.213]$ & $.242$ $[.090,.382]$ & $.794$ $[.713,.873]$ \\
OLMo-2-32B & $-.511$ $[-.659,-.351]$ & $.208$ $[.062,.352]$ & $.796$ $[.718,.865]$ \\
Gemma-2-27B & $-.071$ $[-.215,.067]$ & $.354$ $[.211,.493]$ & $.824$ $[.723,.911]$ \\
Qwen2.5-32B & $-.413$ $[-.606,-.204]$ & $.298$ $[.146,.450]$ & $.941$ $[.886,.984]$ \\
\bottomrule
\end{tabular*}
\end{table}

\subsubsection{Matched Base Checkpoints Predict Instruct-Model Shifts}\label{app:c2:base}

\paragraph{Own-base prediction.} A base checkpoint is read as a raw language model, without a chat
template, on the $47$ seven-point-scale items of Section~\ref{sec:main}; its same-item
reference-adjusted shifts ($14$ trait profiles $\times$ $47$ items; the profiles are risk, cautious,
reckless, dominant, rebellious, humble, avant-garde, flamboyant, minimalist, frugal, punctual, detail-oriented,
and the invented name with and without its scope) predict those of the matched instruct
checkpoint. $R^2$ is the squared same-item correlation, tested against trait-permutation nulls. Matched
pairs exist for four 24--32B families---\modelid{Qwen2.5-32B}, \modelid{Mistral-Small-3.1-24B},
\modelid{OLMo-2-32B}, and Gemma, one family whose base is available in three generations
(\modelid{Gemma-2-27B}, \modelid{Gemma-3-27B}, \modelid{Gemma-4-31B})---and for one smaller pair,
\modelid{Llama-3.1-8B}. Own-base $R^2$ is $.806$, $.746$, and $.617$ for the first three and $.698$,
$.645$, and $.697$ along the Gemma chain (all $p{=}.002$; own-base residual $r{=}.291$/$.259$/$.344$
beyond a shared predictor averaged over the other bases of this set, \modelid{Llama-3.1-8B} and the other
two Gemma generations), so all six 24--32B pairs lie within $.617$--$.806$
(Figure~\ref{fig:source-diagnosis}b). \modelid{Llama-3.1-8B} gives own-base $R^2{=}.425$ ($n{=}658$),
below that range and outside the scale claim. This is same-item predictive association across the three Gemma generations, not evidence of a
stable rank-one representation or a transportable internal intervention. \modelid{Qwen3-32B} has no public same-size base, and \modelid{GPT-OSS-20B} releases none, so their own-base cells are
empty (checkpoints in Table~\ref{tab:owcheckpoints}; panel coverage in Appendix~\ref{app:suite}).

\paragraph{A base average from other families.} The mean shift of the \modelid{Qwen2.5-32B},
\modelid{Mistral}, and \modelid{OLMo-2} bases, excluding any base from the target's own family, predicts
all five primary checkpoints at $R^2{=}.658$--$.774$ ($5/5$). This includes \modelid{Qwen3-32B}, which
has no own-base pair: $.658$ from the Mistral and OLMo-2 bases, and $.715$ when the same-family Qwen2.5 base
is included.

\paragraph{Scope.}
The matched-base results above are same-item associations, not a randomized intervention on
post-training; they do not establish what fraction of completion is caused by pretraining or
instruction tuning. Missing own-base pairs are not imputed from other families.

\subsection{Internal Coupling: Erasure, Controls, and Limits}\label{app:geom}\label{app:c3}
These interventions concern the \emph{bare persona prompt}, the unspecified state in which completion is measured; the neutral declaration is tested behaviorally in Appendix~\ref{app:c4}. We give the subspace construction, fit/evaluation split, control checks, and readout and utility limits.

\subsubsection{Design and Measurement}
A prespecified program with ten predictions, priors, and pass rules (deviations are logged with dates, Appendix~\ref{app:repro}) asks what leakage looks like inside the residual stream of five primary checkpoints; this appendix reports the late-erasure prediction and its controls. Under a separate prespecification, its causal core is applied to four held-out checkpoints. Eighteen
fixed condition wordings (the bare prompt, two operational target wordings, the opposite-pole prompt
(\textsc{cautious}), a marked and a default persona, four factorized or scoped repairs, explicit
negation, three invented-name wordings, an arbitrary code, and the two unrelated control attributes, with the no-information reference
as the eighteenth) are crossed with the $47$-item seven-point bank and with pole-flipped
renderings of its $17$ items outside the target domain ($13$ coded non-target, $4$ ambiguous); states are
read at every decoder layer at six semantically aligned positions, and four base checkpoints are
collected identically. Every subspace and axis below is fit on one half of the items and evaluated on
the other half, in two folds; target items are split the same way. Each subspace is the top-$k$
singular basis in residual coordinates of the fit-half per-item condition-minus-reference displacements
read at the answer position, where what is decodable and what is causally responsible need not
coincide \citep{wiegreffe2025answer}. It is computed separately at every layer of the band, so the
erasure at a layer uses that layer's own basis; the rank-$1$ response-coordinate (scale) axis is instead
the unembedding direction of scale point\,$7$ minus scale point\,$1$ pulled back through the final normalization weight,
the same axis at every layer and for every condition, and so is not fit from any persona's displacement.

\paragraph{Centered question-to-answer erasure.}\label{app:geom:erasure}
Erasing a subspace in \emph{centered} form,
\begin{equation}\label{eq:erase}
\boldsymbol{h}\mapsto\boldsymbol{h}-\boldsymbol{U}\boldsymbol{U}^{\!\top}(\boldsymbol{h}-\boldsymbol{h}_{\mathrm{ref}}),
\end{equation} at every question-to-answer position
and at every layer of a depth band, cannot be undone by re-reading the persona prefix within the band;
$\boldsymbol{h}_{\mathrm{ref}}$ is the no-information reference run's state at the same item, layer, and position (the question-to-answer text is
identical across conditions). Leakage removed (causal explanatory power, CEP) is read on held-out non-target items under the target prompt,
retention (RET) on held-out target items, side effect on a punctual-persona run, and carry on the
opposite-pole run. In the band from relative depth $.65$ upward (Table~\ref{tab:geomerase}), replacing
the whole question-to-answer span with that of the no-information reference defines the ceiling ($1.00$), and erasing the persona's own
top-$4$ directions reaches it: $.97$--$1.03$ of the leakage removed on $5/5$ (positive control; pass rule
CEP $\ge.5$). That control shows only that this lever family can reach the leakage at all, since it
collapses retention with it. A rank-$4$ subspace
fit from the marked versus default \emph{personas}, which never mention risk, removes all of the
trait prompt's non-target-item leakage ($1.10$--$1.37$; values above one overshoot past the reference toward the default
pole), most of its target effect (retention $-.17$ to $.20$), and $.48$--$1.33$ of the
opposite-pole effect, on every model; the top-$4$ directions of the trait prompt's displacement on \emph{target} items
remove $.88$--$.97$ of the leakage in return. Each primary checkpoint receives 20 random rank-$4$ frames, drawn independently for each fold and layer. Separately, the held-out runs use five frames per checkpoint. The primary frames' two-fold averages have CEP $\le.01$ and retention $\ge.97$ (individual fold-by-frame cells reach $.0124$ and $.9685$), and even rank-$16$ frames remove at most $.03$
(retention $\ge.94$), the unrelated-control subspace removes at most $.18$ on the five primary checkpoints, and erasing the style-persona subspace moves the punctual-persona run by SIDE $.036$--$.069$. SIDE is the mean absolute change in signed scale shift $(\text{expected scale point}-4)/3$ over held-out non-target items and then folds; $.10$ is a $.3$-point move. Carry is the fraction of the opposite-pole run's non-target shift removed. At each rank and band, the control check requires own-subspace CEP $\ge.5$, median and 95th-percentile $|$CEP$|$ over 20 rank-matched random frames $\le.10$ and $.20$, self-subspace SIDE $\le.10$, and no numerical or tokenization errors. The complete $k{=}4$ control check passes on $4/5$: \modelid{Gemma-2}'s self-subspace SIDE is $.103$, just above the $.10$ rule, so its
raw markedness cell remains visible but is not counted among checkpoints that pass that configuration's control check. Markedness with
the $1$-D scale contrast projected out still removes the leakage ($1.09$--$1.33$), but the broader
rank-$4$ answer-serialization subspace is not inert: its raw CEP values span $.84$--$1.07$
across the five table rows, and it removes the leakage on all $4/4$ checkpoints that pass the control check. The upper
endpoint $1.07$ belongs to the excluded \modelid{Gemma-2} configuration. Markedness with that answer span projected out still
removes both effects, so it is not contained in the answer subspace; the stronger claim of independence
from answer commitment is nevertheless rejected (Table~\ref{tab:hardening}).

\begin{table}[!htb]\centering
\renewcommand{\arraystretch}{1.08}\footnotesize
\caption{\textbf{Centered-erasure point estimates (CEP/RET).} Same band $[.65,1]$, rank $4$ except the $1$-D scale axis,
held-out item halves. Values above one overshoot the reference; negative retention reverses the target.
$^\ddagger$Gemma-2 fails the full fixed-setting SIDE check but has passing adjacent settings.
$^\dagger$Gemma-4 fails the random-control check and is excluded from the interpretable held-out
count; its estimates remain visible. In each column block the upper rows are the five primary
checkpoints and the lower rows the held-out checkpoints; the second block holds controls and
complementary constructions.}
\label{tab:geomerase}
\setlength{\tabcolsep}{4pt}
\begin{tabular*}{\linewidth}{@{\extracolsep{\fill}}l c c c c c@{}}
\toprule
Model & full & own & markedness & target's own & \shortstack{target-\\orthogonal} \\
\midrule
\modelid{Qwen2.5-32B} & $1.00/.00$ & $.98/.07$ & $1.13/.20$ & $.88/\!-\!.00$ & $.86/.91$ \\
\modelid{Qwen3-32B} & $1.00/\!-\!.00$ & $.98/.16$ & $1.37/.03$ & $.97/\!-\!.01$ & $.63/.99$ \\
\modelid{Gemma-2-27B}$^\ddagger$ & $.99/\!-\!.00$ & $.98/.12$ & $1.35/.01$ & $.96/.01$ & $.56/.99$ \\
\modelid{OLMo-2-32B} & $1.00/\!-\!.00$ & $.97/.29$ & $1.10/\!-\!.17$ & $.91/\!-\!.02$ & $.58/.99$ \\
\modelid{Mistral} & $1.00/\!-\!.00$ & $1.03/.14$ & $1.20/\!-\!.11$ & $.88/.02$ & $.65/.99$ \\
\midrule
\modelid{Llama-3.1-8B-Instruct} & $1.00/\!-\!.00$ & $1.00/.19$ & $1.30/\!-\!.15$ & $.69/.00$ & $.62/.99$ \\
\modelid{GPT-OSS-20B} & $1.00/.01$ & $.97/.44$ & $.96/.24$ & $1.07/\!-\!.08$ & $.59/.97$ \\
\modelid{Gemma-3-27B-IT} & $1.00/\!-\!.00$ & $.99/.42$ & $1.60/\!-\!.09$ & $.95/.01$ & $.70/.86$ \\
\modelid{Gemma-4-31B-IT}$^\dagger$ & $1.01/.00$ & $1.00/.01$ & $1.09/.64$ & $.67/\!-\!.00$ & $1.02/.97$ \\
\midrule\midrule
Model & \shortstack{leakage-\\orthogonal target} & \shortstack{markedness\\$\perp$ scale axis} & \shortstack{scale axis\\$1$-D} & unrel.\ control & random \\
\midrule
\modelid{Qwen2.5-32B} & $\!-\!.02/.02$ & $1.10/.32$ & $.40/.80$ & $.18/.72$ & $.00/.99$ \\
\modelid{Qwen3-32B} & $\!-\!.00/\!-\!.03$ & $1.33/.05$ & $1.53/.21$ & $.01/.93$ & $.00/1.00$ \\
\modelid{Gemma-2-27B}$^\ddagger$ & $\!-\!.05/1.11$ & $1.24/.01$ & $.56/.99$ & $.02/1.00$ & $.00/1.00$ \\
\modelid{OLMo-2-32B} & $\!-\!.00/.73$ & $1.09/\!-\!.08$ & $1.63/.30$ & $.00/.81$ & $.00/1.00$ \\
\modelid{Mistral} & $\!-\!.02/1.00$ & $1.17/\!-\!.02$ & $.52/.57$ & $.09/.99$ & $.00/1.00$ \\
\midrule
\modelid{Llama-3.1-8B-Instruct} & $\!-\!.00/.54$ & $1.26/\!-\!.10$ & $1.12/\!-\!.09$ & $.03/.88$ & $.00/1.00$ \\
\modelid{GPT-OSS-20B} & $.03/.10$ & $.89/.29$ & $.56/.79$ & $.65/.45$ & $.01/1.00$ \\
\modelid{Gemma-3-27B-IT} & $.00/.72$ & $1.58/.04$ & $1.64/.07$ & $.00/.88$ & $.00/1.00$ \\
\modelid{Gemma-4-31B-IT}$^\dagger$ & $.00/\!-\!.00$ & $.62/.58$ & $1.30/\!-\!1.44$ & $.90/.20$ & $.85/\!-\!.49$ \\
\bottomrule
\end{tabular*}

\end{table}

\paragraph{Rank and depth stability.}\label{app:geom:channel}
The rank and depth conclusions do not depend on one selected cell. A fixed sweep over
$k\in\{1,2,4,6\}$ and bands $[.45,.65]$, $[.55,.75]$, $[.65,.85]$, $[.75,1]$, and $[.65,1]$
requires two consecutive passing ranks in the order $1,2,4,6$ at band $[.65,1]$, or two consecutive passing bands in the order $[.65,.85],[.75,1],[.65,1]$ at rank $4$. A cell must pass the control check, have own-subspace CEP lower 95\% bound $\ge.5$, and keep answer-token mass within $.15$ of the no-information run; the tested style-persona subspace additionally requires CEP lower bound $\ge.5$ and RET upper bound $<.7$, while target-orthogonal requires CEP lower bound $\ge.5$, RET lower bound $\ge.7$, and SIDE $\le.10$. Both the style-persona subspace and the constructed target-orthogonal subspace form such a region on $5/5$ models after the required $95\%$ item-paired bootstrap bounds are applied
($B{=}2{,}000$; Figure~\ref{fig:hardening}a). All non-random scored cells retain the seven-point answer
channel under the prespecified probability-mass check; for the fixed-setting markedness cell the largest
fold-by-block erased-minus-reference mass difference is $.00033$ against the $.15$ limit. The same
levers are weak or absent on $3/5$ models in the $.45$--$.65$ band (on \modelid{Qwen3} the question-to-answer span does not yet carry
the effect: ceiling $.02$). Thus the supported statement is a late rank/band region, fit per model, not an intrinsic rank of four or a shared cross-model vector.

\paragraph{Constructed selectivity.}\label{app:geom:handle}
At each layer, let $D_T$ and $D_U$ have as columns the fit-half persona-minus-reference
displacement vectors on target and non-target items, respectively. With
$U_T=\operatorname{SVD}_k(D_T)$ denoting its orthonormal leading left-singular basis, the construction is
\begin{equation}\label{eq:target-orth}
 D_{U\perp T}=(I-U_TU_T^{\!\top})D_U,
 \qquad U_{U\perp T}=\operatorname{SVD}_4(D_{U\perp T}).
\end{equation}
The projection uses the target \emph{displacement span}, not a direction learned from evaluation
items. Both bases are fitted separately within each model, layer, and item fold; $k=4$ is the primary target-span rank, and the sensitivity check removes the whole fit-half target span (rank $15$: each fold fits $15$ target items, so the requested $k=16$ returns $15$ directions). The selective point-estimate region is
CEP $\ge.5$ and RET $\ge.7$; the adjacent-region analysis additionally uses the declared interval and
control checks. Equation~(\ref{eq:target-orth}) is a compact statement of the fitted construction, not a new intervention.

The top-$4$ leakage-displacement directions are fit after projecting out the rank-$4$ target-displacement span,
using fit-half items only. This constructed subspace removes $.56$--$.86$ of leakage at
$.91$--$.99$ target retention on the primary five and $.59$--$.70$ at $.86$--$.99$ on the three
interpretable held-out checkpoints. Removing the whole fit-half target span gives $.47$--$.83$ at $.93$--$1.00$. At the fixed setting, no tested persona-induced subspace enters the selective region on the eight interpretable checkpoints; in the rank/band sweep one mid-band cell does (\modelid{OLMo-2}, the risk persona's own displacement, rank $1$, band $[.45,.65]$: CEP $.68$, RET $.71$). This is a finite family of fitted interventions, not a proof that all
persona representations are inseparable. A target-referenced construction is not itself a
persona-induced subspace.

\begin{table}[!htb]\centering
\renewcommand{\arraystretch}{1.08}
\caption{\textbf{Interval estimates and the answer-span control on the five primary checkpoints} (band
$[.65,1]$, $k=4$). Markedness, target-orthogonal, and answer entries are point estimates with $95\%$
item-paired bootstrap intervals ($B=2{,}000$); marked$\perp$answer is the compact point CEP/RET pair.
The answer column erases the rank-$4$ span of the seven scale-point unembedding rows; marked$\perp$answer
reports CEP/RET after removing that span from the markedness subspace. $\Delta\mu$ is the largest
absolute erased-minus-reference mean scale-point mass difference across folds and non-target/target blocks.
The columns are shown in two blocks. $^\dagger$The full control check fails because self-subspace SIDE is $.103$ against the $.10$ rule.}
\label{tab:hardening}
\footnotesize
\setlength{\tabcolsep}{4pt}
\begin{tabular*}{\linewidth}{@{\extracolsep{\fill}}l c c c c@{}}
\toprule
Model & marked CEP & marked RET & \shortstack{Target-orthogonal\\CEP} & \shortstack{Target-orthogonal\\RET} \\
\midrule
\modelid{Qwen2.5-32B} & \shortstack{$+1.13$\\$[+1.03,+1.23]$} & \shortstack{$+0.20$\\$[-0.02,+0.41]$} & \shortstack{$+0.86$\\$[+0.77,+0.93]$} & \shortstack{$+0.91$\\$[+0.87,+0.95]$} \\
\addlinespace[3pt]
\modelid{Qwen3-32B} & \shortstack{$+1.37$\\$[+1.28,+1.45]$} & \shortstack{$+0.03$\\$[-0.04,+0.09]$} & \shortstack{$+0.63$\\$[+0.59,+0.68]$} & \shortstack{$+0.99$\\$[+0.96,+1.02]$} \\
\addlinespace[3pt]
\modelid{Gemma-2-27B}$^\dagger$ & \shortstack{$+1.35$\\$[+1.31,+1.40]$} & \shortstack{$+0.01$\\$[-0.11,+0.12]$} & \shortstack{$+0.56$\\$[+0.52,+0.60]$} & \shortstack{$+0.99$\\$[+0.95,+1.01]$} \\
\addlinespace[3pt]
\modelid{OLMo-2-32B} & \shortstack{$+1.10$\\$[+1.07,+1.14]$} & \shortstack{$-0.17$\\$[-0.24,-0.09]$} & \shortstack{$+0.58$\\$[+0.54,+0.63]$} & \shortstack{$+0.99$\\$[+0.96,+1.01]$} \\
\addlinespace[3pt]
\modelid{Mistral} & \shortstack{$+1.20$\\$[+1.15,+1.24]$} & \shortstack{$-0.11$\\$[-0.16,-0.06]$} & \shortstack{$+0.65$\\$[+0.61,+0.69]$} & \shortstack{$+0.99$\\$[+0.98,+1.00]$} \\
\addlinespace[3pt]
\midrule\midrule
\multicolumn{5}{@{}l}{}\\[-1.6ex]
Model & answer CEP & marked$\perp$answer & $\max|\Delta\mu|$ & \\
\midrule
\modelid{Qwen2.5-32B} & \shortstack{$+0.93$\\$[+0.87,+0.98]$} & $+1.07/{+}0.43$ & $0.0000$ & \\
\addlinespace[3pt]
\modelid{Qwen3-32B} & \shortstack{$+0.99$\\$[+0.95,+1.03]$} & $+1.15/{+}0.11$ & $0.0000$ & \\
\addlinespace[3pt]
\modelid{Gemma-2-27B}$^\dagger$ & \shortstack{$+1.07$\\$[+1.04,+1.10]$} & $+1.25/{+}0.08$ & $0.0001$ & \\
\addlinespace[3pt]
\modelid{OLMo-2-32B} & \shortstack{$+0.84$\\$[+0.82,+0.87]$} & $+1.10/{+}0.03$ & $0.0000$ & \\
\addlinespace[3pt]
\modelid{Mistral} & \shortstack{$+1.00$\\$[+0.97,+1.03]$} & $+1.14/{-}0.01$ & $0.0003$ & \\
\addlinespace[3pt]
\bottomrule
\end{tabular*}
\end{table}

\subsubsection{What the Controls Do and Do Not Establish}\label{app:mech}

The second prespecification, dated 2026-09-02 before held-out runs, fixed the four held-out checkpoints and reused the bank, folds, conditions, and erasure protocol. It required own CEP $\ge.5$; joint removal (style-persona CEP $\ge.5$ and at least $.1$ above the random mean, style-persona RET $<.7$, target-displacement CEP $\ge.5$); scale-axis control (style-persona orthogonal to scale CEP $\ge.5$); and style-persona SIDE $\le.15$. The random-control check was added after the Gemma-4 run.
\paragraph{Which control count applies?} The point estimates of the second prespecification retain
eight checkpoints after excluding the random-control failure. The full rank-$4$ control check additionally excludes the \modelid{Gemma-2} configuration (above), not the model, which passes adjacent rank/band settings. The eight-checkpoint point estimates, the $4/5$ fixed-setting
full check, the $5/5$ interval-supported region, and the $3/4$ held-out replication therefore answer
different questions and are reported separately.

Table~\ref{tab:scopechecks} distinguishes the conclusions supported by each control check.

\begin{table}[!htb]\centering
\renewcommand{\arraystretch}{1.08}
\caption{\textbf{What each control count establishes.} Counts have different denominators and are
not pooled; the procedures, interval estimates, and failures behind each row are given in this appendix.}
\label{tab:scopechecks}
\footnotesize
\setlength{\tabcolsep}{4pt}
\renewcommand{\arraystretch}{0.95}
\begin{tabularx}{\linewidth}{@{}p{.30\linewidth} c Y@{}}
\toprule
Audit question & Result & Supported interpretation \\
\midrule
Fixed-setting rank-$4$ seven-point check & $4/5$ & Passes every control on four; \modelid{Gemma-2} misses only the side-effect rule ($.103$ against $.10$). \\
Adjacent rank/depth region & $5/5$ & Late dependence survives confidence-bound testing. \\
Held-out seven-point replication & $3/4$ & The fourth checkpoint is uninterpretable because its random control fails. \\
Binary check, transported subspaces & $1/5$ & Transport without refitting is not panel-wide; refitting
inside each readout defines no pass criterion and is format-dependent and model-heterogeneous
(Appendix~\ref{app:s4}), so no count applies to it. \\
Answer span: non-containment / independence & $4/4$ / $0/4$ & Markedness still removes the leakage with the answer span projected out, but that span alone also removes it. \\
All collateral-utility criteria & $1/5$ & Two more checkpoints miss one criterion each; selectivity on audited items is not broad utility preservation. \\
\bottomrule
\end{tabularx}
\end{table}
\paragraph{Held-out replication.} Four held-out checkpoints never used in the program---\modelid{Llama-3.1-8B-Instruct},
\modelid{GPT-OSS-20B}, \modelid{Gemma-3-27B-IT}, and \modelid{Gemma-4-31B-IT}, the local snapshots of
the checkpoint registry in Appendix~\ref{app:repro}---were collected and erased under the identical procedure (bands $.65$--$1$
and $.65$--$.85$, rank $4$, same folds). The positive control passes on all four (own top-$4$ CEP $.97$--$1.00$). On the three that also pass the random-control check, the markedness subspace removes leakage and target effect together (joint removal; $.96$--$1.60$ at retention $-.15$ to $.24$), the joint removal survives projection of the $1$-D scale contrast
($.89$--$1.58$), and the constructed target-orthogonal subspace is selective ($.59$--$.70$ at $.86$--$.99$;
held-out rows of Table~\ref{tab:geomerase}). On \modelid{Gemma-4-31B-IT} rank-$4$ erasure is destructive even with random or unrelated-control frames (random $.85/-.49$, unrelated control
$.90/.20$) and moves a punctual-persona run by $.28$--$.40$ against $.04$--$.07$ for the style-persona subspace on the primary checkpoints, so the
checkpoint fails the random-control check (a post hoc check; Appendix~\ref{app:repro}) and the prespecified SIDE limit ($.277$ against $.15$), and is excluded from the interpretable count while remaining in the table. \modelid{GPT-OSS-20B} is the one checkpoint on which the unrelated-control subspace also removes a large
share ($.65$ at $.45$ retention): its unrelated-control persona displacements overlap the directions whose erasure removes the leakage, a scope note on the
control rather than on the claim, since the markedness subspace still exceeds the random subspace by
$.95$.

\paragraph{Answer commitment and general utility.}\label{app:geom:readout}
Figure~\ref{fig:hardening}b--c reports the binary-transport and utility checks separately from the
seven-point result; the answer-span control is reported with Table~\ref{tab:hardening}. A separate $60$-item collateral-utility check has five prespecified criteria: accuracy retention $\ge.95$, absolute accuracy loss $\le.03$, order-consistency loss $\le.05$, unrelated-control SIDE $\le.10$, and answer-token KL at most the random-frame $95$th percentile plus $.01$. Its 60 two-option items contain no risk or style-marking words: 20 arithmetic, general-knowledge, and simple-logic questions and 20 format-following questions, each keyed with A and B equally often, plus 20 unrelated-control items contrasting punctual or detail-oriented options with relaxed ones. Each appears in both orders without a persona and under the target prompt with the fold-0-fitted subspace erased. Accuracy uses the 80 keyed presentations and order consistency the 60 items, both against the no-information run; unrelated-control SIDE is the absolute change in $2p-1$ relative to the unedited target-prompt run, averaged over the 20 unrelated-control items in both orders, where $p$ is the probability of the punctual or detail-oriented option. The KL limit is the 95th percentile under 20 random rank-4 frames plus $.01$. The constructed subspace meets all five on Mistral ($1/5$); Qwen2.5 misses only order consistency and OLMo-2 only unrelated-control SIDE (Figure~\ref{fig:hardening}c). The constructed subspace is selective on the audited items; it
meets every collateral criterion on one checkpoint and all but one on two more. The punctual half of the
unrelated-control block contrasts keeping a time buffer with cutting it close, which borders on risk; OLMo-2
misses the SIDE rule on both halves ($.20$ punctual, $.16$ detail-oriented), so its miss does not come from
that overlap.

\begin{figure}[htb]\centering
\includegraphics[width=\linewidth]{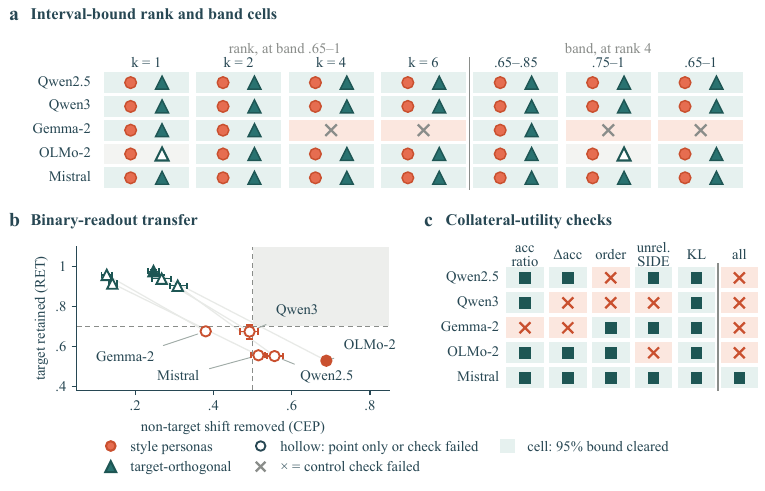}
\caption{\textbf{Prespecified robustness and boundary checks.}
\textbf{(a)} Interval-bound rank and band cells on the seven-point readout: circles denote the style-persona
subspace and triangles the constructed target-orthogonal subspace; filled symbols clear the required $95\%$
bound, hollow symbols pass only at the point estimate (in b, fail the control check), and crosses fail the control check; cell shading
marks where the subspace clears the bound.
\textbf{(b)} Binary-readout CEP--RET with item-paired intervals. Filled symbols identify the sole
model that passes the control check; hollow points remain descriptive. \textbf{(c)} Collateral-utility checks for the
constructed subspace; only \modelid{Mistral} passes all five.}
\label{fig:hardening}
\end{figure}

\subsubsection{Readout Boundary and Matched-Format Refitting}\label{app:s4}
Without refitting, the binary A/B and 1/2 runs are valid on all five primary checkpoints: full replacement reproduces the no-information reference at its ceiling, and the bare effect clears its floor. Subspaces fit on the seven-point readout and transported to these choices nevertheless pass the own-subspace positive control on OLMo-2 only (CEP $.57$ $[.55,.59]$; the other
four range from $.17$ to $.43$), below the prespecified $3/5$ rule (Figure~\ref{fig:hardening}b).
This is a failure of the transported intervention family, not evidence that binary-choice behavior
has no modifiable representation.

\paragraph{Refitting design.}
A separate exploratory analysis fits the same intervention inside each format. The panel is
Qwen3.5-9B, Qwen3-32B, Mistral-Small-3.1-24B, Gemma-2-27B, and OLMo-2-32B; Qwen3.5 replaces
Qwen2.5, so this is not the primary five-checkpoint panel. All use bf16, eager attention, and disabled
chat thinking. Of the $47$ seven-point items, the $13$ non-target and $30$ target items are scored; the four
ambiguous items are excluded. Fold 0 fits on $7$ non-target and $15$ target items and evaluates on the
complementary $6$ and $15$; fold 1 reverses the halves. The readouts are the seven-point scale (two
presentations) and binary choice (A/B and 1/2, both option orders). Presentations are averaged within
item before statistics, not counted as independent observations.

The table below uses the rank-$4$ constructed subspace, fitted within each format by the projection
above, over the question-to-answer span at every layer from relative depth $.65$ upward. Its center is the no-information reference
forward-pass state at the same scored item, layer, and position; it is not a train-mean center.
Conditions include the no-information reference, the target persona, punctual and detail-oriented unrelated-control personas, and
four directional non-target-preference conditions. No condition is a neutral declaration. The intervention requires
an item-matched no-information reference forward pass on the evaluation item, although the subspace fit uses only
training items.

\paragraph{Estimand and inference.}
Let $\pi_{p,i}^{0}$ and $\pi_{0,i}^{0}$ be the unedited persona and no-information reference answer distributions on
non-target item $i$, and let $\pi_{p,i}^{b}$ and $\pi_{0,i}^{b}$ be their edited counterparts. Every
$\pi$ is normalized inside that format's designated candidate answers. The reported distance reduction is
\begin{equation}\label{eq:dtv}
 \Delta\mathrm{TV}_b=\frac1{13}\sum_{i=1}^{13}
 \left[\operatorname{TV}(\pi_{p,i}^{0},\pi_{0,i}^{0})-
       \operatorname{TV}(\pi_{p,i}^{b},\pi_{0,i}^{b})\right].
\end{equation}
Target retention is the edited mean signed target effect divided by the unedited mean effect on the
$30$ target items at the \emph{same readout}; it is a ratio of means, not a mean of itemwise ratios.
This $\Delta\mathrm{TV}$ is an absolute distance reduction, not the normalized signed-shift fraction
CEP. Neither scale bounds the other, and the model panels differ as well. Pointwise intervals in the
refit analysis use $2{,}000$ item-bootstrap draws, with presentation averaged first, conditional on
the two fitted folds; they exclude refit uncertainty, family generalization, and multiplicity control.
The compact constructed-subspace table reports point estimates, not a test of differences between
methods or models. No pass rule is declared for refitting and no model is omitted on outcome.

The $.20$ statement in Section~\ref{sec:main} is the rounded maximum $.198$ in the binary column of
Table~\ref{tab:refitboundary}; it is a boundary on these tested constructed-subspace cells, not a general
upper bound on binary-readout interventions.

\begin{table}[!htb]\centering
\renewcommand{\arraystretch}{1.08}\footnotesize
\caption{\textbf{Readout-refit boundary for the constructed subspace.} Matched-format fitting and
evaluation, paired no-information reference, rank $4$, two item folds. Each entry is absolute leakage-TV
reduction / target-effect retention. The construction, fit split, reference, and estimand are specified above. Entries are point estimates;
this table has no pass/fail decision.}
\label{tab:refitboundary}
\begin{tabular*}{\linewidth}{@{\extracolsep{\fill}}lcc@{}}\toprule
Checkpoint & Seven-point & Binary choice \\
\midrule
Qwen3.5-9B & $.511/.945$ & $.068/.985$ \\
Qwen3-32B & $.243/.954$ & $.070/.997$ \\
Mistral-Small-3.1-24B & $.048/.955$ & $.001/1.000$ \\
Gemma-2-27B & $.004/.989$ & $.001/1.000$ \\
OLMo-2-32B & $.187/.977$ & $.198/1.000$ \\
\bottomrule\end{tabular*}
\end{table}

Paired-reference edits have zero reference drift by construction. Under this center, restoration
TV to the original reference is the post-intervention persona--reference TV; closing part of the gap is not
the same as returning to the original reference behavior. Random subspaces are near-null, which does not identify nuisance information.

\subsection{The Neutrality Gap: Design, Results, and Qualifications}\label{app:battery}\label{app:c4}
This section supplies the estimands, selection procedure, complete headline numbers, and controls for
Section~\ref{sec:declare}. Validation contrasts, dependence checks, and numerical sensitivity
calculations are included below. The held-out tail rule is fixed in the analysis plan; level splits, scale decompositions,
threshold sensitivity, and independent-item level tabulations are post hoc.
\subsubsection{Estimands, World Banks, and Prompt-Design Selection}\label{app:c4:design}

\paragraph{Estimands.} For a non-target query in world $w$ at declared level $z$, non-target sensitivity
$L_{\mathrm{sens}}$ compares the two target levels at that non-target value and query; its
normalized-choice version is the TV between the decoded two-option distributions
(Section~\ref{sec:framework}). Attribute-fidelity error $E_Z$ compares non-target responses with the task
ground truth, target efficacy is the signed probability effect on target queries, and retention is defined
against an adequate positive baseline effect; together these distinguish selective behavior, attribute
collapse, and loss of the target manipulation. The conservative per-world residual
$a_{w,z}=\min\{1,\,L_{\mathrm{sens}}(w,z)+u_{+}+u_{-}\}$ adds to normalized-choice TV the unassigned
answer mass $u_t$, the probability outside the legal answer codes at target level $t$. A world enters
the tail when any of its five non-target-level comparisons exceeds the sensitivity threshold, so the
conservative tail rate measures failure of the declared robust criterion, not a count of fully
observed semantic shifts, while the normalized-TV rate tests whether the pattern persists without the
widening. These estimands are related to, but are not renamed versions of, the field-audit measures of
Definition~\ref{def:surgical}: $L_U$ uses persona-minus-reference shifts, the surgicality ratio compares
regression coefficients (Appendix~\ref{app:sr}), and CEP/RET summarize erasure-induced removal and
retention under their own normalization, scales, and references.

\paragraph{World banks and prompt-design selection.} Structured worlds are drawn by one generator, with a
ground-truth answer, over four attribute pairs: risk weight $\to$ color preference, time preference $\to$
sound preference, social contact $\to$ layout preference, and time of day $\to$ color preference. The
explicit-evidence bank (Appendix~\ref{corrected-explicit-evidence-results}) samples all four. Candidate
prompt designs are selected on $162$ in-scope validation worlds and evaluated on $477$ in-scope held-out
worlds; both banks use two attribute pairs, risk weight $\to$ color preference and time preference
$\to$ sound preference (the \emph{certification pairs}, on which the selected prompt designs are evaluated), and
their $40$ non-target texts. Other declared pairs supply
separate collateral checks. The evaluation bank has $958$ scored test worlds: $477$ from the two certification pairs and $481$ from the other two pairs, used for collateral checks. Output validity uses all $958$; sensitivity uses the $477$ in-scope worlds. The separate development bank has $256$ worlds ($64$ per pair). The evaluation generator has $320$ worlds per pair; a salted hash split assigns $320$ to validation and $960$ to test, with two unscored dry-run prompts excluded. The evaluation records of the five primary checkpoints share one world set, so model and threshold comparisons stay paired. The non-target levels include an explicitly neutral level whose ground-truth answer is fixed by the
options' base constants rather than by any stated preference
(Appendix~\ref{identifiability-of-the-neutral-oracle-and-decomposition-of-the-mean-conditions}).
Validation selects strong relevance for OLMo-2 and Gemma-2, attribute scope for Qwen2.5, and strong relevance for the shared Qwen3/Mistral record (verbatim in Appendix~\ref{app:prompts:battery}). Each checkpoint has four candidates appended to the declaration-only prompt: attribute scope and relevance at its not-relevant (“The description above is not relevant to this question. Answer from what the question itself states”), weak, and strong levels. A candidate is feasible when validation mean sensitivity is at most $.05$, fidelity error at most $.10$, target retention at least $.90$, and collateral sensitivity at most $.05$. Candidates are ranked by feasibility on every checkpoint in their record, then by worst validation mean conservative sensitivity, then by fixed candidate order; for Qwen3 and Mistral, the larger of the two sensitivities is used. The least-infeasible candidate therefore minimizes that worst-case sensitivity; fidelity, retention, and collateral sensitivity enter only through feasibility. Every candidate fails this validation feasibility criterion, though selected candidates meet sensitivity and fidelity bounds at $z{=}{\pm1}$ (Table~\ref{tab:extreme-levels}); matched validation contrasts are descriptive because selection used those data.
The rule-lookup pilot is a successful rule reference with different tasks, not a matched
fourth condition here.

\subsubsection{Directional Declarations under Supplied Correlations}\label{corrected-explicit-evidence-results}

The explicit-evidence bank asks whether a declared non-target value overrides a correlation supplied in
context. It contains $64$ worlds per model, $16$ from each of the four attribute pairs---two of them
outside the certification pairs---with $42$ distinct non-target texts, and it was declared exploratory
before scoring, with no pass/fail decision. Each test prompt uses the battery frame without an added
instruction and $16$ in-context demonstrations drawn with balanced marginals and joint probabilities
$(1+\rho TZ)/4$: endpoint counts are 7/1/1/7 or reversed, realizing correlations of $.75$ and $-.75$,
and the zero-correlation table is 4/4/4/4; the $\rho$ labels remain the nominal settings $-.8$, $0$,
$+.8$. The non-target attribute appears in one of four information conditions: absent (bare), a direct
declaration at $z{=}{\pm1}$, a passing mention, or one recorded past choice
(Appendix~\ref{app:prompts:battery}); this bank has no neutral level. At each setting $D_p$ is the
signed non-target-option probability difference between target levels, and the endpoint contrast
$D_p(+.8)-D_p(-.8)$ has range $[-2,2]$; the margin contrast replaces probability by semantic log-odds,
and a retained fraction divides a condition's mean contrast by the bare condition's mean contrast. Means are formed
within and across worlds; code maps and non-target-value repetitions do not create independent worlds,
and the aggregation retains both code maps within each world (Appendix~\ref{app:repro}).

On the probability scale the direct declaration lowers the endpoint contrast from $.90$--$1.40$ to at
most $.07$, an attenuation of $94$--$100\%$; on the log-odds scale $5$--$31\%$ of the bare margin
contrast remains (Table~\ref{tab:explicit-evidence}). Gemma-2's small negative probability fraction is
reported as estimated, not clipped, and the model ordering is descriptive, not a multiple-comparison
test of adjacent pairs. The weak-evidence forms override less: a passing mention keeps $.03$--$.31$ of the bare
probability contrast and one past choice $.18$--$.44$, against at most $.06$ for the direct declaration. Text-clustered intervals for this bank are in
Appendix~\ref{text-level-units} and its log-odds decomposition in Appendix~\ref{bd-decomposition}.

\begin{table}[!htb]\centering
\renewcommand{\arraystretch}{1.08}\footnotesize
\caption{\textbf{Direct-declaration contrasts relative to the bare prompt} (explicit-evidence bank;
$64$ worlds per model over four attribute pairs; both code maps retained). Endpoint contrast
$D_p(+.8)-D_p(-.8)$ under the bare prompt and under the direct declaration; retained fractions on the
probability and margin scales with $95\%$ world-bootstrap intervals ($B{=}2{,}000$); median absolute
margin under the declaration. Exploratory.}
\label{tab:explicit-evidence}
\setlength{\tabcolsep}{4pt}
\begin{tabular*}{\linewidth}{@{\extracolsep{\fill}}lllll@{}}
\toprule
 & Endpoint contrast & Probability fraction & Margin fraction & Declared median \\
Model & bare / declared & {[}95\% interval{]} & {[}95\% interval{]} & absolute margin (nat) \\
\midrule
Mistral & 1.3963 / 0.0358 & 0.0256 {[}0.0109, 0.0443{]} & 0.0975 {[}0.0775, 0.1203{]} & 6.3125 \\
Qwen3 & 1.1406 / 0.0662 & 0.0581 {[}0.0235, 0.0954{]} & 0.3058 {[}0.2600, 0.3540{]} & 16.0000 \\
OLMo-2 & 1.3077 / 0.0459 & 0.0351 {[}0.0183, 0.0567{]} & 0.1965 {[}0.1688, 0.2264{]} & 9.3750 \\
Gemma-2 & 0.8951 / $-$0.0027 & $-$0.0031 {[}$-$0.0335, 0.0296{]} & 0.0703 {[}0.0421, 0.0979{]} & 10.2500 \\
Qwen2.5 & 1.3537 / 0.0075 & 0.0056 {[}$-$0.0000, 0.0182{]} & 0.0474 {[}0.0389, 0.0575{]} & 33.3750 \\
\bottomrule
\end{tabular*}
\end{table}

\subsubsection{Matched Validation Comparisons}\label{matched-validation-comparisons}

Each table uses the same $162$ in-scope validation worlds per model. ``Declaration'' is the
declaration-only condition and ``Selected'' that model's validation-selected candidate. Values use
normalized-choice TV and the deterministic ground-truth reference; bare prompts have no individual $Z$
fidelity target. The selected-minus-declaration interval is world-paired with $2{,}000$ bootstrap
replicates and is an exploratory, post-selection comparison. A declaration alone lowers normalized
sensitivity from $.60$--$.78$ to $.10$--$.16$, and the selected candidate changes it by $-.014$ to
$+.001$ further, except on Qwen2.5 ($-.076$ $[-.093,-.060]$).

\paragraph{All non-target-value levels.}\label{all-non-target-value-levels}

Table~\ref{tab:all-levels} reports matched validation results across all non-target levels.

\begin{table}[!htb]\centering
\renewcommand{\arraystretch}{1.08}\footnotesize
\caption{\textbf{Matched validation results across all non-target-value levels.} Relative to the declaration alone, the selected candidate changes sensitivity by at most $.015$ except on Qwen2.5 ($-.076$); both remain far below the bare prompt ($.60$--$.78$).}
\label{tab:all-levels}
\setlength{\tabcolsep}{3pt}
\begin{tabular*}{\linewidth}{@{\extracolsep{\fill}}lrrrlrr@{}}
\toprule
 & Bare & Declaration & Selected & Selected $-$ declaration & Selected & Selected \\
Model & sensitivity & sensitivity & sensitivity & [95\% interval] & fidelity error & target effect \\
\midrule
Mistral & 0.7196 & 0.1033 & 0.0891 & $-$0.0143 {[}$-$0.0225, $-$0.0063{]} & 0.1087 & 0.9830 \\
Qwen3 & 0.5968 & 0.1282 & 0.1205 & $-$0.0077 {[}$-$0.0124, $-$0.0035{]} & 0.1196 & 0.9042 \\
OLMo-2 & 0.6987 & 0.1131 & 0.1106 & $-$0.0025 {[}$-$0.0071, 0.0021{]} & 0.1141 & 0.9903 \\
Gemma-2 & 0.7750 & 0.1647 & 0.1653 & 0.0006 {[}$-$0.0097, 0.0112{]} & 0.1190 & 0.9520 \\
Qwen2.5 & 0.7161 & 0.1436 & 0.0674 & $-$0.0762 {[}$-$0.0926, $-$0.0597{]} & 0.1088 & 0.9971 \\
\bottomrule
\end{tabular*}
\end{table}

The all-level fidelity column includes the neutral stratum, whose ground-truth reference is not
identifiable from the prompt
(Appendix~\ref{identifiability-of-the-neutral-oracle-and-decomposition-of-the-mean-conditions}); the
extreme-level ($z{=}{\pm}1$) values in Appendix~\ref{extreme-non-target-values-only} are the interpretable fidelity
comparison. Because that reference does not depend on the prompt, the expected fidelity error at $z=0$ is $.5$
for any design, so the all-level fidelity condition sits near its $.10$ bound by construction; the selected
candidates fail feasibility on sensitivity ($.067$--$.165$) regardless.

\paragraph{Extreme non-target values only.}\label{extreme-non-target-values-only}

Non-target comparisons are restricted to $Z=-1$ or $Z=+1$. Target comparisons use available target rows
under the same prompt design, already with endpoint non-target values; the unchanged target column does not
imply an unmeasured five-level target grid. Table~\ref{tab:extreme-levels} reports the matched
validation comparison restricted to the two extreme non-target values.

\begin{table}[!htb]\centering
\renewcommand{\arraystretch}{1.08}\footnotesize
\caption{\textbf{Matched validation results at the extreme non-target values.} Both conditions are nearly insensitive here (sensitivity at most $.042$, fidelity error at most $.021$), so the residual sensitivity in Table~\ref{tab:all-levels} comes from the other levels.}
\label{tab:extreme-levels}
\setlength{\tabcolsep}{3pt}
\begin{tabular*}{\linewidth}{@{\extracolsep{\fill}}lrrlrrr@{}}
\toprule
 & Declaration & Selected & Selected $-$ declaration & Declaration & Selected & Selected \\
Model & sensitivity & sensitivity & [95\% interval] & fidelity error & fidelity error & target effect \\
\midrule
Mistral & 0.0043 & 0.0018 & $-$0.0026 {[}$-$0.0044, $-$0.0012{]} & 0.0045 & 0.0030 & 0.9830 \\
Qwen3 & 0.0134 & 0.0189 & 0.0055 {[}$-$0.0002, 0.0131{]} & 0.0085 & 0.0127 & 0.9042 \\
OLMo-2 & 0.0010 & 0.0042 & 0.0033 {[}0.0003, 0.0074{]} & 0.0092 & 0.0076 & 0.9903 \\
Gemma-2 & 0.0018 & 0.0055 & 0.0038 {[}$-$0.0003, 0.0097{]} & 0.0010 & 0.0029 & 0.9520 \\
Qwen2.5 & 0.0420 & 0.0062 & $-$0.0359 {[}$-$0.0553, $-$0.0175{]} & 0.0210 & 0.0031 & 0.9971 \\
\bottomrule
\end{tabular*}
\end{table}

\subsubsection{The Held-Out Tail Sits at the Neutral Level}\label{neutrality-and-the-held-out-failure-tail}

Each checkpoint's selected candidate, the declaration alone, and the bare prompt are scored once on the
$477$ held-out worlds, and the held-out split was opened once for each set of models evaluated. The sensitivity threshold ($.15$) and the tolerance ($.05$) were fixed before held-out evaluation. All five checkpoints violate the declared tail requirement under both scores (Table~\ref{tab:tail}), and the verdict holds at any threshold below $.95$ (Table~\ref{tab:tau-sweep}); the level split, threshold sweep, multiplicity
reallocations, and text-cluster re-expression below are post hoc.

\begin{table}[!htb]\centering
\renewcommand{\arraystretch}{1.08}\small
\caption{\textbf{Held-out prompt-design evaluation.} Sensitive-world share: the share of the $477$ worlds with at least one
comparison above the sensitivity threshold, under the conservative variant
and normalized-choice TV (sensitivity threshold $.15$, tolerance $.05$); brackets give $95\%$ intervals over
$40$ non-target texts. The first four models use the relevance instruction (strong); Qwen2.5 uses the attribute-scope instruction.
$z{=}0$ texts: non-target texts whose mean normalized TV at the neutral level exceeds $.15$. $L_{\mathrm{sens}}$
includes / excludes $z{=}0$. The level columns are post hoc diagnostics
(Appendix~\ref{neutrality-and-the-held-out-failure-tail}).}
\label{tab:tail}
\setlength{\tabcolsep}{4pt}
\begin{tabular*}{\linewidth}{@{\extracolsep{\fill}}lcccc@{}}
\toprule
Model & \shortstack{Conservative\\share} & \shortstack{Normalized-TV\\share [text CI]} & \shortstack{$z{=}0$ texts\\ / $40$} & \shortstack{$L_{\mathrm{sens}}$\\all / no $z{=}0$} \\
\midrule
\modelid{Mistral} & $.90$ & $.61$ $[.48,.74]$ & $27$ & $.081$ / $.014$ \\
\modelid{Qwen3} & $.64$ & $.58$ $[.46,.71]$ & $23$ & $.107$ / $.013$ \\
\modelid{OLMo-2} & $.62$ & $.62$ $[.50,.75]$ & $30$ & $.104$ / $.013$ \\
\modelid{Gemma-2} & $.76$ & $.76$ $[.65,.87]$ & $34$ & $.155$ / $.035$ \\
\modelid{Qwen2.5} & $.39$ & $.39$ $[.25,.52]$ & $20$ & $.061$ / $.009$ \\
\bottomrule
\end{tabular*}
\end{table}

\paragraph{Level split of the tail.} Table~\ref{tab:heldout-tail-detail} splits the union failures (worlds that fail at any non-target level) by
non-target level. Without $z{=}0$ the conservative union falls to $13$--$77$ of $477$ worlds
($3$--$16\%$), and at the two extreme levels to $0$--$46$ ($0$--$10\%$), whereas $180$--$431$ worlds
fail at the neutral level.

\begin{table}[!htb]\centering
\renewcommand{\arraystretch}{1.08}\small
\caption{Conservative and normalized-TV held-out failure-tail rates (every column uses the
conservative score except the normalized-TV union).}
\label{tab:heldout-tail-detail}
\setlength{\tabcolsep}{3pt}
\begin{tabular*}{\linewidth}{@{\extracolsep{\fill}}lrrrrr@{}}
\toprule
 & Conservative & Normalized-TV & Conservative & Nonzero-z & Extreme $\pm1$ \\
Model & union /477 & union /477 & $z=0$ count & union & union \\
\midrule
Mistral & 431 (90.4\%) & 293 (61.4\%) & 431 & 45 & 0 \\
Qwen3 & 303 (63.5\%) & 279 (58.5\%) & 281 & 74 & 46 \\
OLMo-2 & 296 (62.1\%) & 296 (62.1\%) & 288 & 43 & 8 \\
Gemma-2 & 364 (76.3\%) & 364 (76.3\%) & 364 & 77 & 8 \\
Qwen2.5 & 184 (38.6\%) & 184 (38.6\%) & 180 & 13 & 6 \\
\bottomrule
\end{tabular*}
\end{table}

Table~\ref{tab:heldout-levels} gives the normalized-TV counts at each level, the shares plotted in
Figure~\ref{fig:neutral}a.

\begin{table}[!htb]\centering
\renewcommand{\arraystretch}{1.08}\small
\caption{Held-out worlds whose normalized-choice TV exceeds $.15$ at each non-target level (selected
candidate; counts of $477$ worlds). A world can exceed the threshold at several levels, so the union is
not the row sum; it equals the normalized-TV union of Table~\ref{tab:heldout-tail-detail}.}
\label{tab:heldout-levels}
\setlength{\tabcolsep}{4pt}
\begin{tabular*}{\linewidth}{@{\extracolsep{\fill}}lrrrrrr@{}}
\toprule
Model & $z{=}{-1}$ & $z{=}{-.5}$ & $z{=}0$ & $z{=}{+.5}$ & $z{=}{+1}$ & Union \\
\midrule
Mistral & 0 & 8 & 285 & 35 & 0 & 293 \\
Qwen3 & 0 & 0 & 248 & 18 & 17 & 279 \\
OLMo-2 & 0 & 10 & 288 & 25 & 8 & 296 \\
Gemma-2 & 8 & 13 & 364 & 64 & 0 & 364 \\
Qwen2.5 & 0 & 0 & 180 & 13 & 6 & 184 \\
\bottomrule
\end{tabular*}
\end{table}

\paragraph{Selected added instructions versus the declaration alone.} The added instructions leave most neutral-level exceedances in place. Worlds exceeding the sensitivity threshold at $z{=}0$ number $285$ for Mistral's selected
candidate against $294$ for the declaration alone, $248$ against $287$ for Qwen3, $288$ against $289$
for OLMo-2, $364$ against $344$ for Gemma-2, and $180$ against $213$ for Qwen2.5. Qwen2.5's
attribute-scope instruction acts mainly at directional values: $13$ against $127$ worlds at
$z{=}{+}.5$ and $6$ against $35$ at $z{=}{+}1$.

\paragraph{Robustness to the sensitivity threshold.}\label{cell-threshold-sensitivity}

Table~\ref{tab:tau-sweep} scores the selected candidates' cached cells, post hoc, at sensitivity thresholds from
$.05$ to $.50$ with the tolerance held at $.05$; at $.15$ it reproduces every count of
Table~\ref{tab:heldout-tail-detail}. The verdict does not depend on the sensitivity threshold: the
smallest threshold at which a model's union tail rate would fall to $.05$ is $.953$--$1.000$ under
normalized TV and $.998$--$1.000$ under the conservative score, so every model violates the tail at
any threshold below $.95$, and a certified pass, which needs a Clopper--Pearson bound below the
tolerance, would need a threshold at least as high; at $.50$, $28$--$64\%$ of worlds still fail. On
normalized TV neither does its location: at every tested threshold the neutral level alone flags at
least $4.4$ times as many worlds as the four directional levels together. Under the conservative
score the unassigned-mass widening compresses that ratio at low thresholds, to at least $2.9$ from
$.10$ upward and to parity for \modelid{Mistral} at $.05$, where every world is flagged at the
neutral level and at some directional level. The directional levels are not free of exceedances either: under normalized TV, the tolerance
without $z{=}0$ would be met from $.06$ on \modelid{Qwen2.5} but only from $.32$--$.94$ on the other
four, while the two extreme levels stay at or below $.04$ at every tested threshold. The
declaration-only condition gives the same union verdict ($\tau^{*}\ge.98$ under both scores).

\begin{table}[!htb]\centering
\renewcommand{\arraystretch}{1.08}\footnotesize
\caption{\textbf{Sensitivity-threshold robustness of the held-out tail} (selected candidates, $477$ worlds,
normalized-choice TV). Each cell gives the share of worlds whose neutral-level comparison exceeds the
threshold / the same at any of the four directional levels; the neutral
shares at $.15$ match Figure~\ref{fig:neutral}. $\tau^{*}$: the smallest threshold at which the union-tail
point rate over all five levels reaches the $.05$ tolerance (normalized TV / conservative
score);
$\tau^{*}_{\mathrm{dir}}$: the same without $z{=}0$ (normalized TV).}
\label{tab:tau-sweep}
\setlength{\tabcolsep}{3pt}
\begin{tabular*}{\linewidth}{@{\extracolsep{\fill}}lcccccccc@{}}
\toprule
 & \multicolumn{6}{c}{Sensitivity threshold: neutral / directional share} & & \\
\cmidrule(lr){2-7}
Model & $.05$ & $.10$ & $.15$ & $.20$ & $.30$ & $.50$ & $\tau^{*}$ & $\tau^{*}_{\mathrm{dir}}$ \\
\midrule
\modelid{Mistral} & $.77$/$.14$ & $.66$/$.08$ & $.60$/$.07$ & $.50$/$.05$ & $.44$/$.05$ & $.30$/$.04$ & $.953$/$1.000$ & $.317$ \\
\modelid{Qwen3} & $.64$/$.06$ & $.55$/$.06$ & $.52$/$.06$ & $.52$/$.06$ & $.47$/$.05$ & $.47$/$.05$ & $1.000$/$1.000$ & $.560$ \\
\modelid{OLMo-2} & $.78$/$.13$ & $.66$/$.11$ & $.60$/$.09$ & $.59$/$.09$ & $.50$/$.07$ & $.43$/$.03$ & $1.000$/$1.000$ & $.385$ \\
\modelid{Gemma-2} & $.81$/$.18$ & $.79$/$.16$ & $.76$/$.16$ & $.74$/$.14$ & $.73$/$.14$ & $.60$/$.13$ & $1.000$/$1.000$ & $.940$ \\
\modelid{Qwen2.5} & $.45$/$.05$ & $.42$/$.03$ & $.38$/$.03$ & $.34$/$.03$ & $.29$/$.03$ & $.27$/$.02$ & $.998$/$.998$ & $.060$ \\
\bottomrule
\end{tabular*}
\end{table}

\paragraph{Post hoc multiplicity reallocations.}\label{additional-multiplicity-checks}

The analysis plan's \emph{certificate}, its finite-sample test of the declared conditions
(distinct from the bound of Proposition~\ref{prop:cert}), uses empirical-Bernstein bounds for bounded
world-level means and exact
Clopper--Pearson bounds for world-level exceedance indicators
\citep{maurer2009empirical, clopper1934confidence}. The sensitivity variable is bounded in
$[0,1]$. Each certificate tests eight conditions: mean conservative sensitivity; the maximum conservative cell residual over $2{,}385$ bank cells (a maximum without a confidence bound); the share of worlds with any cell above the sensitivity threshold; fidelity error; target effect; retention relative to the bare prompt; output validity; and collateral sensitivity. For mean, maximum, tail, fidelity, target effect, retention, validity, and collateral, the screening thresholds are respectively $\le.05$, $\le.15$, $\le.05$ (cell threshold $.15$), $\le.10$, $\ge.10$, $\ge.90$, $\ge.98$, and $\le.05$; strict thresholds are $\le.015$, $\le.05$, $\le.02$ (cell threshold $.05$), $\le.05$, $\ge.10$, $\ge.90$, $\ge.99$, and $\le.02$. All ten declared outcomes are violated, in every case through the maximum and
tail conditions. The declared per-condition $\alpha$ is $.0125$ in the record that covers Qwen3 and Mistral
jointly and $.025$ in each single-model record. Two post hoc checks tighten that allocation: the first
divides each declared $\alpha$ by eight, and the second uses $.05/(10\times 8)=.000625$ across all ten
model-by-threshold-set assessments (five models, each under a research-screening and a strict-application
threshold set) and eight conditions. All conservative tail lower bounds remain above $.31$ under the panel-wide calculation, well above the declared $.05$ tolerance. The reallocations turn some mean conditions from violated to inconclusive (Qwen2.5's
strict-application one under either reallocation, OLMo-2's screening one under the panel-wide allocation,
lower bound $.048$; Qwen2.5's screening one is inconclusive as declared), and every outcome stays violated
through the maximum and tail conditions, so the per-model multiplicity table is not reproduced.

\paragraph{Text-level units and clustered intervals.}\label{text-level-units}

Every world bank is drawn by the same generator from a fixed catalog of $20$ question stems per
construct, each with one option pair. Each block of $20$ worlds uses every stem once; later blocks
revisit the catalog in a fresh permutation with independent hidden ground-truth parameters (base constants
and non-target loadings). The rendered prompt depends only on text, presentation order, and declared
levels, so sensitivity is the same (up to batch-padding rounding) for worlds sharing text and order;
only the fidelity label differs (hidden base constant). In the two certification pairs each non-target
text has $13$--$18$ worlds in the evaluation bank of its pair, $8$--$16$ among the $477$ in-scope
held-out worlds, and $1$--$8$ validation worlds, from which candidates were selected; $39$ of the $40$
held-out texts also occur in validation (one printer-choice text has no validation world).
``Held-out'' therefore means a hash-partitioned set of parameter draws and worlds within a shared
catalog, not unseen texts, which the independent items supply
(Appendix~\ref{app:c4:independent}); the $64$ explicit-evidence worlds carry $42$ non-target texts,
and the neutral-wording confirmation scores that catalog under different input conditions rather
than new tasks. The certificate declares the world as cluster unit and computes empirical-Bernstein and
Clopper--Pearson bounds under independent-world assumptions from this catalog. Because blocks are permuted and hash-partitioned, coverage conditions are unestablished, so the nominal guarantees and world-bootstrap intervals are reported as issued, not as verified claims; the text-cluster analysis below replaces world-level independence with clustering by text.
Table~\ref{tab:textlevel} re-expresses the selected candidates' neutral-level results with the text as
cluster unit (text-cluster bootstrap, $B{=}2{,}000$), summarizing variation across the $40$ catalog texts; unseen texts are tested in Appendix~\ref{app:c4:independent}. Means and point estimates are unchanged, every declared
\emph{violated} outcome survives, and every tail rate stays above $.05$; the declared outcomes remain as
issued. For the declaration-only condition, neutral-level exceedance counts are $28$, $25$, $32$, $30$, and
$24$ texts in model order. For the selected candidates, same-sign counts sum to $120$ of $122$
model--text comparisons (world level $1{,}237$ of $1{,}240$).

\begin{table}[!htb]\centering
\renewcommand{\arraystretch}{1.08}\footnotesize
\caption{\textbf{Held-out neutral-level results with the question text as cluster unit} (selected
candidate, $40$ non-target texts, $477$ worlds). Exceedance counts at $z{=}0$: worlds above $.15$, and texts whose mean over their worlds is above $.15$; rates with $95\%$ text-cluster bootstrap intervals; parentheses give a post hoc one-sided
lower bound at level $.05/8$.}
\label{tab:textlevel}
\setlength{\tabcolsep}{3pt}
\begin{tabularx}{\linewidth}{@{}l >{\centering\arraybackslash}X >{\centering\arraybackslash}X >{\centering\arraybackslash}X >{\centering\arraybackslash}X >{\centering\arraybackslash}X >{\centering\arraybackslash}X@{}}
\toprule
Model & $z{=}0$ exceed.\newline worlds / texts & Tail, normalized TV\newline rate [CI] (lower) & Tail, conservative\newline rate [CI] & $L_{\mathrm{sens}}$ all\newline [CI] & $L_{\mathrm{sens}}$ no $z{=}0$\newline [CI] & Fidelity error,\newline directional [CI] \\
\midrule
Mistral & $285$ / $27$ & $.614$ $[.483,.739]$ ($.448$) & $.904$ $[.826,.967]$ & $.081$ $[.058,.105]$ & $.014$ $[.005,.026]$ & $.011$ $[.005,.020]$ \\
Qwen3   & $248$ / $23$ & $.585$ $[.456,.709]$ ($.423$) & $.635$ $[.509,.755]$ & $.107$ $[.079,.134]$ & $.013$ $[.001,.029]$ & $.016$ $[.000,.038]$ \\
OLMo-2    & $288$ / $30$ & $.621$ $[.497,.745]$ ($.454$) & $.621$ $[.498,.747]$ & $.104$ $[.084,.125]$ & $.013$ $[.005,.023]$ & $.014$ $[.003,.033]$ \\
Gemma-2 & $364$ / $34$ & $.763$ $[.655,.866]$ ($.618$) & $.763$ $[.660,.865]$ & $.155$ $[.125,.185]$ & $.035$ $[.015,.058]$ & $.018$ $[.008,.029]$ \\
Qwen2.5 & $180$ / $20$ & $.386$ $[.248,.515]$ ($.216$) & $.386$ $[.252,.519]$ & $.061$ $[.036,.089]$ & $.009$ $[.000,.027]$ & $.005$ $[.000,.013]$ \\
\bottomrule
\end{tabularx}
\end{table}

For the explicit-evidence bank, resampling its $42$ non-target texts leaves the direct-declaration
retained fractions of the margin contrast at $.097$ $[.077,.122]$, $.306$ $[.255,.362]$, $.196$
$[.166,.226]$, $.070$ $[.037,.101]$, and $.047$ $[.039,.057]$ (Mistral, Qwen3, OLMo-2, Gemma-2,
Qwen2.5; world-bootstrap intervals in Table~\ref{tab:explicit-evidence}), and the retained
probability-scale fractions at $.026$ $[.011,.045]$, $.058$ $[.022,.099]$, $.035$ $[.019,.056]$,
$-.003$ $[-.036,.031]$, and $.006$ $[-.000,.019]$.

\subsubsection{The Level Pattern on the Independent Items}\label{app:c4:independent}

The held-out worlds are separate parameter draws over the same $40$ non-target texts
(Appendix~\ref{text-level-units}), so they cannot show that the level pattern carries to unseen texts.
The independent items can: $200$ texts ($160$ non-target and $40$ target) over four attribute pairs
(risk weight $\to$ color, time preference $\to$ sound, risk weight $\to$ hue, time preference $\to$
layout), written by $21$ generator families without access to model outputs
(Appendix~\ref{app:suite}). The forced-choice condition of the report-versus-choice comparison
(Appendix~\ref{app:report-choice}) scores every non-target item under all four non-target states---absent,
$z{=}{+}1$, $z{=}{-}1$, and the neutral declaration (option-specific wording)---so the level split of Section~\ref{sec:declare} can
be computed on texts no prompt design was tuned on. This is a post hoc tabulation of cached scores, not a
certificate, and its design differs from the held-out evaluation in ways that travel with every number
below: the frame is the open frame (Appendix~\ref{app:prompts:tasks}), not the battery frame
with a selected added instruction; the neutral wording is the option-specific indifference sentence
(W3 of Appendix~\ref{neutral-wording-confirmation}), not W0; there are no $\pm.5$ levels and no
conservative unassigned-mass widening; the unit is the item, with generator-family cluster intervals
($21$ families, $B{=}2{,}000$); and the checkpoints are \modelid{Qwen3-32B}, \modelid{OLMo-2-32B},
\modelid{Gemma-2-27B}, \modelid{Mistral-Small-3.1-24B}, and \modelid{Qwen3.5-9B}, so
\modelid{Qwen2.5-32B} is not in this panel. Per item and state, normalized-choice TV between the two
target levels is computed within each order-by-map comparison and averaged over the four comparisons.

Table~\ref{tab:s2levels} reproduces the held-out level pattern: directional declarations leave at most $1$ of $320$ item--pole comparisons above $.15$, while the neutral declaration leaves
$51$--$81\%$ of items. The log-odds decomposition of Section~\ref{sec:declare}
(Appendix~\ref{bd-decomposition}) is restricted here, post hoc, to items whose bare margin $|\eta_w|$ is
at least $.5$ nat, so that margin ratios are defined ($132$, $83$, $155$, $47$, and $154$ of $160$ items
for \modelid{Mistral}, \modelid{OLMo-2}, \modelid{Qwen3}, \modelid{Qwen3.5-9B}, and \modelid{Gemma-2}; the
held-out decomposition uses no such filter). On those items directional declarations raise the shared
tendency $|\alpha_w|$ by a median $5.5$--$21.5$ nat and keep a median $.02$--$.10$ of the bare margin
($.29$ at \modelid{Qwen3.5-9B}'s negative pole), while the neutral declaration moves $|\alpha_w|$ by
$-0.2$ to $+1.2$ nat and keeps $.30$--$.73$ of it. Over all items the medians rise (\modelid{OLMo-2}'s
neutral ratio to $1.13$, \modelid{Qwen3.5-9B}'s negative pole to $.70$). Where both the neutral-level
and the bare shift exceed $.15$, the two share a sign in $419$ of $447$ item comparisons ($104$ of $117$
on \modelid{Mistral}, $59$ of $66$ on \modelid{Qwen3}). The bridge condition, which scores the same items in
the battery frame, keeps the neutral level high ($65$--$83\%$ of items) and directional exceedances at most $4$ of $320$ on four checkpoints, but \modelid{Qwen3.5-9B} exceeds $.15$ in $64$ of its $320$ directional comparisons there, so the directional bound holds panel-wide only in the open frame; sign agreement
there is $469$ of $555$. The pair columns show the attribute-pair dependence also visible in
Table~\ref{tab:reportchoice-new}: risk\,$\to$\,color exceeds on all items for every model, while hue,
sound, and layout range from $0$ to $40$ of $40$.

\begin{table}[!htb]\centering
\renewcommand{\arraystretch}{1.08}\footnotesize
\caption{\textbf{The level pattern on the independent items (post hoc).} Share whose normalized-choice TV
between target levels exceeds $.15$ in the open frame with the option-specific neutral wording:
of the $160$ non-target items for the bare and neutral states, of the $320$ item--pole comparisons for
directional declarations (both poles pooled), with generator-family cluster intervals for the neutral
state, and the count of the $40$ items per pair above $.15$ at the neutral level. Not pooled with the
held-out results.}
\label{tab:s2levels}
\setlength{\tabcolsep}{4pt}
\begin{tabular*}{\linewidth}{@{\extracolsep{\fill}}lccccccc@{}}
\toprule
& \multicolumn{3}{c}{Share above $.15$} & \multicolumn{4}{c}{Neutral: items above $.15$ of $40$} \\
\cmidrule(lr){2-4}\cmidrule(l){5-8}
Checkpoint & Bare & Directional & Neutral [95\% CI] & Color & Hue & Sound & Layout \\
\midrule
\modelid{Qwen3-32B} & $.78$ & $.000$ & $.51$ $[.32,.73]$ & $40$ & $3$ & $31$ & $8$ \\
\modelid{OLMo-2-32B} & $.76$ & $.003$ & $.58$ $[.28,.78]$ & $40$ & $0$ & $12$ & $40$ \\
\modelid{Gemma-2-27B} & $.79$ & $.000$ & $.81$ $[.65,.94]$ & $40$ & $39$ & $13$ & $38$ \\
\modelid{Mistral-Small-3.1-24B} & $.93$ & $.000$ & $.79$ $[.67,.88]$ & $40$ & $18$ & $35$ & $34$ \\
\modelid{Qwen3.5-9B} & $.67$ & $.000$ & $.67$ $[.43,.91]$ & $40$ & $33$ & $30$ & $4$ \\
\bottomrule
\end{tabular*}
\end{table}

\subsubsection{Direction, Scale, and Identifiability of the Neutral-Level Response}\label{app:c4:diagnostics}
\paragraph{Direction and scale of the neutral-level
shifts.}\label{direction-and-scale-of-the-neutral-level-shifts}

The neutral-level exceedances are not near-tie instability. For each world we compare the signed
neutral-level shift, $P(\text{opt}_1\mid t=+1)-P(\text{opt}_1\mid t=-1)$, with the bare-prompt shift
in the same world. Where both exceed .15 in absolute value, they have the same sign in 1,237 of
1,240 cases; the three exceptions are on Qwen2.5. The declaration-only condition gives 1,285 of 1,297.
Relative to the unstated ground-truth answer the direction is close to evenly split, 712 toward and 653
away, as expected when the answer is decided by the declared target level rather than by the hidden base
constant. The shifts are also large on the semantic log-odds scale (Table~\ref{tab:neutral-shifts}). Argmax flips count, among all $477$ worlds, those whose more probable option at $z{=}0$ differs between the two target levels.

\begin{table}[!htb]\centering
\renewcommand{\arraystretch}{1.08}\footnotesize
\caption{\textbf{Neutral-level shifts, sign agreement, and semantic log-odds changes.} Large neutral-level shifts almost never oppose the bare prompt ($3$ in total, all on Qwen2.5), split roughly evenly toward and away from the ground truth, and change semantic log-odds more at $z=0$ ($2.96$--$10.66$ nat) than at the extremes ($0.34$--$2.87$ nat).}
\label{tab:neutral-shifts}
\setlength{\tabcolsep}{3pt}
\begin{tabular*}{\linewidth}{@{\extracolsep{\fill}}lrrrcrrr@{}}
\toprule
Model & \shortstack{Large shifts\\at $z=0$} & \shortstack{Same sign\\as bare} & Opposite & \shortstack{Toward / away\\from truth} & \shortstack{Argmax\\flips} & \shortstack{Log-odds (nat)\\$z=0$} & \shortstack{Log-odds (nat)\\extremes} \\
\midrule
Mistral & 285 & 255 & 0 & 153 / 132 & 165 & 2.96 & 0.34 \\
Qwen3 & 248 & 222 & 0 & 131 / 117 & 230 & 10.66 & 1.31 \\
OLMo-2 & 288 & 277 & 0 & 150 / 138 & 222 & 6.65 & 0.91 \\
Gemma-2 & 364 & 321 & 0 & 181 / 183 & 296 & 10.03 & 0.88 \\
Qwen2.5 & 180 & 162 & 3 & 97 / 83 & 127 & 9.79 & 2.87 \\
\bottomrule
\end{tabular*}
\end{table}

Counts use each selected candidate on 477 in-scope held-out worlds; a shift is large when normalized-choice TV
exceeds .15. Same-sign and opposite-sign counts include only worlds whose bare-prompt shift is also
large. Log-odds columns are mean absolute changes between target levels of
$\log P(\text{opt}_1)-\log P(\text{opt}_2)$ from cached answer log-probabilities without clipping;
the extremes column pools $z=-1$ and $z=+1$ (954 paired comparisons over 477 worlds). At directional extremes the
change exceeds 1 nat in 8\% (Mistral), 49\% (Qwen3), 34\% (OLMo-2), and 33\% (Gemma-2) under the
relevance instruction (strong), and 66\% for Qwen2.5 under the attribute-scope instruction.

\paragraph{Identifiability of the neutral ground truth and decomposition of the mean
conditions.}\label{identifiability-of-the-neutral-oracle-and-decomposition-of-the-mean-conditions}

Non-target options have zero target loading, so ground-truth utility is $z\,x_Z(o)+\mathrm{base}(o)$, with
$x_Z(o)$ the non-target loading and $\mathrm{base}(o)$ the hidden base constant. At $z=0$ the argmax
is the option with larger $\mathrm{base}(o)$, unstated in the rendering. A check over all $256$
non-target worlds of the generator's four-pair development bank ($64$ per attribute pair; these are not
the $477$ held-out worlds) finds the neutral answer equal to that option in every world, and in that
bank the neutral level has the smallest mean ground-truth margin: the top-two utility gap is .55 at $z=0$,
1.07--1.19 at $|z|=.5$, and 2.19--2.31 at $|z|=1$. Table~\ref{tab:neutral-oracle} reports fidelity and
sensitivity on the held-out worlds by non-target level, including the non-identifiable neutral ground truth
stratum.

\begin{table}[!htb]\centering
\renewcommand{\arraystretch}{1.08}\small
\caption{\textbf{Fidelity and sensitivity by non-target level.} Fidelity error is about $.49$ at $z=0$ on every model, where the ground truth is not identifiable from the prompt, against at most $.052$ at the other levels; excluding $z=0$ therefore lowers both means.}
\label{tab:neutral-oracle}
\setlength{\tabcolsep}{4pt}
\begin{tabular*}{\linewidth}{@{\extracolsep{\fill}}llll@{}}
\toprule
 & Fidelity error by & Mean fidelity error, & Mean sensitivity, \\
Model & $z=(-1,-.5,0,.5,1)$ & all levels / directional only & all levels / directional only \\
\midrule
Mistral & .003, .008, .492, .032, .003 & .108 / .011 & .081 / .014 \\
Qwen3 & .000, .000, .494, .045, .019 & .112 / .016 & .107 / .013 \\
OLMo-2 & .000, .007, .492, .034, .014 & .109 / .014 & .104 / .013 \\
Gemma-2 & .007, .011, .487, .052, .000 & .111 / .018 & .155 / .035 \\
Qwen2.5 & .000, .000, .499, .012, .006 & .103 / .005 & .061 / .009 \\
\bottomrule
\end{tabular*}
\end{table}

Values are complete-case normalized-choice quantities for the selected candidate; certificate
conditions add unassigned-mass widening (mean conservative sensitivities .168, .138, .105, .156,
and .061). Fidelity error at $z=0$ is .485--.493 for the declaration-only condition as well. The mean
fidelity condition is therefore dominated by a stratum whose reference the model cannot recover from
the prompt, and the mean sensitivity condition by the same stratum's completion. Neither decomposition
changes the declared outcomes; both are diagnostics of what the outcomes measure. These quantities are
recomputed from the cached held-out scores with the evaluation code's cell transformations: no model
was rescored, no candidate reselected, and no threshold changed.

\paragraph{Log-odds decomposition of the declared levels.}\label{bd-decomposition}

For a non-target comparison with semantic log-odds $\ell_t=\log P(\mathrm{opt}_1)-\log P(\mathrm{opt}_2)$
at the two target levels, write $\alpha_w=(\ell_{+}+\ell_{-})/2$ and $\eta_w=(\ell_{+}-\ell_{-})/2$, so that
$\ell_t=\alpha_w+t\,\eta_w$ exactly and the normalized two-option TV is
\begin{equation}\label{eq:tv-decomp}
\operatorname{TV}=|\sigma(\alpha_w+\eta_w)-\sigma(\alpha_w-\eta_w)|.
\end{equation} The decomposition
is an algebraic re-expression of the cached answer log-probabilities (no clipping), not a mechanism
claim; $\alpha_w$ is the answer tendency shared by the two levels and $\eta_w$ the target-induced margin.
A declaration could reduce TV by moving $|\alpha_w|$ toward saturation with $\eta_w$ intact, by reducing $|\eta_w|$, or
both. Table~\ref{tab:bd} compares, world by world, the bare prompt with each checkpoint's selected
candidate at the declared levels ($477$ held-out worlds; post hoc). At the
directional extremes $|\alpha_w|$ rises by $4.8$--$25.9$ nat and the median world retains $.03$--$.14$ of
its bare $|\eta_w|$ (below half in $69$--$97\%$ of worlds); at $z{=}0$ $|\alpha_w|$ rises by $.18$--$.76$ nat
($3.38$ for Qwen2.5) and the median world retains $.37$--$.87$ of its bare $|\eta_w|$ with the bare sign
in $80$--$87\%$ of worlds. In the explicit-evidence bank ($128$ non-target comparisons per
$\rho$-level cell) the direct declaration raises $|\alpha_w|$ by $4.0$--$26.9$ nat and leaves a median
$.05$--$.30$ of the bare $|\eta_w|$ across $\rho\in\{-.8,0,+.8\}$ and $z\in\{-1,+1\}$. Sub-nat
differences in this table sit inside the batch-padding band of
Appendix~\ref{internal-control-evidence-and-numerical-scope}; the level contrasts above are one to two
orders larger.

\begin{table}[!htb]\centering
\renewcommand{\arraystretch}{1.08}\footnotesize
\caption{\textbf{Shared tendency $\alpha_w$ and target-induced margin $\eta_w$ by declared level} (selected
candidate versus bare prompt, $477$ held-out worlds per model; means with $95\%$ world-bootstrap
intervals, nat). ``Median ratio'' is the median over worlds of $|\eta_w|/|\eta_{w,\mathrm{bare}}|$, ``$<\frac12$''
the share of worlds below half the bare margin, and ``same sign'' the share whose $\eta_w$ has the bare
sign; both are undefined for the bare reference rows (--).}
\label{tab:bd}
\setlength{\tabcolsep}{3.5pt}
\begin{tabular*}{\linewidth}{@{\extracolsep{\fill}}llcccccc@{}}
\toprule
Model & Level & mean $|\alpha_w|$ & mean $|\eta_w|$ & median ratio & $<\frac12$ & same sign & mean TV \\
\midrule
Mistral & bare & $1.21$ $[1.13,1.29]$ & $3.01$ $[2.85,3.16]$ & $1$ & -- & -- & $.68$ \\
 & $z{=}{-1}$ & $6.02$ $[5.98,6.07]$ & $.16$ $[.15,.18]$ & $.05$ & $.97$ & $.76$ & $.001$ \\
 & $z{=}0$ & $1.84$ $[1.75,1.94]$ & $1.48$ $[1.36,1.60]$ & $.39$ & $.54$ & $.81$ & $.348$ \\
 & $z{=}{+1}$ & $6.08$ $[6.03,6.13]$ & $.18$ $[.16,.21]$ & $.03$ & $.97$ & $.79$ & $.002$ \\
Qwen3 & bare & $3.48$ $[3.22,3.74]$ & $6.01$ $[5.52,6.48]$ & $1$ & -- & -- & $.53$ \\
 & $z{=}{-1}$ & $14.94$ $[14.74,15.14]$ & $.37$ $[.34,.40]$ & $.05$ & $.83$ & $.69$ & $.000$ \\
 & $z{=}0$ & $3.65$ $[3.42,3.88]$ & $5.33$ $[4.88,5.78]$ & $.87$ & $.18$ & $.80$ & $.483$ \\
 & $z{=}{+1}$ & $14.34$ $[14.02,14.65]$ & $.93$ $[.86,1.01]$ & $.14$ & $.69$ & $.78$ & $.028$ \\
OLMo-2 & bare & $1.98$ $[1.83,2.14]$ & $4.60$ $[4.31,4.90]$ & $1$ & -- & -- & $.64$ \\
 & $z{=}{-1}$ & $10.11$ $[10.03,10.19]$ & $.44$ $[.40,.48]$ & $.09$ & $.92$ & $.84$ & $.000$ \\
 & $z{=}0$ & $2.48$ $[2.34,2.63]$ & $3.33$ $[3.06,3.58]$ & $.76$ & $.29$ & $.83$ & $.470$ \\
 & $z{=}{+1}$ & $9.11$ $[8.95,9.26]$ & $.48$ $[.44,.51]$ & $.09$ & $.90$ & $.86$ & $.007$ \\
Gemma-2 & bare & $1.75$ $[1.61,1.89]$ & $5.84$ $[5.49,6.16]$ & $1$ & -- & -- & $.74$ \\
 & $z{=}{-1}$ & $10.12$ $[9.97,10.24]$ & $.42$ $[.39,.46]$ & $.06$ & $.92$ & $.88$ & $.013$ \\
 & $z{=}0$ & $2.51$ $[2.33,2.69]$ & $5.02$ $[4.73,5.32]$ & $.85$ & $.14$ & $.87$ & $.637$ \\
 & $z{=}{+1}$ & $10.57$ $[10.47,10.67]$ & $.45$ $[.41,.50]$ & $.08$ & $.93$ & $.84$ & $.000$ \\
Qwen2.5 & bare & $5.87$ $[5.50,6.23]$ & $13.29$ $[12.34,14.20]$ & $1$ & -- & -- & $.65$ \\
 & $z{=}{-1}$ & $31.74$ $[31.20,32.27]$ & $.96$ $[.89,1.04]$ & $.08$ & $.92$ & $.70$ & $.000$ \\
 & $z{=}0$ & $9.25$ $[8.92,9.62]$ & $4.89$ $[4.52,5.29]$ & $.37$ & $.67$ & $.82$ & $.267$ \\
 & $z{=}{+1}$ & $30.67$ $[30.17,31.14]$ & $1.90$ $[1.65,2.19]$ & $.14$ & $.82$ & $.69$ & $.013$ \\
\bottomrule
\end{tabular*}
\end{table}

\subsubsection{Wording and Instruction Controls}\label{neutral-wording-confirmation}

\paragraph{Neutral-wording confirmation: design (prespecified).} The confirmation asks whether the
neutral-level response depends on the no-leaning sentence and whether an explicit neutral decision
rule changes it. It uses \modelid{Qwen3-32B} and \modelid{Mistral-Small-3.1-24B}, the two certification
pairs, and the structured texts with their target counterparts ($80$ texts: $20$ non-target and $20$
target per pair); each text is scored in both presentation orders and both code maps (which controls
for the position, option-ID, and label-format sensitivities documented for multiple-choice prompting;
\citealp{mcqbias,optionorder,sclar2024formatspread}) and at both target values, under eight
conditions: the bare prompt, the directional declarations at $z{=}{\pm1}$, the declaration battery's neutral line
(W0, \emph{``The user has no consistent leaning in \{domain\}.''}), two general paraphrases of it
(W1, \emph{``The user's taste in \{domain\} shows no stable tendency either way.''}; W2,
\emph{``When it comes to \{domain\}, the user has no settled preference.''}), an option-specific
indifference statement (W3, \emph{``Between \{positive pole\} and \{negative pole\}, the user is
indifferent.''}), and W0 followed by a decision rule independent of the target (W4, \emph{``\ldots\ When
two options differ only in \{domain\}, the user takes whichever is listed first.''}); $5{,}120$ prompts
per model. Every prompt is scored at a fixed per-model padded width ($164$ tokens for Qwen3 and $152$
for Mistral, minimum four rows, full-sequence scoring of every answer code), so the bare, directional, and W0 rows are same-geometry baselines rather than the batch-local held-out numbers; the readout check (median probability of at least $.02$ on answer codes over the first $32$ prompts) passed on both models and all $10{,}240$ records parse. The unit is the question text; order, code map, and target
level are within-text repeats, and intervals are text-cluster bootstraps ($B{=}2{,}000$). The sensitivity
threshold is the paper's $.15$; no other threshold, selection, or wording search was permitted, and the
four outcome branches (all general wordings fail; only W1--W2 succeed; only the W4 rule succeeds; only W3 succeeds) were declared before scoring.

\paragraph{Wording results and limits.}
General no-preference paraphrases and the option-specific wording all leave a response, with
model- and pair-dependent magnitudes (Table~\ref{tab:wording}). The first-listed tie-break (W4)
reduces mean TV to $.13$--$.49$, but $6$--$18$ of $20$ texts per pair still exceed $.15$;
the rule is executed with probability $.62$--$.83$. Target efficacy under the neutral wordings
is $.96$--$1.00$. This confirms persistence for these wordings on two checkpoints and the same structured texts (the option-specific wording also on the six checkpoints of Table~\ref{tab:promptctl}), not unrestricted prompt robustness.

\begin{table}[!htb]\centering
\renewcommand{\arraystretch}{1.08}\footnotesize
\caption{\textbf{Neutral-wording confirmation: non-target sensitivity on the non-target texts.} Mean
normalized-choice TV between the two target values with $95\%$ text-cluster intervals, and the number of the $20$ texts per pair whose text-level mean TV exceeds $.15$. ``dir.'' pools
the two directional declarations.}
\label{tab:wording}
\setlength{\tabcolsep}{3pt}
\begin{tabularx}{\linewidth}{@{}ll >{\centering\arraybackslash}X >{\centering\arraybackslash}X >{\centering\arraybackslash}X >{\centering\arraybackslash}X >{\centering\arraybackslash}X >{\centering\arraybackslash}X >{\centering\arraybackslash}X@{}}
\toprule
Model & Pair & bare & dir.\ $z{=}{-1}$ / $+1$ & W0 battery line & W1 no stable tendency & W2 no settled preference & W3 indifferent between poles & W4 W0 $+$ first-listed rule \\
\midrule
Qwen3 & risk $\to$ color & $.91$ $[.80,.99]$; $20/20$ & $.00$ / $.00$; $0/20$ & $.94$ $[.85,1.00]$; $20/20$ & $.89$ $[.78,.97]$; $19/20$ & $.83$ $[.71,.93]$; $19/20$ & $.87$ $[.80,.94]$; $20/20$ & $.49$ $[.38,.62]$; $18/20$ \\
Qwen3 & time $\to$ sound & $.26$ $[.17,.34]$; $14/20$ & $.00$ / $.04$; $0$--$2/20$ & $.18$ $[.12,.25]$; $10/20$ & $.25$ $[.18,.32]$; $13/20$ & $.23$ $[.15,.31]$; $12/20$ & $.13$ $[.07,.20]$; $7/20$ & $.15$ $[.08,.22]$; $8/20$ \\
Mistral & risk $\to$ color & $.90$ $[.81,.96]$; $20/20$ & $.00$ / $.01$; $0/20$ & $.62$ $[.51,.73]$; $19/20$ & $.73$ $[.64,.81]$; $20/20$ & $.80$ $[.71,.88]$; $20/20$ & $.90$ $[.85,.94]$; $20/20$ & $.28$ $[.21,.35]$; $16/20$ \\
Mistral & time $\to$ sound & $.47$ $[.37,.56]$; $17/20$ & $.00$ / $.00$; $0/20$ & $.19$ $[.12,.26]$; $9/20$ & $.20$ $[.16,.26]$; $14/20$ & $.27$ $[.21,.33]$; $17/20$ & $.16$ $[.12,.21]$; $9/20$ & $.13$ $[.08,.17]$; $6/20$ \\
\bottomrule
\end{tabularx}
\end{table}

\paragraph{Relevance and scope instructions.}
The instruction control holds texts, model, options, and scoring fixed while crossing no extra
instruction, the selected strong relevance instruction, and the attribute-scope instruction. It uses
the neutral declaration (option-specific wording, W3), six checkpoints, and $20$ non-target plus $20$ target texts per
pair, with $2{,}000$ text-clustered bootstrap resamples. It is a descriptive comparison on texts,
not another held-out-world evaluation.
Without a relevance instruction, all $20$ risk$\to$color texts exceed $.15$ on all six checkpoints (per-text values in the
code and data release; means in Table~\ref{tab:promptctl});
the response therefore is not created by that sentence. Of the twelve relevance-minus-none
contrasts, eight intervals include zero, three indicate increases, and one a decrease. Scope
reduces the response in eight cells but empties the $.15$ tail in only one; it explicitly names
the tested non-target domains, so this is not evidence of protection for unnamed domains.
Target retention is $.989$--$1.017$ on the primary five under the three instructions, but on
Qwen3.5-9B it is $.780$ under relevance versus $.887$ under none; these ratios use this condition's
own denominator. Each instruction is contrasted only with no instruction, so relevance and scope are not ranked against each other.

\begin{table}[!htb]\centering
\renewcommand{\arraystretch}{1.08}\footnotesize
\caption{\textbf{The cross-attribute response is present with no relevance instruction and is reduced,
not removed, by an attribute-scope instruction.} Non-target-query TV between $t{=}{+}1$ and $t{=}{-}1$
under the neutral declaration (option-specific wording, W3), with no extra instruction; the two contrasts are paired
condition-minus-none differences over the same $20$ texts with $95\%$ bootstrap intervals, so the
level reached under an instruction is the first column plus that contrast. TV is computed over the
legal answer codes after normalization, so it is a change of distribution within them rather than an
error rate, and the contrasts are paired descriptive differences, not causal estimates. No
multiplicity correction over the $24$ paired comparisons; the texts are the structured texts, not
held-out worlds.}
\label{tab:promptctl}
\setlength{\tabcolsep}{4pt}
\begin{tabular*}{\linewidth}{@{\extracolsep{\fill}}lccc@{}}
\toprule
Checkpoint & No instruction & Relevance $-$ none & Scope $-$ none \\
\midrule
\multicolumn{4}{@{}l}{\emph{risk weight $\to$ color preference}} \\
\modelid{Qwen2.5-32B}          & $.979$ $[.958,.998]$ & $-.023$ $[-.047,.003]$ & $-.436$ $[-.531,-.343]$ \\
\modelid{Qwen3-32B}            & $.876$ $[.806,.937]$ & $+.040$ $[-.019,.103]$ & $-.592$ $[-.680,-.499]$ \\
\modelid{OLMo-2-32B}           & $.774$ $[.710,.829]$ & $+.036$ $[.018,.055]$  & $-.068$ $[-.100,-.038]$ \\
\modelid{Gemma-2-27B}          & $.945$ $[.879,.997]$ & $+.024$ $[-.013,.075]$ & $+.028$ $[-.023,.088]$ \\
\modelid{Mistral-Small-3.1-24B}& $.897$ $[.849,.936]$ & $-.012$ $[-.028,.009]$ & $-.300$ $[-.344,-.261]$ \\
\modelid{Qwen3.5-9B}           & $.530$ $[.469,.588]$ & $-.000$ $[-.012,.011]$ & $-.148$ $[-.177,-.118]$ \\
\midrule
\multicolumn{4}{@{}l}{\emph{time preference $\to$ sound preference}} \\
\modelid{Qwen2.5-32B}          & $.255$ $[.180,.331]$ & $-.038$ $[-.090,.016]$ & $-.060$ $[-.136,.012]$ \\
\modelid{Qwen3-32B}            & $.134$ $[.070,.206]$ & $+.004$ $[-.041,.047]$ & $-.022$ $[-.100,.052]$ \\
\modelid{OLMo-2-32B}           & $.228$ $[.173,.288]$ & $-.004$ $[-.018,.011]$ & $-.041$ $[-.074,-.007]$ \\
\modelid{Gemma-2-27B}          & $.188$ $[.122,.266]$ & $+.073$ $[.024,.127]$  & $-.093$ $[-.135,-.054]$ \\
\modelid{Mistral-Small-3.1-24B}& $.162$ $[.120,.209]$ & $-.051$ $[-.091,-.014]$& $-.089$ $[-.133,-.049]$ \\
\modelid{Qwen3.5-9B}           & $.210$ $[.186,.237]$ & $+.051$ $[.025,.080]$  & $+.004$ $[-.021,.030]$ \\
\bottomrule
\end{tabular*}
\end{table}

\subsubsection{Explicit Specification Leaves a Residual in Other Designs}\label{app:c4:residual}

In the explicit-evidence bank a direct declaration leaves a log-odds residual ($5$--$31\%$ of the bare
margin contrast; Table~\ref{tab:explicit-evidence}) even where its probability-scale contrast is near
zero. Two other designs that specify the non-target attribute explicitly also leave a residual
(descriptive). On the five primary checkpoints, with the discriminant explicitly fixed in a factorized
profile (the orthogonal target-by-discriminant design of Appendix~\ref{app:sr}), the target still shifts it: a mixed model of answer-token log-odds relative to the no-information reference, with risk, style, and their interaction coded at $\pm1$ and crossed checkpoint and item random intercepts, estimates the risk coefficient on style-item log-odds at $\beta{=}3.01$ ($95\%$ CI $[2.53,3.49]$; $n{=}1{,}880=5\times94\times4$). Averaged over style, the high-minus-low risk contrast is $2\beta$, about $6.02$ nat, on a different response scale from the TV measures of this appendix. On the eight-model black-box panel, the standardized surgicality ratio of the black-box factorial pools to
$.068$ $[.037,.099]$, $.055$ $[.039,.071]$, and $.059$ $[.034,.085]$ for the bare profile with the
target clause first, the bare profile with the discriminant clause first, and the structured table,
with per-model values for the first form in Table~\ref{tab:endpoints} and between-model prediction
intervals that span zero for two of the three prompt forms (Appendix~\ref{app:repro}). Both are consistent with the
log-odds residual of directional declarations: explicit specification suppresses most of the
cross-attribute influence without removing it. The designs, response scales, and model panels differ,
so these quantities are not pooled or compared in magnitude.

\subsubsection{Numerical Scope of the Scores}\label{internal-control-evidence-and-numerical-scope}

\paragraph{Scoring numerics.} All readouts are two-option answer log-probabilities under bf16
inference with left padding. Padded batch width can change a prompt's score, most on saturated cells, where bf16 log-probabilities move
on a $.0625$--$.5$ nat grid. Behavioral scorings, including the validation and held-out banks and the
explicit-evidence bank, retain batch-local padding, so held-out neutral results are checked directly. In the held-out bank both target
levels of every non-target comparison were scored in the same batch of sixteen ($2{,}385$ of $2{,}385$
pairs per model and condition), hence one padded width per pair. The perturbation is adversarial within
budget $\nu$: for cached margins $\ell_{+}$ and $\ell_{-}$ (the two target levels), the retained
sensitivity is $L^{(\nu)}_w=\min_{|e_+|,|e_-|\le\nu}\,|\sigma(\ell_++e_+)-\sigma(\ell_-+e_-)|$, which for
a monotone $\sigma$ equals $|\sigma(\ell_{\max}-\nu)-\sigma(\ell_{\min}+\nu)|$ when the margins are
more than $2\nu$ apart and zero otherwise; an exceedance is robust when $L^{(\nu)}_w>.15$, a
non-exceedance is fragile when the opposite perturbation exceeds $.15$, and a sign agreement is robust
when neither pair's margins can cross. Table~\ref{tab:noise} reports the counts at $\nu=.5$ nat and at a
conservative $\nu=1.75$ nat; these are sensitivity-analysis budgets chosen from drifts seen in our scoring
runs, not error bounds on every computation path.

\begin{table}[!htb]\centering
\renewcommand{\arraystretch}{1.08}\footnotesize
\caption{\textbf{Numerical robustness of the neutral-level counts} (selected candidate, $477$
held-out worlds per model). Counts retained under adversarial margin perturbations of $\nu=.5$
and $1.75$ nat; ``enter'' counts non-exceeding worlds that the opposite perturbation could push above
$.15$ at $\nu=.5$. Sign agreements are the $1{,}237$ comparisons reported in Section~\ref{sec:declare}; the last
column also requires the $.15$ magnitude condition to survive.}
\label{tab:noise}
\setlength{\tabcolsep}{3pt}
\begin{tabularx}{\linewidth}{@{}l >{\centering\arraybackslash}X >{\centering\arraybackslash}X >{\centering\arraybackslash}X >{\centering\arraybackslash}X@{}}
\toprule
Model & $z{=}0$ exceedances\newline all / $\nu{=}.5$ / $1.75$ & Normalized-TV tail\newline all / $\nu{=}.5$ (enter) & Sign kept\newline $\nu{=}.5$ / $1.75$ of $n$ & Sign and magnitude kept\newline $\nu{=}.5$ / $1.75$ \\
\midrule
Mistral & $285$ / $203$ / $123$ & $293$ / $203$ ($85$) & $242$ / $164$ of $255$ & $182$ / $123$ \\
Qwen3   & $248$ / $232$ / $222$ & $279$ / $258$ ($16$) & $222$ / $222$ of $222$ & $222$ / $216$ \\
OLMo-2    & $288$ / $253$ / $194$ & $296$ / $267$ ($48$) & $277$ / $257$ of $277$ & $239$ / $189$ \\
Gemma-2 & $364$ / $351$ / $279$ & $364$ / $351$ ($16$) & $321$ / $305$ of $321$ & $303$ / $268$ \\
Qwen2.5 & $180$ / $144$ / $115$ & $184$ / $154$ ($14$) & $162$ / $157$ of $162$ & $132$ / $115$ \\
\bottomrule
\end{tabularx}
\end{table}

At $\nu=.5$ nat, $71$--$96\%$ of the neutral-level exceedances and $1{,}224$ of the $1{,}237$ sign
agreements survive. The neutral-level log-odds changes of $3$--$10.7$ nat lie far above the band,
whereas log-odds changes below about half a nat, such as Mistral's directional-extremes mean of $.34$,
are not distinguishable from it.

\subsection{Reporting a Declared State versus Preserving It in Choice}\label{app:report-choice}\label{app:c5}
The report, forced-choice, and indifferent-option tasks branch from the same description and alternatives in
\emph{independent} contexts; one task is never conditioned on another's answer. This is an exploratory
task-level comparison, not a within-response sequence or a mechanistic localization.

\paragraph{Design and estimands.}
Report offers the two pole statements, indifference, and ``not stated''; forced choice offers the two
alternatives; the indifferent-option task adds ``indifferent'' and ``not enough information'', kept as separate categories.
Both option orders, balanced code maps, both target levels, and four non-target states (absent, both
directional poles, and the neutral declaration in its option-specific wording) are scored. The open frame avoids a
system instruction that would otherwise force a binary answer; a battery-frame bridge is scored
separately (battery frame in Appendix~\ref{app:prompts:battery}; open-frame templates in
Appendix~\ref{app:prompts:tasks}). Full legal response sequences
are scored, not generated.

$R$ averages, over order and map and then items, the minimum over target levels of the correct report
category's probability. $L$ is TV between target levels over the task's full legal menu, not the
indifferent-option distribution renormalized onto its two alternatives. Indifferent-option and
forced-choice $L$ therefore
have different answer spaces. Retention divides each task's signed target effect by that task's
own effect under an unspecified non-target state. The references $R\ge.90$, choice $L>.15$, and
report $L\le.05$ are exploratory screens, not equivalence tests or certificates.

\paragraph{Samples, units, and answer coverage.}
Structured texts: $40$ non-target and $40$ target texts, two attribute pairs, six checkpoints
(the primary five plus Qwen3.5-9B), $20$ non-target texts per model--pair cell. Independent items:
$160$ non-target and $40$ target texts, four pairs, five checkpoints (Qwen3.5-9B replaces Qwen2.5),
$40$ non-target and $10$ target items per cell. The latter were fixed before scoring and not
human-normed. Color/sound reuse the constructs on new content; hue/layout change both attribute
and content. The sets are not pooled. Text bootstrap intervals use $B=2{,}000$; independent-item
intervals cluster by generator family. Layout has one family on both sides, and sound one on its
target side, so those quantities have no family-level interval. Retention intervals were not
computed for the structured texts. All records parse; median outside-menu mass is at most $.018$,
but minimum covered mass ranges over $.586$--$.999$ across structured-text checkpoints and
$.178$--$.962$ across independent-item checkpoints. Normalization does not erase this coverage limit.

\paragraph{Full per-model results.}
Table~\ref{tab:reportchoice} retains every model--pair cell, including exceptions. Qwen3.5-9B's
structured risk$\to$color report is below the $.90$ screen ($.892$), and Qwen2.5's time$\to$sound
choice has no material residual. The indifferent option lowers sensitivity in most cells but is not a universal repair: Gemma-2's risk$\to$color
indifferent-option sensitivity remains $.943$ (structured) / $.974$ (independent), whereas OLMo-2's low indifferent-option sensitivity comes with
retention $.133$ / $.285$. A lower indifferent-option TV alone is not preserved underlying preference.
\begin{table}[!htb]\centering
\renewcommand{\arraystretch}{1.12}\footnotesize
\caption{\textbf{Reporting a declared neutral state versus using it, per checkpoint and attribute
pair.} Neutral declaration (option-specific wording); both orders, both target levels, balanced code maps. \textbf{(a)}
Structured texts ($20$ non-target texts per model--pair cell). \textbf{(b)} Independent items ($40$
non-target and $10$ target items per cell): color and sound keep the structured texts' non-target
attributes on different decision content, while hue and layout replace the non-target attribute with one whose
poles are matched in intensity and change the item content at the same time. The two item sets are not
pooled. $R$: order- and map-averaged \emph{minimum over target levels} of the probability of the correct
reported category. $L$: paired TV between target levels over each task's full legal menu, never
renormalized onto the two alternatives. $P(\text{indiff.})$: indifferent-option mass on \emph{indifferent}. Ret.:
indifferent-option target effect divided by the same task's effect under the unspecified state; no cell of that
column carries an interval in (a), and in (b) only the color and hue rows carry a family-clustered one
(\modelid{OLMo-2} risk\,$\to$\,color $.285$ $[.234,.316]$; \modelid{OLMo-2} risk\,$\to$\,hue $.408$
$[.252,.590]$). $^{\ddagger}$: the layout pair rests on one generator family on both sides, so \emph{no}
quantity in that row carries a family-level interval. $^{\dagger}$: the sound pair's target side rests
on one family, so its retention carries none. Entries below $.001$ are shown as $<.001$; in (a),
\modelid{Gemma-2} on risk\,$\to$\,color falls from $.9430$ to $.9426$. Screening references ($R\ge.90$,
$L>.15$) are exploratory.}
\makeatletter
\protected@edef\@currentlabel{\thetable a}\label{tab:reportchoice}%
\protected@edef\@currentlabel{\thetable b}\label{tab:reportchoice-new}%
\makeatother
\setlength{\tabcolsep}{4.5pt}
\begin{tabular*}{\linewidth}{@{\extracolsep{\fill}}llcccccc@{}}
\toprule
Model & Pair & $R$ & $L_{\mathrm{report}}$ & $L_{\mathrm{forced}}$ & $L_{\mathrm{indiff}}$ & $P(\text{indiff.})$ & Ret. \\
\midrule
\multicolumn{8}{@{}l}{\emph{(a) Structured texts, six checkpoints}}\\
\modelid{Qwen2.5}    & risk $\to$ color & $1.000$ & $<.001$ & $.368$ & $<.001$ & $1.000$ & $1.000$ \\
\modelid{Qwen2.5}    & time $\to$ sound  & $1.000$ & $<.001$ & $<.001$ & $.009$ & $.995$ & $1.000$ \\
\modelid{Qwen3}      & risk $\to$ color & $.982$ & $.010$ & $.967$ & $.257$ & $.858$ & $.979$ \\
\modelid{Qwen3}      & time $\to$ sound  & $.997$ & $.003$ & $.338$ & $.134$ & $.911$ & $1.000$ \\
\modelid{Mistral}    & risk $\to$ color & $.946$ & $.044$ & $.834$ & $.561$ & $.622$ & $1.022$ \\
\modelid{Mistral}    & time $\to$ sound  & $.975$ & $.008$ & $.238$ & $.148$ & $.908$ & $.999$ \\
\modelid{Gemma-2}    & risk $\to$ color & $1.000$ & $<.001$ & $.943$ & $.943$  & $.205$  & $.970$ \\
\modelid{Gemma-2}    & time $\to$ sound  & $1.000$ & $<.001$ & $.156$ & $.075$  & $.962$  & $1.000$ \\
\modelid{OLMo-2}       & risk $\to$ color & $.989$  & $.002$  & $.599$ & $.005$  & $.995$  & $.133$ \\
\modelid{OLMo-2}       & time $\to$ sound  & $.989$  & $.005$  & $.212$ & $.019$  & $.988$  & $.640$ \\
\modelid{Qwen3.5-9B} & risk $\to$ color & $.892$  & $.063$  & $.526$ & $.287$  & $.768$  & $.791$ \\
\modelid{Qwen3.5-9B} & time $\to$ sound  & $.971$  & $.010$  & $.179$ & $.034$  & $.919$  & $.964$ \\
\addlinespace
\multicolumn{8}{@{}l}{\emph{(b) Independent items, five checkpoints}}\\
\modelid{Qwen3}      & risk $\to$ color              & $.976$  & $.013$  & $.974$ & $.251$  & $.861$  & $.946$ \\
\modelid{Qwen3}      & risk $\to$ hue                 & $1.000$ & $<.001$ & $.043$ & $<.001$ & $1.000$ & $.994$ \\
\modelid{Qwen3}      & time $\to$ sound               & $1.000$ & $<.001$ & $.509$ & $.103$  & $.945$  & $1.000^{\dagger}$ \\
\modelid{Qwen3}      & time $\to$ layout$^{\ddagger}$ & $1.000$ & $<.001$ & $.088$ & $<.001$ & $1.000$ & $1.000$ \\
\addlinespace[2pt]
\modelid{Mistral}    & risk $\to$ color             & $.985$ & $.008$ & $.729$ & $.489$ & $.683$ & $1.003$ \\
\modelid{Mistral}    & risk $\to$ hue                 & $.995$ & $.001$ & $.142$ & $.002$ & $.995$ & $.998$ \\
\modelid{Mistral}    & time $\to$ sound               & $.982$ & $.005$ & $.257$ & $.106$ & $.933$ & $1.000^{\dagger}$ \\
\modelid{Mistral}    & time $\to$ layout$^{\ddagger}$ & $.992$ & $.001$ & $.209$ & $.026$ & $.973$ & $1.000$ \\
\addlinespace[2pt]
\modelid{Gemma-2}    & risk $\to$ color              & $1.000$ & $<.001$ & $.987$ & $.974$ & $.346$ & $.965$ \\
\modelid{Gemma-2}    & risk $\to$ hue                 & $.986$  & $.014$  & $.502$ & $.317$ & $.831$ & $.998$ \\
\modelid{Gemma-2}    & time $\to$ sound               & $1.000$ & $<.001$ & $.118$ & $.032$ & $.984$ & $1.000^{\dagger}$ \\
\modelid{Gemma-2}    & time $\to$ layout$^{\ddagger}$ & $1.000$ & $<.001$ & $.295$ & $.027$ & $.986$ & $1.000$ \\
\addlinespace[2pt]
\modelid{OLMo-2}       & risk $\to$ color              & $.993$ & $<.001$ & $.586$ & $.005$ & $.994$ & $.285$ \\
\modelid{OLMo-2}       & risk $\to$ hue                 & $.987$ & $.002$ & $.087$ & $.002$ & $.998$ & $.408$ \\
\modelid{OLMo-2}       & time $\to$ sound               & $.992$ & $.002$ & $.140$ & $.021$ & $.987$ & $.940^{\dagger}$ \\
\modelid{OLMo-2}       & time $\to$ layout$^{\ddagger}$ & $.935$ & $.044$ & $.325$ & $.005$ & $.996$ & $.683$ \\
\addlinespace[2pt]
\modelid{Qwen3.5-9B} & risk $\to$ color              & $.901$ & $.053$ & $.391$ & $.077$ & $.888$ & $.949$ \\
\modelid{Qwen3.5-9B} & risk $\to$ hue                 & $.965$ & $.019$ & $.204$ & $.021$ & $.974$ & $.961$ \\
\modelid{Qwen3.5-9B} & time $\to$ sound               & $.980$ & $.003$ & $.192$ & $.020$ & $.954$ & $.983^{\dagger}$ \\
\modelid{Qwen3.5-9B} & time $\to$ layout$^{\ddagger}$ & $.973$ & $.004$ & $.103$ & $.004$ & $.979$ & $.957$ \\
\bottomrule
\end{tabular*}
\end{table}
\paragraph{Frame control: the effect exceeds $.15$ under both frames in every risk$\to$color cell and in $10$ of the $21$ others.}
The battery frame orders the model to pick exactly one option, which would contradict the report and indifferent-option menus at the system level, so the three main tasks use an open frame and only the bridge
keeps the battery frame. On \modelid{Qwen3-32B} and \modelid{Mistral-Small-3.1-24B}, the two checkpoints
of the neutral-wording confirmation, the bridge reproduces that confirmation's values on the same texts
(Table~\ref{tab:wording}) within $\pm.007$ in all eight comparisons (two pairs, unspecified state and neutral
declaration); no such reference exists for the other checkpoints or for the independent items, so this agreement is verified only on those two checkpoints. Neither frame bounds the other,
and which one leaves the larger effect varies by cell: the open frame is larger in $4$ of the
$12$ structured-text cells and $8$ of the $20$ independent-item cells. On risk\,$\to$\,color the effect exceeds $.15$ under both frames in all six structured-text and all five independent-item cells, so
there it is not an artifact of the open frame. Elsewhere the frames can disagree about whether
there is an effect at all, in both directions (battery-frame against open-frame values): four cells exceed $.15$ only under the battery frame (\modelid{Qwen2.5} time\,$\to$\,sound, $.257$ against $<.001$; \modelid{Qwen3}
time\,$\to$\,layout, $.599$ against $.088$; \modelid{OLMo-2} risk\,$\to$\,hue, $.297$ against $.087$;
\modelid{Mistral} risk\,$\to$\,hue, $.180$ against $.142$), three only under the open frame
(\modelid{Qwen3} time\,$\to$\,sound, $.133$ against $.338$ on the structured texts and $.145$ against $.509$
among the independent items; \modelid{Mistral} time\,$\to$\,layout, $.138$ against $.209$), and four under
neither (\modelid{Qwen3} risk\,$\to$\,hue, $.121$ / $.043$; \modelid{OLMo-2} and \modelid{Gemma-2}
time\,$\to$\,sound among the independent items, $.137$ / $.140$ and $.143$ / $.118$; \modelid{Qwen3.5-9B}
time\,$\to$\,layout, $.069$ / $.103$). Within this exploratory comparison, the effect is therefore frame-independent for risk\,$\to$\,color; in the other $21$ cells the two frames agree on whether it exceeds $.15$ in $14$ ($10$ above, $4$ below), so frame-independence is not claimed panel-wide.

\paragraph{Under the neutral declaration, inadequate reporting never coexists with a stable choice; the
unspecified state differs.} Classification is per text or item: report-adequate iff its $R\ge.90$,
choice-sensitive iff its forced-choice $L>.15$. Under the neutral declaration the $240$ structured-text
comparisons ($40$ distinct non-target texts in each of six checkpoints) split into $161$ adequate report /
sensitive choice, $66$ adequate / stable, $13$ inadequate / sensitive, and $0$ inadequate / stable; the
$800$ independent-item comparisons ($5$ checkpoints $\times$ $4$ pairs $\times$ $40$ items) split into
$515$ / $262$ / $23$ / $0$, by pair $179$/$0$/$21$/$0$ (risk\,$\to$\,color), $91$/$107$/$2$/$0$ (hue),
$121$/$79$/$0$/$0$ (sound) and $124$/$76$/$0$/$0$ (layout). In neither item set does a comparison under
the neutral declaration combine inadequate reporting with a stable choice. Other non-target states do fill
that cell: on the structured texts $14$ of $240$ under the unspecified state and $2$ of $240$ at
$z{=}{+}1$, on the independent items $73$ of $800$ under the unspecified state and $42$ of $1{,}600$
across the two directional states. The unspecified state behaves differently throughout and is reported
separately: $131$ / $26$ / $69$ / $14$ on the structured texts---with \modelid{Mistral} on
risk\,$\to$\,color at $0$ of $20$ adequate ($R=.763$), joined by \modelid{Qwen3.5-9B} at $3$ of $20$ on
risk\,$\to$\,color ($R=.817$) and $1$ of $20$ on time\,$\to$\,sound ($R=.786$)---and $330$ / $98$ /
$299$ / $73$ on the independent items, where report correctness falls to $.734$ $[.709,.766]$
(\modelid{Mistral}, risk\,$\to$\,color). Failing to answer \emph{not stated} for an omitted attribute is
a different failure from failing to use a declared neutral one; in the report task the reference is text
entailment, while an unspecified profile may license inference in a preference task, so the two are not
interchangeable. The report task supplies an identifiable reference at the neutral level for the
\emph{report} readout only; the forced-choice neutral cell remains unidentifiable and is still read as a
sensitivity result.

\paragraph{Directional declarations are followed in all three tasks, sometimes at a cost to the target.}
On the structured texts directional agreement is $.975$--$1.000$ in the report task, $.976$--$1.000$ in
forced choice and $.949$--$1.000$ in the indifferent-option task; its mass on \emph{indifferent} under
directional declarations is $\le.016$, and directional non-target $L$ is $\le.018$ (forced choice) and
$\le.036$ (indifferent option). On the independent items directional forced-choice $L\le.012$ and indifferent-option $L\le.009$, mass on \emph{indifferent} is $\le.006$ and on \emph{not enough information}
$\le.003$, indifferent-option pole agreement is $.980$--$1.000$ at $z{=}{+}1$ and $.988$--$1.000$ at $z{=}{-}1$, and
directional report correctness is $.830$--$1.000$. The low sensitivities are therefore not bought by
general abstention. They are not cost-free at every attribute value either: at $z{=}{-}1$ on
risk\,$\to$\,color the forced-choice and indifferent-option target effects retain $.241$/$.152$ (\modelid{OLMo-2}),
$.433$/$.359$ (\modelid{Qwen3}), $.532$/$.493$ (\modelid{Gemma-2}), $.652$/$.716$ (\modelid{Qwen3.5-9B})
and $.840$/$.848$ (\modelid{Mistral}) of the unspecified-state effect on the structured texts, and $.117$
$[.009,.248]$ / $.117$ $[.012,.248]$ (\modelid{OLMo-2}), $.308$/$.279$ (\modelid{Qwen3}), $.561$/$.562$
(\modelid{Gemma-2}), $.865$/$.841$ (\modelid{Qwen3.5-9B}) and $.925$/$.948$ (\modelid{Mistral}) on the
independent items.

\paragraph{Scope of the inference.}
Correct reporting establishes that the declaration can be extracted when the task asks for it;
it does not establish that choice uses the same representation. A target-sensitive choice is not
proof that an internal neutral state has been overwritten.

\paragraph{A prespecified internal state swap failed its positive control.}\label{app:report-choice:internal}
On \modelid{Mistral-Small-3.1-24B}, a four-state readout fitted on held-out material (the two
non-certification pairs, social contact\,$\to$\,layout and time of day\,$\to$\,color, and three neutral
wordings, never the sentence used above) recovers the non-target
state at the selected site with $.996$ balanced accuracy on held-out development data and $.887$ on
transfer. The readout is rank $3$ at layer $31$ of $40$, the final task-instruction token, selected from four depths and two positions by development balanced accuracy alone. Its four states are unspecified, the two directional poles, and an explicit neutral declaration. Transfer applies the readout unchanged to certification pairs with option-specific wording: unspecified and neutral are each correct on $960/960$ repeated prompts, while every directional error confuses the poles. Gap closure is $[P_{\mathrm{patched}}(\mathrm{donor})-P_{\mathrm{clean}}]/[P_{\mathrm{donor}}-P_{\mathrm{clean}}]$, with $P_{\mathrm{clean}}$ from the unpatched recipient run. The positive control swaps $z=-1$ and $z=+1$ in otherwise identical item, task, target-level, and presentation conditions ($480$ pairs); it required median gap closure $\ge.5$ on report and forced-choice tasks and a rank-matched random subspace below half the fitted effect. The observed median closure is $.000$ on all three tasks, as for the random control. A full-vector replacement at the same site also transferred nothing, at RMS displacement $.0201$ versus $.00497$ for the rank-$3$ edit at unit edit strength. The state is readable at this site but not used there: even full replacement transfers nothing, so
the answer depends on this information, if at all, through other positions or earlier layers. The
unspecified\,$\leftrightarrow$\,neutral swap was not run, and no layer, rank, or position was re-selected
after the control failed.

\subsection{Mitigation: Supplied Directions and Their Limits}\label{app:c6}
A non-target direction can supply an alternative to the completion; a prohibition supplies none.
The comparisons below test this distinction without treating a replacement direction as neutrality.
\subsubsection{Instructions That Supply a Direction Contain More Than Prohibition or Relabeling}

\paragraph{A scope-and-default instruction contains most informative cells; a prohibition or a domain label contains few.}
Table~\ref{tab:containment} lists the per-instruction outcomes behind Section~\ref{sec:locus}: the
scope-and-default instruction contains more of the cross-attribute influence than a negation-only instruction or a prefix
that labels each question as not a risk decision, with the other instructions in between (wordings in
Appendices~\ref{app:prompts:boundary} and~\ref{app:prompts:negation}). The domain-label prefix adds no
system-prompt clause and is applied to rating-scale and A/B questions only, so its two informative
survey cells carry no instruction. The cells cross the two black-box models of the dense audit
(\modelid{GPT-5.5} and \modelid{Kimi-K2.6}) with three personas (risk, flamboyant, frugal; wordings in
Appendix~\ref{app:prompts:boundary}) and eight tasks: $12$ aesthetic seven-point ladders pooled as one task and
five single ladders (message style, greeting warmth, playlist energy, food spice, snack portion) as separate
tasks, each scored as the share of endpoint ratings over $30$ draws; a survey of $15$ five-point preference
statements answered as one array by $60$ simulated respondents (share of extreme responses); and a synthetic A/B
task over $20$ known-zero pairs (share choosing the marked variant; three shown in
Appendix~\ref{app:prompts:knownzero}). Target retention is read on a $34$-item risk-attitude battery. A
model--task--persona cell is informative
when the bare prompt moves the non-target rate by at least $.10$ from a reference rate below $.80$, which
leaves $31$ cells per instruction. A cell counts as contained when the instruction removes at least $70\%$ of
the bare-prompt influence (residual gate $|G|\le.30$ on the logit scale, or probability-scale repair of at least
$.70$ where the reference rate is at most $.02$; Appendix~\ref{app:sr}) while keeping at least $.60$ of the
target effect; over-correction ($G<-.30$) and dilution (influence removed but target retention below
$.60$) count as failures. Containment depends on the task as well as the instruction: on the forced-choice
synthetic A/B task even the scope-and-default instruction contains only $3$ of its $5$ informative cells. On
the logit scale, six synthetic A/B cells whose bare rate reaches $1.0$ count as contained while removing
$15$--$60\%$ of the probability-scale influence (one under scope-and-default). Dropping over-corrections
and dilutions from the denominator would give $.82$, $.60$, $.50$, and $.45$ for the first four rows.
The counts are descriptive, over distinct cells. The flamboyant persona's own wording (``showy, expressive,
attention-drawing options'') covers these style tasks, and Table~\ref{tab:directions} codes its pole as
definitionally aligned, so its cells test whether an instruction's scope overrides a persona's own content. On the
cross-attribute cells of the risk and frugal personas ($18$ informative), scope-and-default and the default
alone each contain $14$, the task-specific instruction $12$, the scope rule alone $8$, the negation-only
instruction $5$, and the domain-label prefix $4$: supplying a direction contains most cross-attribute
influence, prohibition and relabeling little, and the scope rule adds containment where the persona's own
wording covers the task (flamboyant: $9$ against $0$ of $13$).

\begin{table}[!htb]\centering
\renewcommand{\arraystretch}{1.08}\small
\caption{\textbf{Instruction-type containment on informative behavioral cells.} Outcomes of the $31$
informative cells per instruction wording (two black-box models of the dense audit, eight tasks, three personas; distinct
cells) and the contained share.}
\label{tab:containment}
\begin{tabular*}{\linewidth}{@{\extracolsep{\fill}}lccccc@{}}
\toprule
Instruction wording & Contained & Leaks & Over-corrects & Dilutes & Contained share \\
\midrule
Scope $+$ ordinary default & $23$ & $5$ & $3$ & $0$ & $.74$ \\
Task-specific & $15$ & $10$ & $2$ & $4$ & $.48$ \\
Ordinary default only & $14$ & $14$ & $0$ & $3$ & $.45$ \\
Scope only & $13$ & $16$ & $2$ & $0$ & $.42$ \\
Negation-only & $5$ & $26$ & $0$ & $0$ & $.16$ \\
Domain-label prefix & $4$ & $27$ & $0$ & $0$ & $.13$ \\
\bottomrule
\end{tabular*}
\end{table}

\paragraph{Scope without a default.}
The attribute-scope instruction (Scope $-$ none in Table~\ref{tab:promptctl}, under
the neutral declaration in its option-specific wording) lowers neutral-level sensitivity in eight of twelve cells but removes every exceedance of $.15$ in only one. The attribute-scope instruction names both
tested non-target attributes. Its effect is therefore not an unrestricted relevance gate, and it does
not preserve neutrality in the tested panel.

\subsubsection{A Rule-Lookup Reference}\label{app:lookup}

\paragraph{Design.} This exploratory pilot, declared before scoring, asks how low non-target sensitivity
can go when the prompt supplies the decision rule. It is a separate task, not a matched condition of the
prompt-design evaluation, so it gives no matched estimate of adding a rule to the evaluated prompt designs. Each
prompt states both attributes in words, the target leaning and a directional non-target preference, and
then a verbal rule naming the attribute that decides the question: \emph{``Rule for this question: choose
the option that matches the user's stated [attribute]. The user's [other attribute] has no bearing on
this question; do not let it influence the choice.''} The ground truth is a lookup with no base term and no
arithmetic: the correct option carries the stated pole of the named attribute. The \emph{lookup} task
uses the structured worlds' own questions and option texts (the target scenario's for the target
question, the non-target scenario's for the non-target question). The \emph{multi-attribute} task uses
generic scenes whose two options each combine a target pole with the opposite non-target pole; the same
pair serves both questions, so the two rules disagree whenever $t$ and $z$ share a sign, and following
the target on the non-target question is leakage. Each checkpoint sees $64$ structured worlds, $16$ from
each of four attribute pairs (risk weight$\to$color, time preference$\to$sound, social
contact$\to$layout, time of day$\to$color), crossed with directional values only ($t,z\in\{\pm1\}$),
both option orders, and two answer-code maps ($4{,}096$ prompts per checkpoint); the pilot therefore
says nothing directly about the neutral level. Readouts per task and question are rule agreement (the
top choice carries the named attribute's pole), also split by that attribute's pole;
$P(\text{ground truth})$, the probability of the ground-truth option; on the non-target question, sensitivity, the
TV between $t{=}{+}1$ and $t{=}{-}1$ within the same $z$, code map, and order; and on the target
question, the target effect $P(\text{target-positive option}\mid t{=}{+}1)-P(\cdot\mid t{=}{-}1)$.
Intervals are $95\%$ world-bootstrap intervals ($B{=}2{,}000$). Results are read against a $.90$
reference level for rule execution, with no pass/fail criterion. Table~\ref{tab:lookup} reports the lookup task; the multi-attribute task is summarized after it.

\begin{table}[!htb]\centering
\renewcommand{\arraystretch}{1.24}\footnotesize
\caption{\textbf{With the deciding attribute named, the non-target choice is insensitive to the
target.} Lookup task of the explicit-rule pilot ($64$ worlds per checkpoint; directional values only).
Target question: rule agreement overall and by target pole, probability of the ground-truth option, and
target effect. Non-target question: rule agreement and sensitivity (TV between the two target values).
Point estimates; the text gives upper interval endpoints for non-target sensitivity.
Exploratory.}
\label{tab:lookup}
\setlength{\tabcolsep}{4pt}
\begin{tabular*}{\linewidth}{@{\extracolsep{\fill}}lccccccc@{}}
\toprule
& \multicolumn{5}{c}{Target question} & \multicolumn{2}{c}{Non-target question} \\
\cmidrule(lr){2-6}\cmidrule(l){7-8}
& \multicolumn{3}{c}{Rule agreement} & & & & \\
\cmidrule(lr){2-4}
Checkpoint & all & $t{=}{+}1$ & $t{=}{-}1$ & $P(\text{ground truth})$ & Target effect & Rule agreement & Sensitivity \\
\midrule
\modelid{Mistral} & $.979$ & $1.000$ & $.959$ & $.972$ & $.943$ & $1.000$ & $.001$ \\
\modelid{Qwen3} & $.916$ & $.992$ & $.840$ & $.910$ & $.820$ & $.989$ & $.003$ \\
\modelid{OLMo-2} & $.893$ & $.984$ & $.801$ & $.896$ & $.793$ & $.998$ & $.003$ \\
\modelid{Gemma-2} & $.910$ & $.996$ & $.824$ & $.909$ & $.818$ & $.991$ & $.002$ \\
\modelid{Qwen2.5} & $.971$ & $1.000$ & $.941$ & $.972$ & $.944$ & $.999$ & $.002$ \\
\bottomrule
\end{tabular*}
\end{table}

\paragraph{Reading the reference.}
Non-target sensitivity is $.001$--$.003$ with upper $95\%$ interval endpoints at most $.008$,
at target effects $.79$--$.94$. OLMo-2's target ground-truth probability is $.896$, below the $.90$
reference, and negative-pole target rule agreement is $.80$--$.96$ versus $.984$--$1.000$ at
the positive pole. These task errors remain visible in Table~\ref{tab:lookup}. In the multi-attribute task, where the two rules conflict whenever $t$ and $z$ share a sign, non-target rule agreement is $.949$--$1.000$ and non-target sensitivity $.000$--$.100$ (upper $95\%$ endpoints at most $.124$) at target effects $.77$--$.98$. Sensitivity is higher than in the lookup task on four of five checkpoints, but the named rule still decides the non-target choice. As in the lookup task, target errors remain: \modelid{OLMo-2}'s target ground-truth probability is $.886$, and negative-pole target rule agreement is $.79$--$.98$ against $1.000$ at the positive pole. Both tasks name a deciding attribute and supply a direction, so neither tests the neutral level.

\subsubsection{Dialogue Instruction Used in the Conclusion}\label{app:c6:dialogue}
Section~\ref{sec:conclusion} cites this dialogue comparison. It is not matched to the
original field-audit collection or to the containment comparison of Appendix~\ref{app:c6}: instructions, items, and API access can differ.

\paragraph{Black-box panel: over ten turns the repair lowers leakage on all eight models but keeps it below the floor throughout on three.} This check asks whether a system-prompt persona fades over a dialogue \citep{li2024instability} and
whether the scope-and-default repair survives it. Each dialogue has ten
assistant turns: measurement turns $1$, $5$, and $10$ (before any filler, after three filler
exchanges, and after seven), with seven fixed shopping-assistant questions that mention no trait, answered by the
model in between. At each measurement turn the dialogue branches (item answers never re-enter context) into six
forced-choice aesthetic non-target items, each between a default option and an attention-grabbing marked
one (phone wallpaper, ringtone, reading font, lock screen, keyboard theme, app icons), and two target
(risk) items; $n{=}10$ dialogues per condition, four conditions (bare target prompt, scoped target
prompt, bare opposite-pole prompt, no-information reference), $1{,}240$ calls per model. The scoped prompt
appends \emph{``This disposition applies only to decisions that themselves involve risk or money. For
other choices, this person has ordinary, understated preferences.''} Prespecified labels use the bootstrap CI of $\mathrm{leak}(10)/\mathrm{leak}(1)$:
\emph{amplifies} means the CI lies wholly above $1$; \emph{persists} means turn-$10$ leakage passes
the effect-floor-then-BH decision and the ratio CI contains $1$; \emph{washes out} means turn-$10$
leakage fails the floor after turn-$1$ leakage passed. The analyzer additionally labels leakage
that remains above the floor with a ratio CI wholly below $1$ as \emph{persists (attenuating)};
this is the Nemotron case below. For the scoped arm, the repair \emph{persists} only if scoped leakage stays below the
$.05$ floor at all three measurement turns with retention ${\ge}.90$. Results (Table~\ref{tab:armcmt}):
bare-prompt non-target leakage \textbf{persists on $8/8$} models---none washes out; the only attenuation
(\modelid{Nemotron-3-Super}, ratio CI $[.269,.690]$) still leaves turn-$10$ leakage at $.450$, nine times
the floor. In point estimates, the scoped prompt lowers leakage below the bare prompt's on $8/8$ models at every measurement turn, but the repair \textbf{persists on only $3/8$}. On four models scoped leakage is already above the floor at measurement turn $1$ ($.183$--$.433$, against bare $.700$--$.933$); of these, \modelid{Qwen3.5-397B} rises monotonically ($.217{\to}.450{\to}.517$), while \modelid{Kimi-K2.6} and \modelid{MiMo-V2.5-Pro} fall below the floor by turn $10$. \modelid{GPT-5.5} holds at turn $1$ ($.000$) but not at turn $5$ ($.200$). Retention stays $.90$--$1.05$ everywhere, so the shortfall is in containment, not in the target effect. The opposite-pole control never
moves toward the marked pole ($8/8$); its drift is away from it, by up to $-.267$ (\modelid{GLM-5.2},
\modelid{MiMo-V2.5-Pro}). Because measurement turn $1$ precedes any filler, the four turn-$1$ shortfalls occur before any conversation; they therefore do not require accumulated dialogue history. The three passes are one-sided: their scoped leakage moves toward the default pole that the instruction states (to $-.067$, $-.183$, and $-.100$), so they fail an absolute two-sided $.05$ rule, a distinction retained with the analyzer correction in Appendix~\ref{app:repro}.

\begin{table}[!htb]\centering
\renewcommand{\arraystretch}{1.35}
\caption{\textbf{Multi-turn persistence and repair on the eight black-box models (substitute API providers).}
Leakage = reference-adjusted signed shift toward the marked pole on the six non-target items at turns
$1/5/10$ (turn $1$ precedes any filler); ratio = $\mathrm{leak}(10)/\mathrm{leak}(1)$ with item-clustered
bootstrap CI; retention = target rate at turn $10$ (scoped condition) against bare turn $1$. Repair
\emph{persists} = scoped leakage below the $.05$ floor at all three turns and retention ${\ge}.90$;
otherwise the first failing measured turn is shown.}
\label{tab:armcmt}
\footnotesize
\begin{tabularx}{\linewidth}{@{}l l l >{\raggedright\arraybackslash}X c l@{}}
\toprule
Model & Bare $1/5/10$ & Ratio [CI] & Scoped $1/5/10$ & Ret. & Repair \\
\midrule
DeepSeek-V4-Flash & $.800/.800/.817$ & $1.02$ $[.89,1.16]$ & $.333/.317/.367$ & $1.05$ & fails (t1) \\
GPT-5.5 & $1.00/.867/.917$ & $.92$ $[.78,1.00]$ & $.000/.200/.067$ & $1.00$ & fails (t5) \\
Kimi-K2.6 & $.833/.767/.717$ & $.86$ $[.50,1.20]$ & $.433/.117/.000$ & $1.00$ & fails (t1) \\
Gemini-3-Flash-Preview & $1.00/1.00/.933$ & $.93$ $[.83,1.00]$ & $.000/.000/{-}.067$ & $1.00$ & persists \\
GLM-5.2 & $.633/.567/.550$ & $.87$ $[.45,1.37]$ & ${-}.133/{-}.150/{-}.183$ & $1.00$ & persists \\
Qwen3.5-397B & $.933/.933/.917$ & $.98$ $[.94,1.00]$ & $.217/.450/.517$ & $1.00$ & fails (t1) \\
Nemotron-3-Super & $.900/.567/.450$ & $.50$ $[.27,.69]$ & ${-}.100/{-}.017/{-}.083$ & $.90$ & persists \\
MiMo-V2.5-Pro & $.700/.615/.667$ & $.95$ $[.79,1.14]$ & $.183/.183/.017$ & $.95$ & fails (t1) \\
\bottomrule
\end{tabularx}
\end{table}

\subsection{Provenance, Corrections, and Analysis Status}\label{app:repro}
This appendix retains the provenance limitations and deviations that affect the main claims.
Experimental caches and runners are not part of this document.

\subsubsection{API Collection and Serving Limits}
The full-bank cache records $26{,}880$ valid responses per black-box model (three prompt conditions, four factor cells,
$94$ non-target and $18$ target items, $20$ draws; Table~\ref{tab:repro}); this is not the smaller headline item set's sample
count. Temperature is omitted and inherits the provider default, with a recorded $128$-token limit.
Option order uses deterministic client seeds; the APIs expose no sampling seed or checkpoint hash.
Request-side model IDs and API providers, not exact weight revisions, identify the panel.
\begin{table}[!htb]\centering
\renewcommand{\arraystretch}{1.08}
\caption{\textbf{Full-bank cache provenance.} Temperature is recorded as \nolinkurl{null} (parameter not
sent), the recorded output limit is $128$ tokens, and the system prompt is fixed across rows. Think: the
thinking setting recorded in the cache (notes below).}
\label{tab:repro}
\footnotesize
\setlength{\tabcolsep}{3pt}
\begin{tabularx}{\linewidth}{@{}l >{\raggedright\arraybackslash}p{0.25\linewidth} Y c r@{}}
\toprule
Model & Provider-facing ID & API provider & Think & Valid records \\
\midrule
DeepSeek-V4-Flash & \nolinkurl{deepseek-v4-flash} & DeepSeek API & on & $26{,}880$ \\
GPT-5.5 & \nolinkurl{gpt-5.5} & commercial gateway A & on & $26{,}880$ \\
Kimi-K2.6 & \nolinkurl{kimi-k2.6} & commercial gateway A & off & $26{,}880$ \\
GLM-5.2 & \nolinkurl{glm-5.2:cloud} & Ollama API & off & $26{,}880$ \\
Nemotron-3-Super & \nolinkurl{nemotron-3-}\allowbreak\nolinkurl{super:cloud} & Ollama API & off & $26{,}880$ \\
Qwen3.5-397B & \nolinkurl{qwen3.5:397b-cloud} & Ollama API & off & $26{,}880$ \\
MiMo-V2.5-Pro & \nolinkurl{mimo-v2.5-pro} & commercial gateway B & default & $26{,}880$ \\
Gemini-3-Flash-Preview & \nolinkurl{gemini-3-}\allowbreak\nolinkurl{flash-preview} & Ollama API & off & $26{,}880$ \\
\bottomrule
\end{tabularx}
\end{table}

\paragraph{Decoding and anonymity.}
Commercial gateways A--D are anonymized. A serving gateway does not identify weight openness.
``Off'' denotes a sent disable flag; ``on/default'' denotes no such flag, not a verified internal
reasoning mode. MiMo's caches record thinking enabled with no disable flag, while the runner's
default is disabled thinking with a larger output limit; invocation flags were not serialized, so
this discrepancy cannot be resolved at the per-record level.

\paragraph{Provider substitution.}
Original API access later became unavailable for six models. New calls use commercial gateways C/D, including for the two models whose original access remained available; among reported results, only the ten-turn dialogue comparison (Appendix~\ref{app:c6:dialogue}) uses them. The backend that serves an account can differ
even on a shared gateway. Some substitute providers also change available thinking controls.
Comparisons within each serving configuration (provider and backend) are retained, but comparisons to the original cached results
use different serving configurations and are not interpreted as model drift. Only GPT-5.5 and Kimi-K2.6 returned
response-side version strings in a version probe; the other six rely on request strings
and access windows.

\subsubsection{Endpoint Classification and Between-Model Heterogeneity}\label{app:endpoints}

\begin{table}[!htb]\centering
\renewcommand{\arraystretch}{1.08}
\caption{\textbf{Endpoint classification of the eight black-box models} (descriptive metadata; not
used for group inference). Standardized surgicality ratio, bare condition with the $R$ clause first, with $95\%$
bootstrap intervals, over the complete designed panel ($26{,}880/26{,}880$ valid records per model).}
\label{tab:endpoints}
\small
\begin{tabular*}{\linewidth}{@{\extracolsep{\fill}}l l r l@{}}
\toprule
Model & Endpoint type & Std.\ ratio & CI \\
\midrule
GLM-5.2 & open-weight via API & $.022$ & $[-.017, .069]$ \\
DeepSeek-V4-Flash & open-weight via API & $.027$ & $[.004, .056]$ \\
GPT-5.5 & proprietary closed API & $.051$ & $[.007, .116]$ \\
MiMo-V2.5-Pro & open-weight via API & $.057$ & $[.028, .087]$ \\
Qwen3.5-397B & open-weight via API & $.104$ & $[.025, .202]$ \\
Gemini-3-Flash-Preview & proprietary closed API & $.137$ & $[.038, .233]$ \\
Kimi-K2.6 & open-weight via API & $.153$ & $[.030, .285]$ \\
Nemotron-3-Super & open-weight via API & $.163$ & $[.082, .246]$ \\
\midrule
Random-effects pool ($8$) & pooled & $.068$ & $[.037, .099]$ \\
\bottomrule
\end{tabular*}
\end{table}

\paragraph{Between-model heterogeneity and prediction intervals.} DerSimonian--Laird heterogeneity
statistics for the three pooled conditions of Appendix~\ref{app:c4:residual} (each an $R\!\to\!S$ ratio under one prompt form): bare
$R$-then-$S$ clause order $\tau_{\mathrm{DL}}{=}.033$, $I^2{=}63\%$, $Q(7){=}19.1$, $p{=}.008$; bare
$S$-then-$R$ order $\tau_{\mathrm{DL}}{=}.009$, $I^2{=}15\%$, $Q(7){=}8.3$, $p{=}.31$; structured table
$\tau_{\mathrm{DL}}{=}.028$, $I^2{=}85\%$, $Q(7){=}46.3$, $p{<}.0001$. The corresponding $95\%$ prediction
intervals are $[-.021,.157]$, $[.025,.085]$, and $[-.017,.136]$: the pooled residual is positive on
average across the designed panel, but for the two heterogeneous conditions a new model family could plausibly
show a near-zero residual, so the pooled point values should not be read \mbox{as a universal constant.}

\subsubsection{Open-Weight Checkpoints and Scoring}
\begin{table}[!htb]\centering
\renewcommand{\arraystretch}{1.08}
\caption{\textbf{Open-weight checkpoint registry} (Hugging Face repository ids; display names as used in
figures and text; the five primary checkpoints first). All checkpoints are scored at release precision:
bf16 throughout.}
\label{tab:owcheckpoints}
\footnotesize
\begin{tabularx}{\linewidth}{@{}l Y Y@{}}
\toprule
Display name & Instruct checkpoint & Base checkpoint \\
\midrule
Qwen3-32B & \nolinkurl{Qwen/Qwen3-32B} & none public at this scale; other families' bases predict it ($R^2{=}.658$) \\
Qwen2.5-32B & \nolinkurl{Qwen/Qwen2.5-32B-Instruct} & \nolinkurl{Qwen/Qwen2.5-32B} \\
Mistral-Small-3.1-24B & \texttt{mistralai/\allowbreak Mistral-Small-\allowbreak 3.1-24B-\allowbreak Instruct-2503} &
\texttt{mistralai/\allowbreak Mistral-Small-\allowbreak 3.1-24B-\allowbreak Base-2503} \\
Gemma-2-27B & \nolinkurl{google/gemma-2-27b-it} & \nolinkurl{google/gemma-2-27b} \\
OLMo-2-32B & \nolinkurl{allenai/OLMo-2-0325-32B-Instruct} & \nolinkurl{allenai/OLMo-2-0325-32B} \\
\midrule
Gemma-3-27B & \nolinkurl{google/gemma-3-27b-it} & \nolinkurl{google/gemma-3-27b-pt} \\
Llama-3.1-8B & \nolinkurl{meta-llama/Llama-3.1-8B-Instruct} & \nolinkurl{meta-llama/Llama-3.1-8B} \\
GPT-OSS-20B & \nolinkurl{openai/gpt-oss-20b} & none released \\
Gemma-4-31B & \nolinkurl{google/gemma-4-31B-it} & \nolinkurl{google/gemma-4-31B} \\
Qwen3.5-9B & \nolinkurl{Qwen/Qwen3.5-9B} & \nolinkurl{Qwen/Qwen3.5-9B-Base} (base not used; the checkpoint is outside the 24--32B primary panel and distinct from
\modelid{Qwen3.5-397B} in the black-box panel) \\
\bottomrule
\end{tabularx}
\end{table}
\paragraph{Implementation of the open-weight readout.} Checkpoints are loaded with Hugging Face
\texttt{transformers} in \texttt{bfloat16}, with left padding and single-GPU placement unless memory forces
\texttt{device\_map=auto}. The semantic-control and representation analyses use eager attention for the Gemma and GPT-OSS families and SDPA elsewhere; the readout-refit analysis loads
all five of its checkpoints with eager attention (Appendix~\ref{app:s4}). The seven-point readout is a
next-token logit read at the answer position over the token ids of the digits $1$--$7$ (both bare and
space-prefixed variants), taken after the chat template with the answer prefix \texttt{\{"choice": };
base checkpoints use the raw-LM seven-point readout with no chat template. Nothing is sampled. Geometry activations
are cached at every decoder layer at six aligned positions in \texttt{float16}, and in \texttt{float32}
for every Gemma checkpoint, whose late residual stream overflows \texttt{float16}. The prompt-design
evaluation and the report--choice comparison read answer log-probabilities over the legal responses
instead (Appendices~\ref{internal-control-evidence-and-numerical-scope} and~\ref{app:report-choice}); the
prompt-design evaluation's recorded environment is one NVIDIA RTX PRO 6000 Blackwell GPU (96\,GB),
\texttt{bfloat16}, PyTorch 2.11.0 (CUDA 12.8 build), and \texttt{transformers} 5.10.2.

\subsubsection{Corrections and Deviations That Qualify the Claims}
\paragraph{Explicit-evidence aggregation.}
The bank was declared exploratory before scoring (2026-09-11), with no pass/fail rule. Its first
summary omitted answer-code map from the grouping key, overwriting one map. The 2026-09-12
correction retains both maps within each world, is invariant to record order, and uses the original
$64$ worlds without new calls. Declared-value margin fractions change by at most $.012$.
Tables~\ref{tab:explicit-evidence} and~\ref{tab:rho-means} use the corrected aggregation.

\paragraph{Neutral wording and independent items.}
Table~\ref{tab:wording} uses counts of texts above $.15$, correcting an earlier condition-level share under that label; mean TV and thresholds are unchanged. Independent-item scores were resumed from hash-checked completed records, not counted twice. Qwen3.5-9B was scored in a separate run; its addition was not in the original structured-text or independent-item declarations; it has the lowest report correctness in both sets ($.892$, $.901$) and leaves the open-frame independent-item neutral and directional ranges unchanged; in the battery-frame bridge it is the one checkpoint whose directional comparisons often exceed $.15$ ($64$ of $320$). The
neutrality level split is post hoc, and the structured-text comparison adds model coverage rather
than new texts.

\paragraph{Containment and dialogue rules.}
The containment counts use a reconciled cell rule ($|G|\le.30$, $.70$ repair where the reference rate
is at most $.02$, a $.10$ informative floor, and the prespecified $.60$ retention floor) that was set
after the two earlier planned classifications existed. The probability-scale classifier requires
repair of at least $.70$ and retention of at least $.60$; it excludes absolute persona shifts below $.10$
and reference rates at least $.80$, and calls an opposite-sign residual of magnitude at least $.05$
an over-correction. The logit-gate classifier uses $|G|\le.25$, a minimum absolute persona signal of
$.25$ logit units, and the same retention floor; it excludes saturated reference rates and saturated
persona rates with probability shifts below $.10$. The reconciled counts are therefore
\emph{descriptive}; two re-analysis files carrying the same cell statistics are collapsed to distinct cells. The dialogue
analyzer was corrected from an absolute to the prespecified signed $.05$ floor after interim
numbers had been seen; both readings are retained and differ only on the three models whose scoped leakage moves toward the default pole (Table~\ref{tab:armcmt}).

\paragraph{Representation analysis.}
The program was specified before its first run (2026-09-02), with later changes:
reference-centered erasure replaced uncentered erasure after a smoke test damaged the no-information
reference run; the side-effect readout was moved to the punctual-persona run; uninformative principal angles were dropped and the trivially met decodability criterion replaced; Gemma-2 activations were recollected in float32 after float16 overflow. The random-control
check excluding Gemma-4-31B-IT (random CEP $\le.2$, RET $\ge.8$) was added \emph{after that checkpoint's run}; its random frame scores $.85/{-}.49$ against at most $.01$ CEP on every other checkpoint, so the exclusion does not hinge on these cutoffs. Fixed rank/band, binary-transfer, and collateral-utility rules
were committed before full-panel jobs, but some model revision fields were recovered afterward.
Post-run hash manifests and deterministic reconstructions do not retroactively preregister runs.

\paragraph{Reproduction materials.}
Recomputing statistics requires fixed item banks and splits, prompt renderers, selection rules,
answer scores or sufficient statistics, and aggregation scripts. Cached-score recomputation differs from new collection, which for six of the eight black-box models cannot reuse the original serving configuration, and from activation-level experiments, which require separate caches.

\subsection{Limits of the Conclusions}\label{app:limits}
\paragraph{Scope and external validity.}
Cross-model field prevalence rests on one canonical attribute pair; the other pairs are tested on open-weight checkpoints. Neutral-declaration, report--choice, and internal results also cover open-weight checkpoints only. They score exact answer-token probabilities for every world or item, and the internal results also read hidden states; black-box APIs expose no hidden states and do not uniformly expose exact answer probabilities. Extending the neutral-declaration test to the black-box panel therefore requires a sampled-choice estimator with its own calibration, because sampling error enters the sensitivity estimate at the $.15$ threshold and the log-odds decomposition that separates a suppressed target effect from output saturation (Appendix~\ref{bd-decomposition}) cannot be applied at comparable precision. On the black-box panel, the full-bank factorial states an ordinary default for the non-target attribute (``infer ordinary neutral everyday preferences''; Appendix~\ref{app:prompts:factorized}) rather than a declaration of no leaning, which leaves a residual influence above the declared $\epsilon{=}.015$ on all eight models (Appendix~\ref{app:c4:residual}; Table~\ref{tab:criterion}). The construct specification is author-coded; stricter and adversarial recodings
keep the two-model audit's risk--control gap at or above $.87$ (Appendix~\ref{app:c1}). All materials are English and
text-only. We use no human benchmark, because selectivity is defined by what the prompt states rather
than by how people behave, and we evaluate no demographic population model or unrestricted task
distribution. The full-bank items are LLM-written and judge-normed, so the measured influence is established for such items; their generator's panel membership shows no sign of inflating its own result (Appendix~\ref{app:prompts:probes}). Pole norming does not establish human validity, which the selectivity measures do not require. Simulated
choices are not evidence that real risk-seeking people share the inferred preferences.

\paragraph{Neutrality is an invariance claim.}
A neutral declaration differs from leaving an attribute unstated or directing it toward an ordinary/default pole.
At the structured world's neutral level, an unstated base constant determines the ground-truth answer,
making it unidentifiable from the prompt. Neutrality therefore concerns target-induced movement,
not fidelity to that hidden answer. Forced choice and the indifferent-option task have different answer
spaces; the indifferent option may reduce both measured sensitivity and the target effect, and correct
reporting need not imply preserved choice. 

\paragraph{Generalization and statistical status.}
World-level holdout reuses texts; the independent items reproduce the level pattern on new texts, within
controlled generator families and without cluster intervals for single-family quantities. Their
directional bound is open-frame-specific: in the battery frame, directional exceedances stay at most $4$
of $320$ on four checkpoints but reach $64$ on \modelid{Qwen3.5-9B}. Paraphrase and tie-break
confirmation covers two checkpoints, and the option-specific wording six; report/choice remains
exploratory. Tail tolerances were fixed before held-out evaluation, not preregistered, and nominal
independent-world coverage is unestablished; post hoc checks keep the verdict at any threshold below
$.95$ and with texts as clusters, but create no new confirmatory guarantee.

\paragraph{Internal and mitigation claims.}
Dependence is established for the tested centered question-to-answer erasures in a late rank/band region
on the seven-point readout; transported to binary choice, the own-subspace control passes on one of five
checkpoints (Appendix~\ref{app:s4}), so this is not a universal causal decomposition. Locally selective
constructed subspaces need not be persona-induced or preserve general utility. Readout refits include no
neutral declaration, and the direct neutral-state intervention failed its positive control
(Appendix~\ref{app:report-choice:internal}). Directions, defaults, and explicit rules can limit
cross-attribute influence without preserving an absence of preference. These results neither prove
impossibility nor identify a complete mechanism for the neutrality gap.

\end{document}